\newtheorem*{definition*}{Definition}
\DeclareMathOperator*{\argmax}{arg\,max}
\DeclareMathOperator*{\argmin}{arg\,min}
\definecolor{expert}{HTML}{008000}
\definecolor{error}{HTML}{f96565}
\newcommand{\tikzAngleOfLine}{\tikz@AngleOfLine}
\def\tikz@AngleOfLine(#1)(#2)#3{%
\pgfmathanglebetweenpoints{%
\pgfpointanchor{#1}{center}}{%
\pgfpointanchor{#2}{center}}
\pgfmathsetmacro{#3}{\pgfmathresult}%
}
\declaretheoremstyle[
    headfont=\normalfont\bfseries, 
    bodyfont = \normalfont\itshape]{mystyle}
\newcommand*{\BraceAmplitude}{0.4em}%
\newcommand*{\VerticalOffset}{0.5ex}%
\newcommand*{\HorizontalOffset}{0.0em}%
\newcommand*{\blocktextwid}{3.0cm}%
\NewDocumentCommand{\InsertLeftBrace}{%
	O{} 
	O{\HorizontalOffset,\VerticalOffset} 
	O{\blocktextwid} 
	m   
	m   
	m   
}{%
	\begin{tikzpicture}[overlay,remember picture]
	\coordinate (Brace Top)    at ($(#4.north) + (#2)$);
	\coordinate (Brace Bottom) at ($(#5.south) + (#2)$);
	\draw [decoration={brace, amplitude=\BraceAmplitude}, decorate, thick, draw=black, #1]
	(Brace Bottom) -- (Brace Top) 
	node [pos=0.5, anchor=east, align=left, text width=#3, color=black, xshift=\BraceAmplitude] {#6};
	\end{tikzpicture}%
}%
\NewDocumentCommand{\InsertRightBrace}{%
	O{} 
	O{\HorizontalOffset,\VerticalOffset} 
	O{\blocktextwid} 
	m   
	m   
	m   
}{%
	\begin{tikzpicture}[overlay,remember picture]
	\coordinate (Brace Top)    at ($(#4.north) + (#2)$);
	\coordinate (Brace Bottom) at ($(#5.south) + (#2)$);
	\draw [decoration={brace, amplitude=\BraceAmplitude}, decorate, thick, draw=black, #1]
	(Brace Top) -- (Brace Bottom) 
	node [pos=0.5, anchor=west, align=left, text width=#3, color=black, xshift=\BraceAmplitude] {#6};
	\end{tikzpicture}%
}%
\NewDocumentCommand{\InsertTopBrace}{%
	O{} 
	O{\HorizontalOffset,\VerticalOffset} 
	O{\blocktextwid} 
	m   
	m   
	m   
}{%
	\begin{tikzpicture}[overlay,remember picture]
	\coordinate (Brace Top)    at ($(#4.west) + (#2)$);
	\coordinate (Brace Bottom) at ($(#5.east) + (#2)$);
	\draw [decoration={brace, amplitude=\BraceAmplitude}, decorate, thick, draw=black, #1]
	(Brace Top) -- (Brace Bottom) 
	node [pos=0.5, anchor=south, align=left, text width=#3, color=black, xshift=\BraceAmplitude] {#6};
	\end{tikzpicture}%
}%
\definecolor{cof}{RGB}{219,144,71}
\definecolor{pur}{RGB}{186,146,162}
\definecolor{greeo}{RGB}{91,173,69}
\definecolor{greet}{RGB}{52,111,72}
\theoremstyle{plain}
\newtheorem{theorem}{Theorem}
\newtheorem{Example}{Example}
\newtheorem{lemma}{Lemma}
\newtheorem{remark}{Remark}
\newtheorem{corollary}{Corollary}
\newtheorem{definition}{Definition}
\newtheorem{assumption}{Assumption}
\def \bP {\mathbb{P}}
\def \bE {\mathbb{E}}
\def \bR {\mathbb{R}}
\def\1{\mathbbm{1}}
\newcommand{\stepa}[1]{\overset{\rm (a)}{#1}}
\newcommand{\stepb}[1]{\overset{\rm (b)}{#1}}
\newcommand{\stepc}[1]{\overset{\rm (c)}{#1}}
\newcommand{\stepd}[1]{\overset{\rm (d)}{#1}}
\newcommand{\stepe}[1]{\overset{\rm (e)}{#1}}
\newcommand{\stepf}[1]{\overset{\rm (f)}{#1}}
\newcommand{\prob}[1]{\mathbb{P}\left[#1\right]}
\newcommand{\regoff}[1]{\textsf{Reg}_{\mathcal{F}}^{\text{off}} (#1)}
\newcommand{\regon}[1]{\textsf{Reg}_{\mathcal{F}}^{\text{on}} (#1)}
\definecolor{myblue}{rgb}{.8, .8, 1}
\definecolor{mathblue}{rgb}{0.2472, 0.24, 0.6} 
\definecolor{mathred}{rgb}{0.6, 0.24, 0.442893}
\definecolor{mathyellow}{rgb}{0.6, 0.547014, 0.24}
\newcommand{\calA}{{\mathcal{A}}}
\newcommand{\calF}{{\mathcal{F}}}
\newcommand{\calH}{{\mathcal{H}}}
\newcommand{\calN}{{\mathcal{N}}}
\newcommand{\calO}{{\mathcal{O}}}
\newcommand{\calX}{{\mathcal{X}}}
\newcommand{\calY}{{\mathcal{Y}}}
\newcommand{\thetaLS}{{\widehat{\theta}^{\text{\rm LS}}}}
\newcommand{\jiao}[1]{\langle{#1}\rangle}
\newcommand{\gaht}{\textsc{GoodActionHypTest}\;}
\newcommand{\iaht}{\textsc{InitialActionHypTest}\;}
\newcommand{\true}{\textsf{True}\;}
\newcommand{\false}{\textsf{False}\;}
\crefname{lemma}{Lemma}{Lemmas}
\Crefname{lemma}{Lemma}{Lemmas}
\crefname{thm}{Theorem}{Theorems}
\Crefname{thm}{Theorem}{Theorems}
\Crefname{assumption}{Assumption}{Assumptions}
\Crefname{inftheorem}{Informal Theorem}{Informal Theorems}
\begin{document}

\title{Statistical Complexity and Optimal Algorithms \\
for Non-linear Ridge Bandits}

\author{Nived Rajaraman, Yanjun Han, Jiantao Jiao, Kannan Ramchandran\thanks{N.~Rajaraman, J.~Jiao, and K.~Ramchandran are with the Department of Electrical Engineering and Computer Sciences, University of California, Berkeley, email: \url{nived.rajaraman@berkeley.edu}, \url{{jiantao, kannanr}@eecs.berkeley.edu}. Yanjun Han is with the Courant Institute of Mathematical Sciences and the Center for Data Science, New York University, email: \url{yanjunhan@nyu.edu}.}}

\maketitle

\begin{abstract}
We consider the sequential decision-making problem where the mean outcome is a non-linear function of the chosen action. Compared with the linear model, two curious phenomena arise in non-linear models: first, in addition to the ``learning phase'' with a standard parametric rate for estimation or regret, there is an ``burn-in period'' with a fixed cost determined by the non-linear function; second, achieving the smallest burn-in cost requires new exploration algorithms. For a special family of non-linear functions named ridge functions in the literature, we derive upper and lower bounds on the optimal burn-in cost, and in addition, on the entire learning trajectory during the burn-in period via differential equations. In particular, a two-stage algorithm that first finds a good initial action and then treats the problem as locally linear is statistically optimal. In contrast, several classical algorithms, such as UCB and algorithms relying on regression oracles, are provably suboptimal.
\end{abstract}

\tableofcontents

\section{Introduction}

A vast majority of statistical modeling studies data analysis in a setting where the underlying data-generating process is assumed to be stationary. In contrast, sequential data analysis assumes an iterative model of interaction, where the predictions of the learner can influence the data-generating distribution. An example of this observation model is clinical trials, which require designing causal experiments to answer questions about treatment efficacy under the presence of spurious and unobserved counterfactuals \cite{bartroff2012sequential,villar2015multi}. Sequential data analysis presents novel challenges in comparison with data analysis with i.i.d. observations. One, in particular, is the ``credit assignment problem'', where value must be assigned to different actions when the effect of only a chosen action was observed \cite{minsky1961steps,sutton1984temporal}. This is closely related to the problem of designing good ``exploration'' strategies and the necessity to choose diverse actions in the learning process \cite{auer2002using,sutton2018reinforcement}.

Another observation model involving sequential data is manipulation with object interaction, which represents one of the largest open problems in robotics \cite{billard2019trends}. Intelligently interacting with previously unseen objects in open-world environments requires generalizable perception, closed-loop vision-based control, and dexterous manipulation \cite{kober2013reinforcement,kalashnikov2018qt,zhu2019dexterous}. This requires designing good sequential decision rules that continuously collect informative data, and can deal with sparse and non-linear reward functions and continuous action spaces. 



In this paper, we study a sequential estimation problem as follows. At each time $t = 1 ,2,\cdots,T$, the learner chooses an action $a_t$ in a generic action set $\calA$, based on the observed history $\calH_{t-1} = \{(a_s, r_s)\}_{s\le t-1}$. Upon choosing $a_t$, the learner obtains a noisy observation of $f_{\theta^*}(a_t)$, denoted as $r_t = f_{\theta^\star}(a_t) + z_t$, where $\{f_\theta: \calA \to \mathbb{R}\}_{\theta\in \Theta}$ is a given function class, and the noise $z_t$ follows a standard normal distribution. Here $\theta^\star \in \Theta \subseteq \mathbb{R}^d$ is an unknown parameter fixed across time, and the learner's target is to estimate the parameter $\theta^\star$ in the high dimensional regime where $d$ could be comparable to $T$. Here the learner needs to both design the sequential experiment (i.e. actions $a_1,\cdots,a_T$) adapted to the history $\{\calH_{t-1}\}_{t=1}^T$, and output a final estimator $\widehat{\theta}_T = \widehat{\theta}_T(\calH_T)$ which is close to $\theta^\star$. 

In the bandit literature, the observation $r_t$ is often interpreted as the \emph{reward} obtained for picking the action $a_t$. In addition to estimating parameter $\theta^\star$, another common target of the learner is to maximize the expected cumulative reward $\bE[ \sum_{t=1}^T r_t]$, or equivalently, to minimize the \emph{regret} defined as
\begin{align*}
    \mathfrak{R}_T(\Theta,\calA) = \bE_{\theta^\star}\left[T\cdot \max_{a^\star\in \calA} f_{\theta^\star}(a) - \sum_{t=1}^T f_{\theta^\star}(a_t)\right]. 
\end{align*}
Compared with the estimation problem, the regret minimization problem essentially requires that every action $a_t$ is close to the maximizer of $f_{\theta^\star}(\cdot)$. 

Throughout this paper, we are interested in both the estimation and regret minimization problems for the class of \emph{ridge functions} \cite{logan1975optimal}. More specifically, we assume that: 
\begin{enumerate}
    \item The parameter set $\Theta = \mathbb{S}^{d-1} = \{\theta \in \mathbb{R}^{d}: \|\theta\|_2=1 \}$ is the unit sphere in $\mathbb{R}^d$; 
    \item The action set $\calA = \mathbb{B}^{d} = \{ a\in \mathbb{R}^d: \|a\|_2 \le 1\}$ is the unit ball in $\mathbb{R}^d$; 
    \item The mean reward is given by $f_{\theta^\star}(a) = f(\jiao{\theta^\star, a})$, where $f: [-1,1]\to [-1,1]$ is a known link function. 
\end{enumerate}
The form of ridge functions also corresponds to the single index model \cite{hardle2004nonparametric} in statistics. We will be interested in characterizing the following two complexity measures. 
\begin{definition}[Sample Complexity for Estimation]\label{defn:sample_complexity}
For a given link function $f$, dimensionality $d$, and $\varepsilon \in (0,1/2]$, the sample complexity of estimating $\theta^\star$ within accuracy $\varepsilon$ is defined as
\begin{align}
T^\star(f,d,\varepsilon) = \min\left\{ T: \inf_{\widehat{\theta}_T\in \mathbb{S}^{d-1}}\sup_{\theta^\star\in \mathbb{S}^{d-1}} \bE_{\theta^\star}[1-\jiao{\widehat{\theta}_T, \theta^\star}] \le \varepsilon \right\}, 
\end{align}
where the infimum is taken over all possible actions $a^T$ adapted to $\{\calH_{t-1}\}_{t=1}^T$ and all possible estimators $\widehat{\theta}_T = \widehat{\theta}_T(\calH_T)$.
\end{definition}
\begin{definition}[Minimax Regret]\label{defn:minimax_regret}
For a given link function $f$, dimensionality $d$, and time horizon $T$, the minimax regret is defined as
\begin{align}
\mathfrak{R}_T^\star(f,d) = \inf_{a^T}\sup_{\theta^\star \in \mathbb{S}^{d-1}} \mathbb{E}_{\theta^\star} \left[ T \cdot \max_{a^\star \in \mathcal{A}} f ( \jiao{\theta^\star, a^\star}) - \sum_{t=1}^T  f ( \jiao{\theta^\star, a_t}) \right],
\end{align}
where the infimum is taken over all possible actions $a^T$ adapted to $\{\calH_{t-1}\}_{t=1}^T$. 
\end{definition}

\begin{figure}[t]
\begin{center}
	\begin{tikzpicture}[thick, scale=2]
		\draw [<->] (0,2) -- (0,0) -- (4,0); 
		\node [above] at (0,2) {minimax regret};
		\node [below] at (3.6,0) {time horizon $T$}; 
		
		\draw [dashed] (0,0) -- (2,2); 
		\node [below] at (2,1.9) {$T$};
		
		\draw [dashed, domain=0:4, smooth, variable=\x, blue] plot ({\x}, {sqrt(\x/2.5)});
		\node [above, blue] at (3.8,1.2) {$d\sqrt{T}$};
		
        \draw [red] (0,0) -- (1,1); 
		\draw [red, dashed] (1,0) -- (1,1) -- (0,1); 
		\node [below, red] at (1,0) {$d^3$}; 
		\node [left, red] at (0,1) {$d^3$};
		
		\draw[domain=2.5:4, smooth, variable=\x, red] plot ({\x}, {sqrt(\x/2.5)});
		\draw [red, dashed] (2.5,0) -- (2.5,1); 
		\node [below, red] at (2.5,0) {$d^4$};
		
		\draw [red] (1,1) -- (2.5,1);
		
	\end{tikzpicture}
\caption{When $f(x) = x^3$ is the cubic function, the minimax regret scales as $ \min\{T, d^3 + d\sqrt{T}\}$ (ignoring constant and polylogarithmic factors).}
\label{fig:phase_transition}
\end{center}
\end{figure}
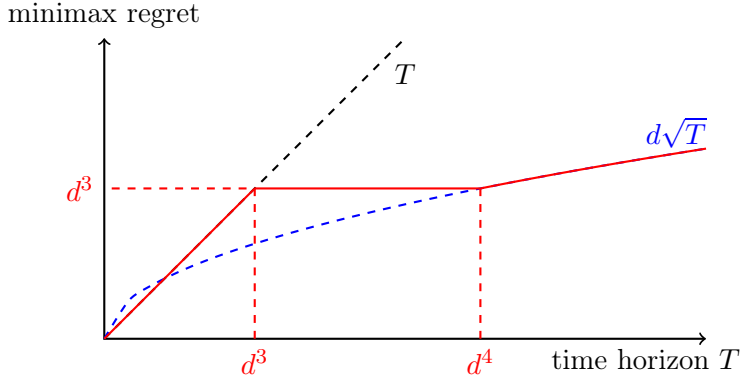

In this paper we are mainly interested in the scenario where the link function $f$ is non-linear. If $f$ is linear, i.e. $f(x)=\mathrm{id}(x)=x$, this is called the linear bandit, and it is known \cite{lattimore2020bandit,wagenmaker2022reward} that
\begin{align*}
T^\star(\text{id},d,\varepsilon)\asymp \frac{d^2}{\varepsilon}, \quad \text{ and } \quad \mathfrak{R}_T^\star(\text{id},d) \asymp \min\{ d\sqrt{T}, T\}. 
\end{align*}
Here and throughout, the symbol $\asymp$ ignores all constant and polylogarithmic factors in $(T,d,1/\varepsilon)$. However, even for many specific choices of non-linear functions $f$, much less is known about the above quantities. One of our main contributions in this paper is to identify a curious \emph{phase transition} in the learning process for non-linear link functions. Consider a toy example where $f(x)=\mathrm{cubic}(x)=x^3$. We will show that
\begin{align*}
T^\star(\mathrm{cubic},d,\varepsilon) \asymp d^3 + \frac{d^2}{\varepsilon}, \quad \text{ and } \quad \mathfrak{R}_T^\star(\mathrm{cubic},d) \asymp \min\left\{ d^3 + d\sqrt{T}, T \right\}. 
\end{align*}
A picture of the minimax regret as a function of $T$ is displayed in Figure \ref{fig:phase_transition}. We see that the minimax regret exhibits two elbows at $T\asymp d^3$ and $T\asymp d^4$: it grows linearly in $T$ until $T\asymp d^3$, stabilizes for a long time during $d^3\lesssim T\lesssim d^4$, and grows sublinearly in $T$ in the end. Similarly, the sample complexity of achieving accuracy $\varepsilon=1/2$ is already $\asymp d^3$, but improving the accuracy from $1/2$ to $\varepsilon$ only requires $\asymp d^2/\varepsilon$ additional observations. 

This curious scaling is better motivated by understanding the behavior of an optimal learner. At the beginning of the learning process, the learner has very little information about $\theta^\star$ and tries to find actions having at least a constant inner product with $\theta^\star$. Finding such actions are necessary for the learner to eventually be able to get sublinear regret. As we will discuss later, loosely speaking, finding a single such action is also \textit{sufficient} to get a sublinear regret. In other words, there is an additional \emph{burn-in period} in the learning process: 
\begin{enumerate}
    \item In the burn-in period, the learner aims to find a good \emph{initial action} $a_0$ such that $\jiao{a_0, \theta^\star}\ge \text{const}$ (say 1/2); 
    \item After the burn-in period, the learner views the problem as a linear bandit and starts learning based on the good initial action $a_0$. 
\end{enumerate}
As will be apparent later, the learning phase is relatively easy and could be solved in a similar manner to linear bandits. However, both the complexity analysis and the algorithm design in the burn-in period could be challenging and are the main focus of this paper. This burn-in period is not unique to $f$ being cubic and occurs for many choices of the link function.

Understanding the above burn-in period is important for non-linear bandits due to two reasons. First, the burn-in period results in a fixed \emph{burn-in cost} which is independent of $T$ or $\varepsilon$. This burn-in cost could be the dominating factor of our sequential problem in the high-dimensional setting - in our toy example, the burn-in cost $\Theta(d^3)$ dominates the learning cost $\Theta(d\sqrt{T})$ as long as $T=O(d^4)$, which is a reasonable range of acceptable sample sizes. Second, the long burn-in period requires new ideas of \emph{exploration} or \emph{experimental design}, which is a central problem in the current era of reinforcement learning. As a result, understanding the burn-in period provides algorithmic insights on where to explore when the learner has not gathered enough information. Similar burn-in costs were also observed in \cite{garivier2019explore,huang2021optimal}. 

The main contributions of this paper are as follows: 
\begin{itemize}
    \item We identify the existence of the burn-in period for general non-linear ridge bandit problems, and show that the two-stage algorithm which first finds a good initial action and then treats the problem as linear is near optimal for both parameter estimation and regret minimization. 
    \item We prove lower bounds for both the burn-in cost and the learning trajectory during the burn-in period, via a novel application of information-theoretic tools to establishing minimax lower bounds for sequential problems, which could be of independent interest. 
    \item We provide a new algorithm that achieves a small burn-in cost, and establish an upper bound on the learning trajectory during the burn-in period. 
    \item We show that other ideas of exploration, including the UCB and oracle-based algorithms, are provably suboptimal for non-linear ridge bandits. This is also the first failure example of UCB in a general and noisy learning environment. 
\end{itemize}

Notations: For $d\in \mathbb{N}$, let $\mathbb{B}^d$ and $\mathbb{S}^{d-1}$ denote the unit ball and sphere in $d$ dimensions, respectively. For $n\in \mathbb{N}$, let $[n]\triangleq \{1,2,\cdots,n\}$. For probability measures $P$ and $Q$ over the same probability space, let $D_{\text{KL}}(P\|Q)=\int \text{d}P\log(\text{d}P/\text{d}Q)$ and $\chi^2(P\|Q) = \int (\text{d}P)^2/\text{d}Q - 1$ be the Kullback-Leibler (KL) divergence and $\chi^2$ divergence between $P$ and $Q$, respectively. For a random vector $(X,Y)\sim P_{XY}$, let $I(X;Y)=D_{\text{KL}}(P_{XY}\|P_X\otimes P_Y)$ be the mutual information between $X$ and $Y$, where $P_X, P_Y$ are the respective marginals. For non-negative sequences $\{a_n\}$ and $\{b_n\}$, the following asymptotic notations will be used: let $a_n=O(b_n)$ denote $\limsup_{n\to\infty} a_n/b_n<\infty$, and $a_n=\widetilde{O}(b_n)$ (or $a_n\lesssim b_n$) denote $a_n = O(b_n\log^c n)$ for some $c>0$. Moreover, $a_n = \Omega(b_n)$ (resp. $a_n=\widetilde{\Omega}(b_n)$ or $a_n\gtrsim b_n$) means $b_n=O(a_n)$ (resp. $b_n\lesssim a_n$), and $a_n=\Theta(b_n)$ (resp. $a_n=\widetilde{\Theta}(b_n)$ or $a_n\asymp b_n$) means both $a_n=O(b_n)$ and $b_n=O(a_n)$ (resp. $a_n\lesssim b_n\lesssim a_n$).

\subsection{Bounds on the burn-in cost}
This section provides upper and lower bounds on the burn-in cost, which we formally define below. 

\begin{definition}[Burn-in Cost]\label{defn:burn_in_cost}
For a given link function $f$ and dimensionality $d$, the \emph{burn-in cost} is defined as
\begin{align*}
    T_{\text{\rm burn-in}}^\star(f,d) = T^\star(f,d,1/2), 
\end{align*}
where $T^\star$ is the sample complexity defined in Definition \ref{defn:sample_complexity}. 
\end{definition}

In other words, the burn-in cost is simply defined as the minimum amount of observations to achieve a constant correlation $\jiao{\widehat{\theta},\theta^\star}=\Omega(1)$. The constant $1/2$ in the definition is not essential and could be replaced by any constant bounded away from both $0$ and $1$. 





Next we specify our assumptions on the link function $f$. 

\begin{assumption}[Regularity conditions for the burn-in period]\label{assump:main}
We assume that the link function $f$ satisfies the following conditions: 
\begin{enumerate}
    \item Normalized scale: $f(0) = 0$, $f(1) = 1$, and $|f|\le 1$;
    \item Monotonicity: either (i) $f$ is increasing on $[-1,1]$; or (ii) $f$ is even and increasing on $[0,1]$. 
\end{enumerate}
\end{assumption}

We remark that Assumption \ref{assump:main} is very mild. The normalized scale is only for the scaling purpose. The monotonicity assumption ensures that $a=\theta^\star$ is a maximizer of $f(\jiao{a,\theta^\star})$, so that the task of regret minimization is aligned with the task of parameter estimation. Moreover, the additional benefit of the monotonicity condition during the burn-in period is that, by querying the noisy values of $f(\jiao{a,\theta^\star})$, the learner could decide whether or not the inner product $\jiao{a,\theta^\star}$ is improving. This turns out to be a crucial step in the algorithmic design. 


Under Assumption \ref{assump:main}, the next theorem provides an upper bound on the burn-in cost. 

\begin{theorem}[Weaker version of \Cref{thm:ub_burnincost_formal}]\label{inftheorem:integral_ub}
In a ridge bandit problem with the link function $f$ satisfying \Cref{assump:main}, for any $\kappa\in (0,1/4)$, the following upper bound holds for the burn-in cost: 
\begin{align*}
T_{\text{\rm burn-in}}^\star(f,d) \lesssim d^2\cdot \int_{1/\sqrt{d}}^{1/2} \frac{\mathrm{d}(x^2)}{\max_{1/\sqrt{d}\le y\le x}\min_{z\in [(1-\kappa)y,(1+\kappa)y]}[f'(z)]^2}, 
\end{align*}
with a hidden factor depending on $\kappa$. This upper bound is achieved by Algorithm \ref{alg:burn-in} in Section \ref{subsec:upper_bound_burnin}.
\end{theorem}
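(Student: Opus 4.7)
The plan is to construct a staged algorithm which, starting from a random initialization, iteratively refines a current action $a^{(k)}$ so as to improve its correlation $y_k = \langle a^{(k)}, \theta^\star\rangle$ with the unknown direction, until $y_K \ge 1/2$. The total sample count can then be bounded phase by phase and matched against the integral in the statement.

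\emph{Warm start.} Drawing $a^{(0)}$ uniformly from $\mathbb{S}^{d-1}$ yields $|\langle a^{(0)}, \theta^\star\rangle| \gtrsim 1/\sqrt{d}$ with constant probability. Because $f$ is monotone on $[-1,1]$ (case (i) of \Cref{assump:main}) or even with $f$ increasing on $[0,1]$ (case (ii)), a constant number of additional queries resolves the sign ambiguity via a simple hypothesis test on $f(\langle a^{(0)},\theta^\star\rangle)$ versus $f(-\langle a^{(0)},\theta^\star\rangle)$, producing an $a^{(0)}$ with correlation at least $y_0 = \sqrt{c/d}$ for some absolute constant $c > 0$. This sets the lower endpoint of the integral.

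\emph{Inductive step.} Suppose we hold $a^{(k)}$ with $\langle a^{(k)},\theta^\star\rangle \ge y_k$; the goal is $a^{(k+1)}$ with correlation $\ge y_{k+1}$ along a schedule $y_0 < y_1 < \cdots < y_K = 1/2$ to be optimized. For a perturbation $u$ and small step $\epsilon > 0$, Taylor expansion gives
\begin{equation*}
f(\langle a^{(k)} + \epsilon u, \theta^\star\rangle) = f(y_k) + \epsilon f'(\xi)\,\langle u, \theta^\star\rangle
\end{equation*}
for some $\xi$ in the interval traversed, so the observations are (after centering) noisy linear measurements of $\theta^\star$ with effective signal-to-noise ratio $\epsilon \cdot \min_z f'(z)$ over the range of correlations visited. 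A G-optimal design over directions $\{u_i\}$ fed into a linear-regression style estimator produces $\widehat{v}$ whose renormalization $a^{(k+1)} = \widehat{v}/\|\widehat{v}\|$ achieves $\langle a^{(k+1)},\theta^\star\rangle \ge y_{k+1}$ once $\Theta(d^2/\min_{z\in[y_k/2,\,y_k]}[f'(z)]^2)$ queries are collected, by the classical $d^2$-rate for $d$-dimensional linear bandits rescaled by the effective noise level. A doubling schedule $y_{k+1}=2y_k$ makes the increment $y_{k+1}^2 - y_k^2 \asymp y_{k+1}^2$, so per unit of $\mathrm{d}(x^2)$ the cost is $d^2/\min_{z\in[x/2,x]}[f'(z)]^2$, exactly the naive version of the integral without the $\max$.

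\emph{Scheduling and the $\max_{y \le x}$ flexibility.} To obtain the tighter bound with $\max_{\sqrt{c/d}\le y\le x}$, the algorithm is allowed to repurpose a phase anchored at a smaller correlation $y \le x$ — where $f'$ may be much larger — to generate a longer jump that covers the entire $x^2$ worth of energy at once. Concretely, if at level $y$ we invest $d^2/\min_{z\in[y/2,y]}[f'(z)]^2$ samples and reach the much higher correlation $x$ in a single linear-bandit step (tightening the step size $\epsilon$ accordingly), then the cost per unit of $\mathrm{d}(x^2)$ is governed by that favorable $y$, not by $x$ itself. Optimizing this anchor choice phase-by-phase gives the stated maximum inside the integrand, and summing the geometrically many phases converges to the Riemann integral.

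\emph{Main obstacles.} (i) Making the linearization rigorous requires choosing $\epsilon$ small enough that the $f''$-induced bias is dominated by the measurement noise, yet large enough to yield signal; this is precisely why the bound uses $\min_{z\in[y/2,y]}[f'(z)]^2$ rather than $f'(y)^2$, since $f'$ must be controlled uniformly along the interval of correlations that the perturbations actually visit. (ii) Verifying that the discrete telescoping sum over phases approximates the integral requires a logarithmic number of stages and a union bound over their failure events, introducing only polylog factors absorbed by $\lesssim$. (iii) Implementing the $\max$-over-$y$ flexibility is algorithmically delicate: the learner must be allowed to abandon the current correlation level and restart perturbations from a carefully chosen anchor at which $f'$ is favorable, and must verify online (using the monotonicity of $f$) whether such a jump succeeded before committing — this verification step is where the monotonicity assumption plays its essential role in the algorithm design.
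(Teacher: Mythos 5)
Your proposal takes a genuinely different route from the paper, but it contains a quantitative gap that makes the claimed bound unachievable: the per-phase cost of $\Theta\bigl(d^2/\min_{z\in[y_k/2,y_k]}[f'(z)]^2\bigr)$ is a factor of $d$ too expensive. To see this concretely, take $f(x)=x^3$ and $y_0 \asymp 1/\sqrt{d}$; then $f'(y_0)\asymp 1/d$ and your very first phase already costs $\Theta(d^2/(1/d)^2)=\Theta(d^4)$, whereas the theorem asserts a total burn-in cost of $\Theta(d^3)$. Plugging into the integral: for $f(x)=x^3$ the right-hand side is $d^2\int_{\sqrt{c/d}}^{1/2}\frac{\mathrm{d}(x^2)}{x^4}\asymp d^3$, which your scheme cannot reach. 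The source of the gap is that you pose each phase as a full $d$-dimensional linear regression and then renormalize the estimate: estimating the component of $\theta^\star$ orthogonal to $a^{(k)}$ to constant relative accuracy, at effective per-sample noise $\sigma\asymp 1/f'(y_k)$, costs $\asymp d^2\sigma^2=d^2/f'(y_k)^2$, and this cannot be driven down by the doubling schedule because the $y_0$-phase (where $f'$ is smallest) dominates the sum.

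The paper avoids this by never estimating $\theta^\star$ during the burn-in period. Instead it accumulates $m=\Theta(d)$ \emph{orthonormal} directions $v_1,\dots,v_m$, each with $\langle\theta^\star,v_i\rangle\approx 2/\sqrt{d}$, and returns their normalized sum; since a uniformly random $v_i\in\mathbb{S}^{d-1}\cap\mathrm{span}(v_1,\dots,v_{i-1})^\perp$ already satisfies $\langle\theta^\star,v_i\rangle\gtrsim 1/\sqrt{d}$ with constant probability (\Cref{lemma:nontrivialcorr}), the only task per epoch is a one-dimensional \emph{verification} (accept/reject the random $v_i$), not a $d$-dimensional estimation. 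The signal in that hypothesis test is the change in $f$ when the argument moves by $\Theta(1/\sqrt{d})$, i.e. roughly $f'/\sqrt{d}$, so one epoch costs $\widetilde{O}(d/f'^2)$ rather than $\widetilde{O}(d^2/f'^2)$; summing over $\Theta(d)$ epochs with each contributing $\Theta(1/d)$ to $x^2$ then recovers the integral exactly. The $\max_{y\le x}$ freedom, which you describe only informally as ``repurposing'' an anchor, is realized algorithmically in \gaht by evaluating at $(\lambda v_{\mathrm{pre}}\pm v)/\sqrt{2}$ with $\lambda\in[0,1]$ tunable, so that $f$ can be probed at any argument at or below the current running inner product; the monotonicity of $f$ is used to guarantee that the projection-based test is both sound and complete.

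Two smaller inaccuracies: first, the warm-start claim that ``a constant number of additional queries resolves the sign ambiguity'' is incorrect — distinguishing $f(1/\sqrt{d})$ from $f(-1/\sqrt{d})$ against unit noise requires $\widetilde{\Theta}(1/f(1/\sqrt{d})^2)$ queries (which for $f(x)=x^3$ is already $\widetilde{\Theta}(d^3)$, consistent with the integral's lower endpoint but not ``constant''); and second, the bookkeeping sentence translating per-phase cost $C_k$ into cost per unit $\mathrm{d}(x^2)$ silently drops a factor of $y_{k+1}^2$, which is precisely the factor that masks the $d$-gap above.
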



We remark that the hidden constant does not depend on $f$, so \Cref{inftheorem:integral_ub} establishes an upper bound on the burn-in cost which is \emph{pointwise} in $f$. Also, this upper bound depends on $f$ through some integral involving the derivative of $f$, suggesting that the behavior of $f$ at all points is important to determine the burn-in cost. We note that although Theorem \ref{inftheorem:integral_ub} is stated in terms of the derivative $f'$, in general we do not need to assume that $f$ is differentiable, and our general result (cf. Theorem \ref{thm:ub_burnincost_formal}) is stated in terms of finite differences of $f$. 

The integrand in Theorem \ref{inftheorem:integral_ub} looks complicated, but can be interpreted as follows: informally, suppose the link function $f$ is sufficiently smooth such that
\begin{align*}
    \min_{z\in [(1-\kappa)y, (1+\kappa)y]} [f'(z)]^2 \approx f'(y)^2 \approx d\cdot \left[ f\left(y\right) - f\left(y - \frac{1}{\sqrt{d}}\right) \right]^2, 
\end{align*}
then \Cref{inftheorem:integral_ub} says that,
\begin{align}\label{eq:burnin_informal}
T_{\text{\rm burn-in}}^\star(f,d) \lesssim d\cdot \int_{1/\sqrt{d}}^{1/2} \frac{\mathrm{d}(x^2)}{\max_{1/\sqrt{d}\le y\le x} [f(y) - f(y-1/\sqrt{d})]^2}. 
\end{align}
In the integral \eqref{eq:burnin_informal}, the variable $x$ captures the progress of the learner in terms of the inner product $\jiao{a_t, \theta^\star}$, and therefore the upper and lower limits of the integral means that the inner product grows from $\Theta(1/\sqrt{d})$ to $\Theta(1)$. In addition, the signal-to-noise ratio (SNR) of the learner is at least 
\begin{align*}
    \frac{1}{d}\max_{1/\sqrt{d}\le y\le x} [f(y) - f(y-1/\sqrt{d})]^2. 
\end{align*}
Here $f(y) - f(y-1/\sqrt{d})$ is the increment of the function value if the inner product $\jiao{a_t, \theta^\star}$ changes by $1/\sqrt{d}$, and taking the maximum over $y\le x$ corresponds to evaluating $f$ at points offering the highest SNR below the current inner product $\jiao{a_t,\theta^\star} = x$. The total burn-in cost is naturally upper bounded by the integral \eqref{eq:burnin_informal} of real-time costs using the best available SNR. This intuition will become clearer when we characterize the learning trajectory during the burn-in period in the next section. 

The next theorem shows a lower bound on the burn-in cost in terms of a different integral. 

\begin{theorem}[Weaker version of Theorem \ref{thm:lower_bound}]\label{inftheorem:integral_lb}
Suppose $f$ is even or odd. In a ridge bandit problem with the link function $f$ satisfying \Cref{assump:main}, the following lower bound holds for the burn-in cost: whenever $T_{\text{\rm burn-in}}^\star(f,d)\le T$, then
\begin{align*}
    T_{\text{\rm burn-in}}^\star(f,d) \gtrsim d\cdot \int_{\sqrt{c\log(T)/d}}^{1/2} \frac{\mathrm{d}(x^2)}{(f(x))^2}.
\end{align*}
Here $c>0$ is an absolute constant independent of $(f,d)$. 
\end{theorem}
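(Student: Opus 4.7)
The plan is to establish the lower bound via an information-theoretic argument on a Bayesian prior, tracking a scalar ``progress potential'' along the algorithm's trajectory. Place $\theta^\star \sim \mathrm{Unif}(\mathbb{S}^{d-1})$ and define
$$X_t \;=\; \max_{s\le t} |\langle \theta^\star, a_s\rangle|.$$
The absolute value is natural because $f$ is even or odd, so the reward distribution depends on $\langle \theta^\star,a\rangle$ only through $|\langle \theta^\star,a\rangle|$ up to a sign that the learner can decode trivially; the estimation task is really about locating $\pm\theta^\star$. Since any estimator $\widehat{\theta}_T$ with correlation $\ge 1/2$ can be appended as one last action, it suffices to show that $\bE[X_T] \ge 1/2$ forces $T$ to be at least the claimed integral.

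The heart of the proof is a drift inequality of the form
$$\bE\bigl[X_{t+1}^2 - X_t^2 \,\big|\, \calH_t\bigr] \;\lesssim\; \frac{f(X_t)^2}{d},$$
together with a starting estimate $X_1^2 \lesssim \log(T)/d$ that holds with probability $\ge 1-1/T$ by subgaussian concentration of $\langle \theta^\star,a_1\rangle$ under the uniform prior. To establish the drift inequality, I will argue that the posterior of $\theta^\star$ given $\calH_t$ is approximately invariant under rotations fixing the previously observed directions $\{a_s\}_{s\le t}$, so no action $a_{t+1}$ measurable with respect to $\calH_t$ can systematically align with the unknown components of $\theta^\star$. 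The reward $r_{t+1} = f(\langle \theta^\star, a_{t+1}\rangle) + z_{t+1}$ then carries mutual information at most of order $f(X_t)^2$ about these unknown components, since $|\langle \theta^\star,a_{t+1}\rangle|$ is effectively capped by $X_t$ modulo the identified low-dimensional subspace. Translating this information bound into an expected increase of $X_{t+1}^2$ costs a factor of $d$ coming from the volume of the residual sphere in the orthogonal directions.

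Summing the drift inequality telescopically from $t=1$ to $T$ and invoking monotonicity of $X_t$, one obtains
$$\tfrac14 \;\le\; \bE[X_T^2] \;\lesssim\; X_1^2 + \frac{1}{d}\sum_{t=1}^{T-1} \bE[f(X_t)^2].$$
Since $f$ is monotone in $|x|$ under \Cref{assump:main}, the sum can be compared to an integral over the nondecreasing trajectory: setting $u = X_t^2$, advancing $u$ by an increment $\mathrm{d}u$ requires $\gtrsim d/f(\sqrt u)^2$ additional samples, so
$$T \;\gtrsim\; d \int_{c \log T/d}^{1/4} \frac{\mathrm{d}u}{f(\sqrt u)^2} \;=\; d \int_{\sqrt{c\log T/d}}^{1/2} \frac{\mathrm{d}(x^2)}{f(x)^2},$$
as claimed.

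The main obstacle is making the drift inequality rigorous. A naive Fano-type argument handles only a single scale, whereas here one needs a trajectory-level bound controlling progress at every level simultaneously. The delicate point is ensuring the learner cannot exploit a high-SNR region of $f$ (e.g.\ where $f'$ is large near $1$) without first paying the cost of raising $X_t$ into that region; this is exactly where the even-or-odd assumption is essential, as it precludes sign-based side-information that could otherwise bypass the SNR cap. I expect the rigorous argument to proceed via a conditional $\chi^2$-divergence bound on the posterior at each step, coupled with a symmetrization that exploits the residual spherical symmetry of the posterior given $\calH_t$.
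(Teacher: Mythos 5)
Your high-level intuition is right, and you even identify $\chi^2$-divergence as the likely tool, but the proof skeleton you lay out---a drift inequality in conditional expectation followed by a telescoping sum and integral comparison---has a structural flaw that the paper's Section~\ref{subsec:insight} explicitly identifies and circumvents.

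The proposed drift bound $\bE[X_{t+1}^2 - X_t^2 \mid \calH_t] \lesssim f(X_t)^2/d$ is not attainable, and in fact one should not expect it to hold. The only information-theoretic handle on $a_{t+1}$ is that it is constrained by $I(\theta^\star;\calH_t)$, and the resulting tail bound (Fano, \eqref{eq:Fano}) gives $\bP(|\jiao{\theta^\star,a_{t+1}}|>\varepsilon\mid\calH_t) \lesssim I_t/(d\varepsilon^2)$. This bound is \emph{tight}: the learner can, with probability of order $I_t/(d\varepsilon^2)$, guess $\theta^\star$ exactly and otherwise guess uniformly. Under this strategy $\bE[(|\jiao{\theta^\star,a_{t+1}}|^2 - X_t^2)_+\mid\calH_t]$ is of order $I_t/d$, not $f(X_t)^2/d$; since $I_t$ itself accumulates to roughly $t\, g(\varepsilon_t)^2/2$, the expected drift grows linearly in $t$ rather than staying bounded, and your telescoped sum becomes vacuous. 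Put differently, the increment of $X_t^2$ is heavy-tailed, so controlling its conditional mean is the wrong quantity; one must instead control its tail at each step, uniformly over the trajectory.

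A second gap: even granting the expectation drift inequality, the step ``setting $u=X_t^2$, advancing $u$ by $\mathrm{d}u$ requires $\gtrsim d/f(\sqrt u)^2$ additional samples'' silently trades an in-expectation bound for a pathwise bound. The sum $\sum_t \bE[f(X_t)^2]$ cannot be compared to the integral $\int \mathrm{d}(x^2)/f(x)^2$ unless $X_t$ increases at a deterministically controlled rate, which is precisely what has not been established.

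The paper's proof (Theorem~\ref{thm:lower_bound} and Corollary~\ref{cor:sample_complexity}) fixes both problems at once by replacing expectation control with a deterministic sequence $\{\varepsilon_t\}$ satisfying the recursion $\varepsilon_{t+1}^2 = \varepsilon_t^2 + c\,g(\varepsilon_t)^2/d$ and showing that $\bP(\cap_{s\le t}\{|\jiao{\theta^\star,a_s}|\le\varepsilon_s\}) \ge 1-t\delta$. The engine is the $\chi^2$-informativity $I_{\chi^2}$: Lemma~\ref{lemma:Fano_chi^2} gives a tail probability of order $e^{-c_0 d\varepsilon^2}\sqrt{I_{\chi^2}+1}$, \emph{exponentially} rather than polynomially small in $d\varepsilon^2$, which permits a union bound over $T\gg d$ steps. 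Since $\chi^2$-informativity does not obey a chain rule, the second key lemma (Lemma~\ref{lemma:recursion_chi^2}) establishes a multiplicative recursion for $I_{\chi^2}(\theta^\star;\calH_t\mid E_t)$ under conditioning on the good event $E_t$. The integral form in the statement then follows from inverting the deterministic recursion for $\varepsilon_t$, not from integrating a random trajectory. Your proposal gestures at the right tool in its closing paragraph, but the core argument as written would need to be rebuilt on this high-probability scaffold rather than on conditional expectations.
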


Again, the hidden constant in Theorem \ref{inftheorem:integral_lb} also does not depend on $f$, so the above lower bound is also pointwise in $f$. In addition, the lower bound takes a similar form of the integral, highlighting again the importance of the behavior of $f$ at all points in determining the burn-in cost. However, ignoring the logarithmic factor in the lower limit, the specific form of the integrand is also different. Compared with \eqref{eq:burnin_informal}, \Cref{inftheorem:integral_lb} proves an upper bound $f(x)^2/d$ on the real-time SNR, which by the monotonicity of $f$ is no smaller than the SNR lower bound in \eqref{eq:burnin_informal}. It is an interesting question to close this gap, while we remark that even proving the above weaker SNR upper bound is highly non-trivial and possibly requires new information-theoretic ideas. We also conjecture the SNR lower bound in \eqref{eq:burnin_informal} is essentially tight, and we defer these discussions to \Cref{sec:gap}.

We also note that the assumption that $f$ is even or odd is only for the simplicity of presentation and not required in general. As will become clear in Theorem \ref{thm:lower_bound}, the general lower bound simply replaces $f(x)$ in \Cref{inftheorem:integral_lb} by $g(x):=\max\{|f(x)|, |f(-x)|\}$. 

\begin{Example}\label{example:1}
For $f(x)=|x|^p$ with $p>0$ (or $f(x)=x^p$ for $p\in \mathbb{N}$), Theorems \ref{inftheorem:integral_ub} and \ref{inftheorem:integral_lb} show that
\begin{align*}
    \max\{d, d^p\} \lesssim T_{\text{\rm burn-in}}^\star(f,d) \lesssim \max\{d^2, d^p\}. 
\end{align*}
Therefore, the upper and lower bounds match unless $p\in (1,2)$. However, this does not cause any discrepancy for the overall sample complexity $T^\star(f,d,\varepsilon)$, as the sample complexity $\Theta(d^2/\varepsilon)$ in the learning phase will dominate the burn-in cost if $p\in (1,2)$. Therefore, in many scenarios, Theorems \ref{inftheorem:integral_ub} and \ref{inftheorem:integral_lb} are sufficient to give tight results on the sample complexity within logarithmic factors. 
\end{Example}

\subsection{Learning trajectory during the burn-in period}
In addition to the burn-in cost, which is the sample complexity required to achieve a constant inner product $\jiao{\widehat{\theta}_T, \theta^\star}$, we can also provide a fine-grained analysis of the learning trajectory during the burn-in period. Specifically, we have the following definition. 

\begin{definition}[Learning trajectory]\label{defn:learning_trajectory}
For a given link function $f$, dimensionality $d$, and $\varepsilon\in (0,1/2]$, the burn-in cost for achieving $\varepsilon$ inner product is defined as
\begin{align*}
    T_{\text{\rm burn-in}}^\star(f,d,\varepsilon) = T^\star(f,d,1-\varepsilon), 
\end{align*}
where $T^\star$ is the sample complexity defined in Definition \ref{defn:sample_complexity}. We will call the function $\varepsilon\mapsto T_{\text{\rm burn-in}}^\star(f,d,\varepsilon)$ as the \emph{minimax learning trajectory} during the burn-in period. 
\end{definition}

In other words, the learning trajectory concerns the sample complexity of achieving inner products $\jiao{\widehat{\theta}_T,\theta^\star} \ge \varepsilon$, simultaneously for all $\varepsilon\in (0,1/2]$. The following theorem is a strengthening of Theorems \ref{inftheorem:integral_ub} and \ref{inftheorem:integral_lb} in terms of the learning trajectory. 

\begin{theorem}\label{thm:learning_trajectory}
Consider a ridge bandit problem with a link function $f$ satisfying Assumption \ref{assump:main}. In what follows $\kappa\in (0,1/4)$ is any fixed constant, and $c_1, c_2>0$ are absolute constants independent of $(f,d,\varepsilon)$.  
\begin{itemize}
    \item For $\varepsilon \in [c_1/\sqrt{d}, 1/2]$, the following upper bound holds on the learning trajectory: 
\begin{align*}
T_{\text{\rm burn-in}}^\star(f,d,\varepsilon) \lesssim d^2\cdot \int_{1/\sqrt{d}}^{\varepsilon} \frac{\mathrm{d}(x^2)}{\max_{1/\sqrt{d}\le y\le x}\min_{z\in [(1-\kappa)y,(1+\kappa)y]}[f'(z)]^2}. 
\end{align*}
    \item In addition assume that $f$ is even or odd. Then for $\varepsilon\in [\sqrt{c_2\log(T)/d, 1/2}]$, the following lower bound holds on the learning trajectory: if $T_{\text{\rm burn-in}}^\star(f,d,\varepsilon)\le T$, then
    \begin{align*}
    T_{\text{\rm burn-in}}^\star(f,d,\varepsilon) \gtrsim d\cdot \int_{\sqrt{c_2\log(T)/d}}^{\varepsilon} \frac{\mathrm{d}(x^2)}{(f(x))^2}.
\end{align*}
\end{itemize}
\end{theorem}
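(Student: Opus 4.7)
The plan is to refine the proofs of \Cref{inftheorem:integral_ub} and \Cref{inftheorem:integral_lb} by tracking when a given correlation level $\varepsilon$ is first achieved, rather than only the terminal constant level $1/2$. Both parts share the principle that the burn-in should be viewed as a trajectory indexed by the current correlation $\jiao{a_t,\theta^\star}$, so that cutting off the argument at correlation $\varepsilon$ yields the $\varepsilon$-parameterized bound. Neither direction requires a new high-level strategy, but in both cases one must verify that the existing analyses can be localized to an arbitrary intermediate level $\varepsilon \ge \sqrt{c/d}$ rather than only to the absolute constant $1/2$.

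For the upper bound, I would re-examine Algorithm \ref{alg:burn-in}. Presumably, it advances through a sequence of correlation levels $x_0 < x_1 < \cdots < x_K$ with $x_0 \asymp 1/\sqrt{d}$ and $x_K \asymp 1/2$, where stage $k$ uses approximately $d^2 \cdot (x_{k+1}^2 - x_k^2)/\max_{y\le x_k}\min_{z\in[y/2,y]}[f'(z)]^2$ samples (this is exactly what summing the discrete stages produces in the integral of \Cref{inftheorem:integral_ub}). To upgrade to the trajectory statement, I would truncate the schedule at the largest $x_k \le \varepsilon$; since each stage's correctness is certified internally using the sign test enabled by the monotonicity half of \Cref{assump:main}, stopping early does not jeopardize earlier stages, and the total cost telescopes to the stated integral with $\varepsilon$ in place of $1/2$. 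A mild technical point is that the last stage before $\varepsilon$ may overshoot, but the integrand is monotone in the upper limit so this only affects constants.

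For the lower bound, the core argument is information-theoretic. Fix a uniform prior $\theta^\star \sim \text{Unif}(\mathbb{S}^{d-1})$ and set $X_t = \jiao{a_t, \theta^\star}$. Achieving correlation $\varepsilon$ with constant probability requires localizing $\theta^\star$ to a spherical cap of measure $\exp(-\Omega(d\varepsilon^2))$, so by a Fano-type bound one needs $I(\theta^\star;\calH_T)\gtrsim d\varepsilon^2$. The mutual information generated at step $t$ is at most $\tfrac{1}{2}\bE[f(X_t)^2]$ by the Gaussian channel identity, and one wants to prove that along any admissible trajectory, $\bE[f(X_t)^2]$ at time $t$ is controlled by the best correlation achieved so far. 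Discretizing the trajectory into dyadic levels $\varepsilon_k$ and summing the per-level information budget against the information requirement $d\varepsilon_k^2$ produces a discrete Gronwall inequality whose continuum limit is exactly the integral $d\cdot\int\mathrm{d}(x^2)/f(x)^2$ with upper limit $\varepsilon$.

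The main obstacle is the last step of the lower bound: bounding $\bE[f(X_t)^2]$ by (a function of) the historical maximum of $X_s$ for $s<t$. Because $a_t$ is adaptive, $|X_t|$ can momentarily be large even before the learner has committed significant information about $\theta^\star$ to the history, and one must rule out the possibility that a lucky action exploits the non-linearity of $f$ before the information-theoretic budget is spent. This is where the $\sqrt{c\log(T)/d}$ lower limit arises: below this threshold the prior alone yields correlations of this order and the corresponding SNR is untamed. I would handle this by a union bound over $T$ time steps to show that $\max_{s\le t} |X_s| \lesssim \max\{\sqrt{\log(T)/d},\, \text{trajectory so far}\}$ with high probability under the posterior, then invoke the chain rule $I(\theta^\star;\calH_T)=\sum_t I(\theta^\star; r_t \mid \calH_{t-1})$ to convert the step-wise SNR bound into the integrated trajectory inequality; this is precisely the ``novel application of information-theoretic tools'' flagged in the introduction.
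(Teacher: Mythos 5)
Your upper-bound argument is essentially the paper's: \Cref{thm:ub_burnincost_formal} is proved by a staged algorithm that passes through correlation levels $2\sqrt{k/d}$, and the paper explicitly notes (just after the theorem statement) that stopping after $k$ epochs yields an action with correlation $\ge 2\sqrt{k/d}$ at cost $\widetilde{O}(\sum_{i\le k}\varepsilon_i^{-2})$, which is exactly your truncation. That direction is fine.

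Your lower-bound argument, however, contains a gap that the paper specifically identifies and is built to overcome. You want to show that at each step $t$, ``low information so far'' implies ``$|X_t|$ small with high probability under the posterior,'' and then union-bound over $T$ steps. But if you use classical mutual information and a Fano-type bound, the failure probability per step is of order $I(\theta^\star;\calH_{t-1})/(d\varepsilon_t^2)$, and in the interesting regime this ratio is of constant order (it cannot be much smaller, because the whole point is that $I_t$ is comparable to $d\varepsilon_t^2$ when the learner is near the frontier). The paper shows this is not an artifact of the analysis: the generalized Fano inequality \eqref{eq:Fano} is tight, as illustrated by the mixture construction in \Cref{subsec:insight}. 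So the union bound over $T\gg d$ steps blows up, and the ``$\sqrt{\log(T)/d}$ threshold from the prior'' heuristic does not rescue the argument. What the paper does instead is replace mutual information by the $\chi^2$-informativity $I_{\chi^2}$, which via \Cref{lemma:Fano_chi^2} yields a per-step failure probability that is \emph{exponentially} small in $d\varepsilon^2$, making the union bound viable. The price is that $I_{\chi^2}$ has no chain rule, so the simple additivity $I(\theta^\star;\calH_T)=\sum_t I(\theta^\star; r_t\mid\calH_{t-1})$ you invoke is not available; this is repaired by the recursion of \Cref{lemma:recursion_chi^2}, which conditions on the cumulative event $E_t=\cap_{s\le t}\{|\jiao{\theta^\star,a_s}|\le\varepsilon_s\}$ and carefully discharges the conditioning to obtain the multiplicative bound leading to \eqref{eq:callback1}--\eqref{eq:callback2}. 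Your Gronwall-type intuition about the integral is correct, but without this switch of information measure and the accompanying conditioned recursion, the argument does not go through.
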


Theorem \ref{thm:learning_trajectory} shows that the integrals in Theorems \ref{inftheorem:integral_ub} and \ref{inftheorem:integral_lb} are not superfluous: when the target inner product changes from $1/2$ to $\varepsilon$, in the sample complexity we simply replace the upper limits of the integrals with $\varepsilon$ as well. Note that in the above theorem we always assume that $\varepsilon\gtrsim 1/\sqrt{d}$, as a uniformly random action $a\in \mathbb{S}^{d-1}$ achieves $\jiao{a,\theta^\star}=\Omega(1/\sqrt{d})$ with a constant probability, and thus the sample complexity for smaller $\varepsilon$ is $\Theta(1)$. This result leads to a characterization of the learning trajectory using \emph{differential equations} displayed in Figure \ref{fig:traj-ublb}. As a function of $t$, there is an algorithm where the inner product $x_t = \jiao{\theta^\star,a_t}$ can start from $\Theta(1/\sqrt{d})$ and follow the differential equation shown in the blue solid line. Moreover, for every algorithm, with high probability the start point of $x_t= \jiao{\theta^\star,a_t}$ cannot exceed $\widetilde{\Theta}(1/\sqrt{d})$, and the entire learning trajectory must lie below the differential equation shown in the red dashed line. The purple dotted line displays the performances of other exploration algorithms such as UCB and regression oracle (RO) based algorithms, showing that these algorithms make no progress until the time point $t\asymp d/[f(1/\sqrt{d})]^2$. This last part is the central theme of the next section. 
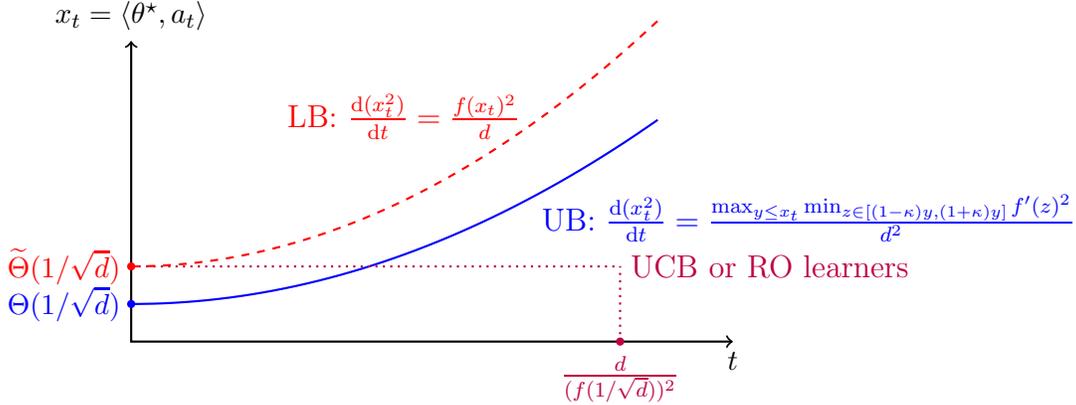
\begin{figure}[h]
\begin{center}
\begin{tikzpicture}[thick]
	\draw [<->] (0,4) -- (0,0) -- (8,0);
	\node [below] at (8,0) {$t$}; 
	\node [above] at (0,4) {$x_t = \jiao{\theta^\star, a_t}$};
	
	\filldraw [blue] (0,0.5) circle (0.04cm); 
	\node [left, blue] at (0,0.5) {$\Theta(1/\sqrt{d})$};
	
	\draw[domain=0:7, smooth, variable=\x, blue] plot ({\x}, {0.5 + \x * \x / 20});
	\node [below, blue] at (9,2.1) {\large UB: $ \frac{\mathrm{d}(x_t^2)}{\mathrm{d}t} = \frac{\max_{y\le x_t}\min_{z\in [(1-\kappa)y,(1+\kappa)y]} f'(z)^2}{d^2} $};
	
	\filldraw [red] (0,1) circle (0.04cm);
	\node [left, red] at (0,1) {$\widetilde{\Theta}(1/\sqrt{d})$};
        \draw [dotted, purple] (0,1) -- (6.5,1);
        \node [right, purple] at (6.5,1) {\large UCB or RO learners};

	
	\draw[domain=0:7, smooth, variable=\x, dashed, red] plot ({\x}, {1 + \x * \x / 15});
	\node [left, red] at (5.3,3) {\large LB: $ \frac{\mathrm{d}(x_t^2)}{\mathrm{d}t} = \frac{f(x_t)^2}{d} $};

        \filldraw [purple] (6.5, 0) circle (0.04cm); 
        \node [purple] at (6.5, -0.5) {$\frac{d}{(f(1/\sqrt{d}))^2}$};
        \draw [dotted, purple] (6.5,0) -- (6.5, 1);
\end{tikzpicture}
\end{center}
\caption{Upper and lower bounds on the minimax learning trajectory. Here UB stands for upper bound, LB stands for lower bound, and RO stands for regression oracles.}
\label{fig:traj-ublb}
\end{figure}

\subsection{Suboptimality of existing exploration algorithms}
As we discussed in the introduction, learning in the burn-in period is essentially \emph{exploration}, where the learner has not found a good action but aims to do so. In the literature of sequential decision making or bandits, several exploration ideas have been proposed and shown to work well for many problems. In this section we review two well-known exploration algorithms, i.e. algorithms based on upper confidence bounds (UCB) or regression oracles, and show that they can be strictly suboptimal for general ridge bandits. 

\subsubsection{Eluder-UCB} The UCB adopts a classical idea of ``optimism in the face of uncertainty'', i.e. the algorithm maintains for each action an optimistic upper bound on its reward, and then chooses the action with the largest optimistic upper bound. The core of the UCB algorithm is the construction of the upper confidence bound, and the Eluder-UCB algorithm \cite{russo2013eluder} proposes a general way to do so. 

In the Eluder-UCB algorithm specialized to ridge bandits, at each time $t$ the learner computes the least squares estimate of $\theta^\star$ based on the past history: 
\begin{align*}
\widehat{\theta}_t^{\text{LS}} := \argmin_{\theta\in \mathbb{S}^{d-1}} \sum_{s<t} \left(r_s - f(\jiao{\theta, a_s})\right)^2. 
\end{align*}
Then using standard theory of least squares, one can show that the true parameter $\theta^\star$ belongs to the following confidence set $\mathbb{C}_t$ with high probability: 
\begin{align}\label{eq:eluder_confidence_set}
    \mathbb{C}_t = \left\{ \theta\in \mathbb{S}^{d-1} : \sum_{s<t} \left( f (\langle a_s , \theta \rangle) - f (\langle a_s, \widehat{\theta}^{\text{LS}}_t \rangle) \right)^2 \le \textbf{Est}_{t} \right\}, 
\end{align}
where $\textbf{Est}_{t} \asymp d$ is an upper bound on the estimation error and known to the learner. Conditioned on the high probability event that $\theta^\star \in \mathbb{C}_t$, the quantity $\max_{\theta\in \mathbb{C}_t}f(\jiao{a,\theta})$ is an upper bound of $f(\jiao{a,\theta^\star})$ for every action $a$, and the Eluder-UCB algorithm chooses the action
\begin{align}\label{eq:eluderucb}
    a_t \in \argmax_{a \in \mathcal{A}} \max_{\theta \in \mathbb{C}_t} f (\langle a, \theta \rangle). \tag{El-UCB}
\end{align}
If there are ties, they can be broken in an arbitrary manner. 

The next theorem presents a lower bound on the burn-in cost for the Eluder-UCB algorithm. 

\begin{theorem} \label{thm:ucb-lower-bound}
For every Lipschitz link function $f$ satisfying Assumption \ref{assump:main}, there exists a tie-breaking rule for \cref{eq:eluderucb} such that for the Eluder-UCB algorithm, the following lower bound holds for its sample complexity $T_{\text{\rm UCB}}^\star$ of achieving inner product at least $\varepsilon$: whenever $T_{\text{\rm UCB}}^\star\le T$ and $\varepsilon\ge \sqrt{c\log(T)/d}$, it holds that
\begin{align*}
    T_{\text{\rm UCB}}^\star \gtrsim \frac{d}{g(\sqrt{c\log (T)/d})^2}. 
\end{align*}
Here $c>0$ is an absolute constant independent of $(f,d,\varepsilon)$, and $g(x):=\max\{|f(x)|,|f(-x)|\}$. 
\end{theorem}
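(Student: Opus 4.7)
The starting point is that for Eluder-UCB in this setup the maximum optimistic value $\max_{a\in\mathbb{B}^d}\max_{\theta\in\mathbb{C}_t}f(\langle a,\theta\rangle)$ always equals $f(1)=1$: for any $\theta\in\mathbb{C}_t\subseteq\mathbb{S}^{d-1}$, the choice $a=\theta$ yields $\langle a,\theta\rangle=1$, and by \Cref{assump:main} one has $f(1)=1\ge|f|$ everywhere. Consequently every unit vector in $\mathbb{C}_t$ is an optimistic action, and the algorithm's behavior is entirely determined by the tie-breaking. The plan is to design a tie-breaking rule and pair it with a uniform prior $\theta^\star\sim\mathrm{Unif}(\mathbb{S}^{d-1})$ so that, writing $\varepsilon_0:=\sqrt{c\log T/d}$ and $T^\star:=cd/g(\varepsilon_0)^2$, the inner products $\langle a_s,\theta^\star\rangle$ stay of order $\varepsilon_0$ for all $s\le T^\star$ with constant probability, from which a Fano-type argument delivers the lower bound.

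\textbf{Tie-breaking rule.} For each $t$, let $v_t=v_t(\calH_{t-1})\in\mathbb{S}^{d-1}$ be a deterministic, history-measurable reference unit vector (e.g., the direction produced by a canonical Gram--Schmidt completion of $a_1,\dots,a_{t-1}$ to a fixed basis). The tie-breaker picks $a_t$ as the unit vector in $\mathbb{C}_t$ closest to $v_t$ in Euclidean distance. On the good event $\{v_t\in\mathbb{C}_t\}$ this forces $a_t=v_t$, making $a_t$ independent of $\theta^\star$ conditional on $\calH_{t-1}$.

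\textbf{Inductive posterior control.} Define $\calE_t:=\{|\langle a_s,\theta^\star\rangle|\le\varepsilon_0\;\forall\, s\le t\}$ and the slab
\[
\calS_t:=\mathbb{S}^{d-1}\cap\{\theta:|\langle a_s,\theta\rangle|\le\varepsilon_0,\ s\le t\}.
\]
The inductive claim is that for every $t\le T^\star$, on $\calE_{t-1}$ the posterior of $\theta^\star$ given $\calH_{t-1}$ is within constant total variation of the uniform measure on $\calS_{t-1}$, and $v_t\in\mathbb{C}_t$ with probability $1-o(1)$. The posterior bound follows because, for $\theta,\theta'\in\calS_{t-1}$, the log-likelihood ratio splits into a deterministic $O(tg(\varepsilon_0)^2)=O(1)$ piece and a Gaussian cross term $\sum_s z_s\bigl(f(\langle a_s,\theta\rangle)-f(\langle a_s,\theta'\rangle)\bigr)$, whose supremum over $\calS_{t-1}$ is controlled by chaining using the Lipschitz constant of $f$. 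Under this near-uniform posterior, spherical concentration applied to the $\calH_{t-1}$-measurable vector $v_t$ yields $|\langle v_t,\theta^\star\rangle|\le\varepsilon_0$ with probability $1-T^{-\Omega(1)}$; a complementary least-squares concentration argument certifies $v_t\in\mathbb{C}_t$ in the weak-signal regime, so $a_t=v_t$ and $\calE_{t-1}\Rightarrow\calE_t$, closing the induction.

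\textbf{Information-theoretic conclusion and main obstacle.} Conditioned on $\calE_{T^\star}$, the Gaussianity of noise together with $|f(\langle a_t,\theta^\star\rangle)|\le g(\varepsilon_0)$ gives
\[
I(\theta^\star;r_1,\dots,r_{T^\star})\le\tfrac{1}{2}\sum_{t=1}^{T^\star}\Expect\bigl[f(\langle a_t,\theta^\star\rangle)^2\bigr]\le\tfrac{1}{2}T^\star g(\varepsilon_0)^2=O(d).
\]
Applying Fano's inequality to a maximal $\varepsilon$-packing of $\mathbb{S}^{d-1}$ shows that $O(d)$ nats of mutual information are insufficient to produce $\Expect[\langle\hat\theta,\theta^\star\rangle]\ge\varepsilon$ for any $\varepsilon\gtrsim\varepsilon_0$, contradicting $T^\star_{\text{UCB}}\le T^\star$ and so yielding the claimed lower bound. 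The hardest step will be the inductive posterior control: one must verify both that the LS-based confidence set $\mathbb{C}_t$ does not shrink enough to exclude the reference vector $v_t$ and that $\widehat\theta^{\text{LS}}_t$ does not lock onto $\theta^\star$ through noise. Both require careful quantitative use of the Lipschitz assumption to keep every $\theta\in\calS_{t-1}$ plausible under the weak-signal observations, so that the posterior remains spread over the $(d-1)$-dimensional slab rather than concentrating near $\theta^\star$.
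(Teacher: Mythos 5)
Your opening observation — that every $\theta\in\mathbb{C}_t\cap\mathbb{S}^{d-1}$ achieves the optimistic maximum $f(1)$, so the tie-breaking rule alone determines the Eluder-UCB trajectory — is the same launching point as the paper's proof, but after that the two routes diverge sharply, and your route has concrete gaps.

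The paper's proof is a short \emph{constructive} argument. After noting by Lipschitz concentration that one may pass to the $\widehat{\theta}^{\mathrm{LS}}_t$-free confidence set $\mathbb{C}_t'=\{\theta:\sum_{s<t}(f(\jiao{a_s,\theta})-f(\jiao{a_s,\theta^\star}))^2\le \mathbf{Est}_t\}\subseteq\mathbb{C}_t$, the paper simply picks a fixed packing $\theta_0,\theta_1,\dots,\theta_{T_0}\subseteq\mathbb{S}^{d-1}$ with $|\jiao{\theta_i,\theta_j}|\le\sqrt{(c\log T_0)/d}$, sets $\theta^\star=\theta_0$, and declares the tie-breaking rule to be "at step $t$, play $a_t=\theta_t$''. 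Because both $|\jiao{\theta_s,\theta_t}|$ and $|\jiao{\theta_s,\theta_0}|$ are at most $\sqrt{(c\log T_0)/d}$, one gets $\sum_{s<t}(f(\jiao{\theta_s,\theta_t})-f(\jiao{\theta_s,\theta_0}))^2\le 4Tg(\sqrt{(c\log T_0)/d})^2\le 4d$, so $\theta_t\in\mathbb{C}_t'$ and the trajectory is valid. No posterior analysis, no Fano, no mutual information. The inner products $|\jiao{\theta^\star,a_t}|=|\jiao{\theta_0,\theta_t}|$ are all $O(\sqrt{(\log T_0)/d})$ by construction.

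Your route, by contrast, proposes a uniform prior on $\theta^\star$, a history-measurable reference direction $v_t$, an inductive claim that the posterior stays nearly uniform on a slab, and a Fano argument. Two steps have genuine gaps. First, the example reference rule — "Gram--Schmidt completion of $a_1,\dots,a_{t-1}$ to a fixed basis'' — produces no new direction once $t>d$, yet the theorem must control the trajectory for $T\approx d/g(\varepsilon_0)^2$, which for, say, $f(x)=x^p$ is of order $d^{p+1}\gg d$. You need near-orthogonal rather than exactly-orthogonal directions beyond $t=d$, and then the argument that the confidence-set constraint $\sum_{s<t}(f(\jiao{a_s,v_t})-f(\jiao{a_s,\theta^\star}))^2$ is small requires exactly the packing bookkeeping the paper does, at which point the posterior machinery is superfluous. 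Second, the inductive posterior-control claim ("within constant total variation of uniform on $\calS_{t-1}$'') is asserted, not proved; the total signal over $T^\star$ steps is $T^\star g(\varepsilon_0)^2=\Theta(d)$, the same order as the information needed to learn a direction on $\mathbb{S}^{d-1}$, so whether the posterior stays spread is a delicate constant-level question. Indeed, the paper's own Section~\ref{subsec:insight} explains why a naive mutual-information/Fano argument does not yield a high-probability trajectory bound and why the $\chi^2$-informativity was introduced; your sketch is vulnerable to exactly that failure mode. The lesson here is that constraining the learner to the Eluder-UCB policy lets one bypass all of this: you do not need to reason about the posterior at all, because you get to \emph{choose} the trajectory among the (many) optimistic actions, and a single fixed packing works against a single fixed $\theta^\star$.
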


Compared with Theorem \ref{inftheorem:integral_lb}, the lower bound for the Eluder-UCB algorithm only depends on the function value of $f$ at a single point $\widetilde{\Theta}(1/\sqrt{d})$, even for achieving an inner product $\widetilde{O}(1/\sqrt{d})$. Since $f$ is monotone (cf. Assumption \ref{assump:main}), the lower bound in Theorem \ref{thm:ucb-lower-bound} is always no smaller than the minimax lower bound in Theorem \ref{inftheorem:integral_lb}, and this gap could be arbitrarily large for carefully chosen $f$. Note that the lower bound in Theorem \ref{thm:ucb-lower-bound} is again pointwise in $f$, this means that the suboptimality of the Eluder-UCB algorithm in ridge bandits is in fact general. 

\subsubsection{Regression oracle based algorithms} Algorithms based on regression oracles follow a different idea: instead of observing the noisy observation $r_t = f(\jiao{\theta^\star, a_t})+z_t$, suppose the learner receives an estimate $\widehat{\theta}_t$ from an oracle treated as a black box. There are two types of such oracles: 
\begin{itemize}
    \item Online regression oracle: the oracle outputs $\widehat{\theta}_t$ at the beginning of time $t$ which satisfies
    \begin{align}\label{eq:online_oracle}
    \sum_{s\le t} \left( f(\jiao{\theta^\star,a_s}) - f(\jiao{\widehat{\theta}_s, a_s}) \right)^2 \le \textbf{Est}^{\textbf{On}}_t
    \end{align}
    with high probability, where $\textbf{Est}^{\textbf{On}}_t \asymp d$ is a known quantity. 
    \item Offline regression oracle: the oracle outputs $\widehat{\theta}_t$ at the end of time $t$ which satisfies
    \begin{align}\label{eq:offline_oracle}
    \sum_{s\le t} \left( f(\jiao{\theta^\star,a_s}) - f(\jiao{\widehat{\theta}_t, a_s}) \right)^2 \le \textbf{Est}^{\textbf{Off}}_t
    \end{align}
    with high probability, where $\textbf{Est}^{\textbf{Off}}_t \asymp d$ is a known quantity. 
\end{itemize}
Under the oracle model, instead of observing $(a_1,r_1,a_2,r_2,\cdots)$, the learner only observes $(a_1,\widehat{\theta}_1,a_2,\widehat{\theta}_2,\cdots)$, where the learner has no control over $\{\widehat{\theta}_t\}$ except for the error bound \eqref{eq:online_oracle} or \eqref{eq:offline_oracle}. Note that the observational model can be reduced to an oracle model with the help of certain oracles $\widehat{\theta}_t=\widehat{\theta}_t(\{a_s,r_s\}_{s=1}^t)$, but the converse may not be true. Over the recent years, an interesting line of research in the bandit literature \cite{foster2020beyond,foster2020adapting,krishnamurthy2021adapting,foster2021statistical,simchi2022bypassing} 
is the development of learning algorithms under only the oracle models.

Despite the success of oracle models, we show that for ridge bandits, the oracle models could be strictly less powerful than the original observational model. In particular, \emph{any} algorithm under the oracle model could have a suboptimal performance. The exact statement is summarized in the next theorem, where we call an oracle ``proper'' if we require that $\widehat{\theta}_t\in \mathbb{S}^{d-1}$ for every $t$, and ``improper'' otherwise. 

\begin{theorem}\label{thm:RO-lower-bound}
For every Lipschitz link function $f$ satisfying Assumption \ref{assump:main}, there exists improper online regression oracles satisfying \eqref{eq:online_oracle} or proper offline regression oracles satisfying \eqref{eq:offline_oracle} such that: for any algorithm under the oracle model, its sample complexity $T_{\text{\rm RO}}^\star$ of achieving an inner product at least $\varepsilon$ satisfies that whenever $T_{\text{\rm RO}}^\star\le T$ and $\varepsilon\ge \sqrt{c\log(T)/d}$, then
\begin{align*}
    T_{\text{\rm RO}}^\star \gtrsim \frac{d}{g(\sqrt{c\log (T)/d})^2}. 
\end{align*}
Here $c>0$ is an absolute constant independent of $(f,d,\varepsilon)$, and $g(x):=\max\{|f(x)|,|f(-x)|\}$.
\end{theorem}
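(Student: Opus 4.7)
The approach is to construct adversarial oracles whose outputs are statistically independent of $\theta^\star$, so that the algorithm's actions---and ultimately its final estimator $\widehat{\theta}_T$---must also be $\theta^\star$-independent. Under the uniform prior $\theta^\star \sim \mathrm{Unif}(\mathbb{S}^{d-1})$, this forces the expected inner product $\bE\langle \widehat{\theta}_T, \theta^\star\rangle$ to be essentially zero, yielding the desired sample complexity lower bound by a pigeonhole argument. The main challenge is to design oracles that are this uninformative while still satisfying the prescribed squared-error budget with high probability.

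Construction of the adversary. For the improper online oracle, simply set $\widehat{\theta}_t \equiv 0 \in \mathbb{R}^d$ at every round; this is admissible since the online oracle is improper, and $f(\langle 0, a_s\rangle) = f(0) = 0$ by \Cref{assump:main}. For the proper offline oracle, draw $\widehat{\theta}_t \sim \mathrm{Unif}(\mathbb{S}^{d-1})$ i.i.d.\ at the end of each round, independently of $\theta^\star$ and of all past actions. In both cases the oracle output sequence $(\widehat{\theta}_t)$ is independent of $\theta^\star$, so every action $a_t$---being a measurable function of past $\widehat{\theta}_s$'s and the algorithm's private randomness---is also independent of $\theta^\star$.

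Verifying the error bound. Set $\varepsilon_0 := \sqrt{c\log(T)/d}$ for a sufficiently large absolute constant $c$. Since the actions are $\theta^\star$-independent, spherical concentration (L\'evy's inequality) plus a union bound over $s \le T$ yields
\begin{align*}
\Pr\left[|\langle \theta^\star, a_s\rangle|\le \varepsilon_0 \text{ for all } s\le T\right] \;\ge\; 1 - O(T^{1-c/2}),
\end{align*}
so monotonicity of $g$ gives $|f(\langle \theta^\star, a_s\rangle)| \le g(\varepsilon_0)$ on this event. For the online oracle the cumulative error is then bounded by $T g(\varepsilon_0)^2$. For the offline oracle, a second spherical concentration step over the independent randomness of $\widehat{\theta}_t$ shows $|\langle \widehat{\theta}_t, a_s\rangle| \le \varepsilon_0$ uniformly for $s,t \le T$ with high probability, so each squared error term is $O(g(\varepsilon_0)^2)$. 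In either setting the budget $\mathbf{Est}_t \asymp d$ is respected whenever $T \lesssim d / g(\varepsilon_0)^2$.

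Extracting a worst-case $\theta^\star$ and main obstacle. Letting $\mu := \bE\widehat{\theta}_T$, the independence of $\widehat{\theta}_T$ from $\theta^\star$ gives $\bE[\langle \widehat{\theta}_T, \theta^\star\rangle \mid \theta^\star] = \langle \mu, \theta^\star\rangle$. The spherical cap $\{\theta^\star : \langle \mu, \theta^\star\rangle \ge \varepsilon\}$ has measure at most $2\exp(-d\varepsilon^2/2) \le 2T^{-c/2}$ for $\varepsilon \ge \varepsilon_0$, while by Fubini the set of $\theta^\star$ on which the oracle's error bound fails with non-negligible probability also has measure $o(1)$. Their complements intersect, producing a valid $\theta^\star$ for which the oracle is legitimate yet the algorithm cannot achieve expected inner product $\varepsilon$, hence $T^\star_{\mathrm{RO}} \gtrsim d / g(\sqrt{c\log(T)/d})^2$. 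The principal obstacle is the proper offline case, where outputting a fixed unit vector like $e_1$ is too informative once the algorithm replays it; the ``fresh uniform samples on $\mathbb{S}^{d-1}$'' remedy requires carefully coupling two independent sources of randomness (the prior on $\theta^\star$ and the oracle's internal coin), and pushing a $O(T^2)$-size union bound through all inner products $\langle \widehat{\theta}_t, a_s\rangle$. Notably the Lipschitz hypothesis on $f$ does not enter the final bound---only the normalization and monotonicity of $g$ from \Cref{assump:main} are used.
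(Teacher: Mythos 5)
Your proposal is correct and follows essentially the same route as the paper: for the improper online oracle you output the constant zero vector, for the proper offline oracle you output fresh i.i.d.\ uniform samples on $\mathbb{S}^{d-1}$, and in both cases the learner's actions become independent of $\theta^\star$, so spherical concentration (the paper's Lemma \ref{lemma:nolargeinnerproduct}) with a union bound shows both that the oracle respects its error budget under the stated time bound and that the learner never attains inner product above $\widetilde{\Theta}(1/\sqrt{d})$. Your Fubini/worst-case extraction step and the observation about the offline case needing a second round of concentration over the oracle's coin are both present implicitly in the paper's argument; the one thing the paper adds that you gloss over is that Lipschitzness is used (via rakhlin2014online and the least-squares guarantee) to justify that the budget scale $\mathbf{Est}_t \asymp d$ is actually the right one, making the adversarial oracle comparable to genuine oracles.
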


The lower bound in Theorem \ref{thm:RO-lower-bound} again holds for every $f$, and is the same as the lower bound in Theorem \ref{thm:ucb-lower-bound}. Therefore, for general link function $f$, every algorithm could only achieve a strictly suboptimal performance under the oracle model. Note that this result does \emph{not} rule out the possibility that some algorithm based on a particular oracle has a smaller sample complexity than Theorem \ref{thm:RO-lower-bound}; instead, Theorem \ref{thm:RO-lower-bound} only means that even if an algorithm works, its analysis cannot treat the oracle as a black box. 

\begin{Example}
Consider again the example where $f(x)=|x|^p$ with $p>0$ (or $f(x)=x^p$ for $p\in \mathbb{N}$). Theorems \ref{thm:ucb-lower-bound} and \ref{thm:RO-lower-bound} shows that the Eluder-UCB or regression oracle based algorithms can only achieve a burn-in cost $\widetilde{\Omega}(d^{p+1})$, which is strictly suboptimal compared with Example \ref{example:1} if $p>1$. In particular, if $p\ge 2$, the suboptimality gap is as large as $\widetilde{\Omega}(d)$. 
\end{Example}

\subsection{Complexity of the learning phase}
Next we proceed to understand the learning performance after a good initial action $a_0$ is found with $\jiao{\theta^\star, a_0}\ge 1/2$. To this end we need a few additional assumptions on the link function $f$. 

\begin{assumption}[Regularity conditions for the learning phase]\label{assump:learning}
The link function $f$ is differentiable and locally linear on some interval $[1-\gamma, 1]$ around $1$:
    \begin{align}\label{eq:local_linearity}
       c_f \le \min_{x\in [1-\gamma,1]}f'(x) \le \max_{x\in [1-\gamma,1]} f'(x) \le C_f, 
    \end{align}
    where $f'$ is the derivative of $f$.
\end{assumption}

The local linearity condition may appear to be a strong condition at the first sight, as it forces $f$ to come close to being linear. The crucial feature of \eqref{eq:local_linearity} is that we only require it for $x$ bounded away from zero, thus it does not help alleviate the challenge in the burn-in period. This assumption also holds for many link functions, such as $f(x)=|x|^p$ for any fixed $p>0$.

The following theorem establishes an upper bound on the sample complexity and the regret in the learning phase. It essentially states that, every ridge bandit problem becomes a linear bandit given a good initial action, provided that \Cref{assump:learning} holds. 

\begin{theorem} \label{inftheorem:learning_ub}
Suppose the link function $f$ satisfies Assumption \ref{assump:learning}, and the learner is given an action $a_0$ with $\jiao{\theta^\star, a_0}\ge 1-3\gamma/4$. Then for every $\varepsilon<\gamma$, the output $\widehat{\theta}_T$ of \Cref{alg:learning_phase} in \Cref{subsec:upper_bound_linear} satisfies $\bE[\jiao{\widehat{\theta}_T, \theta^\star}]\ge 1-\varepsilon$ with
\begin{align*}
    T = O\left(\frac{d^2}{c_f^2\varepsilon}\right). 
\end{align*}
Here the hidden constant depends only on $\gamma$. If in addition $f$ satisfies Assumption \ref{assump:main}, \Cref{alg:learning_phase} in \Cref{subsec:upper_bound_linear} over a time horizon $T$ achieves a cumulative regret
\begin{align*}
    \mathfrak{R}_T^\star(f,d) = O\left(\min\left\{\frac{C_f}{c_f }d\sqrt{T},T\right\}\right).
\end{align*}
\end{theorem}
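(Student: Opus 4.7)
The plan is to leverage the local linearity of $f$ on $[0.1,1]$ (Assumption~\ref{assump:learning}) to reduce the problem to a standard linear bandit on the $(d{-}1)$-dimensional subspace orthogonal to $a_0$. The key observation is that, starting from $\jiao{a_0,\theta^\star}\ge 1/2$, any action close enough to $a_0$ will have its inner product with $\theta^\star$ in the locally linear regime, so $f$ can be Taylor-expanded to first order with controllable remainder.

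First I would restrict to the action parameterization $a_t = \sqrt{1-\eta_t^2}\, a_0 + \eta_t u_t$, where $u_t$ is a unit vector orthogonal to $a_0$ and $\eta_t \in [0,\eta_{\max}]$ for some small $\eta_{\max}$ (e.g.\ $1/4$). By Cauchy--Schwarz, $\jiao{a_t,\theta^\star} \in [0.1,1]$, so by \Cref{assump:learning} a Taylor expansion gives
\begin{align*}
r_t \;=\; f\!\left(\sqrt{1-\eta_t^2}\,\jiao{a_0,\theta^\star}\right) \,+\, \eta_t\, f'(\xi_t)\,\jiao{u_t,\theta^\star} \,+\, z_t,
\end{align*}
for some $\xi_t \in [0.1,1]$ with $c_f \le f'(\xi_t)\le C_f$. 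The scalar $\jiao{a_0,\theta^\star}$ can be estimated to additive accuracy $\varepsilon$ by averaging $O(1/\varepsilon)$ rounds of playing $a_t=a_0$, and an estimate of $f'(\xi)$ with similar accuracy can be obtained by a finite-difference at two fixed $\eta$ values. Subtracting the (now estimated) baseline, the remaining observations form an approximately linear observation model in the $(d{-}1)$-dimensional projection $v^\star := P_{a_0^\perp}\theta^\star$.

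For the estimation claim, I would draw the directions $u_t$ from a G-optimal (or simply uniform) design on $\mathbb{S}^{d-2} \cap a_0^\perp$, then compute the least-squares estimate $\widehat{v}$ of $f'(\xi)\,v^\star$ from the centered rewards. Standard linear regression analysis yields $\mathbb{E}\|\widehat{v}/\widehat{f'} - v^\star\|_2^2 \lesssim d^2/(c_f^2 T)$; choosing $T = O(d^2/(c_f^2\varepsilon))$ drives this below $\varepsilon$. Reconstructing $\widehat{\theta}_T$ from the estimate of $\jiao{a_0,\theta^\star}$ and of $v^\star$, and renormalizing to the sphere, gives $\mathbb{E}[1-\jiao{\widehat\theta_T,\theta^\star}] \le \varepsilon$, using $1 - \jiao{\widehat\theta_T,\theta^\star} = \tfrac{1}{2}\|\widehat\theta_T-\theta^\star\|_2^2$.

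For the regret bound, I would use a phased elimination (or LinUCB-type) algorithm on the linearized model: in each phase of doubling length, play a G-optimal design on the current confidence set and eliminate directions outside of it. The standard linear-bandit analysis gives cumulative regret $O(d\sqrt{T})$ in the linearized model, which translates (via the Lipschitz bound $C_f$ and the SNR bound $c_f$) to $O((C_f/c_f)\, d\sqrt{T})$ in the original reward; the trivial $O(T)$ bound handles short horizons since rewards lie in $[-1,1]$. The main technical obstacle is controlling the quadratic Taylor remainder, which is of order $\eta_t^2 C_f$ per round: one has to shrink $\eta_t$ fast enough with the phase index so this remainder never exceeds the estimation precision, but not so fast that the signal $\eta_t f'(\xi_t) \jiao{u_t,\theta^\star}$ vanishes. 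The phased schedule balances these by taking $\eta_t^2 \asymp 1/t$, which matches exactly the per-round gap decay of linear bandits and preserves the $d\sqrt{T}$ rate.
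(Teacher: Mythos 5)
Your approach is genuinely different from the paper's. You linearize the reward via a first-order expansion around $a_0$, reduce to a linear bandit in the $(d{-}1)$-dimensional orthogonal complement, and then run a G-optimal design (for estimation) or phased elimination (for regret) on that linearized model. The paper instead runs a single non-adaptive explore-then-commit scheme that regresses directly on the \emph{nonlinear} function class $\{a \mapsto f(\jiao{\theta,a}) : \theta \in \mathbb{S}^{d-1},\ \jiao{\theta,a_0}\ge 1/2\}$ using a constrained least squares estimator, and invokes a general empirical-process least-squares bound (Lemma~\ref{lemma:least_squares}) together with the Lipschitz contraction of metric entropy (via $f' \le C_f$). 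Local linearity is used only \emph{after} the fact, to translate the $\widetilde O(d)$ prediction-error bound into a parameter-error bound via $f' \ge c_f$, and the commit phase directly plays $\thetaLS$ with no linear-bandit machinery. What the paper's route buys you is that no Taylor expansion of $f$ is ever performed, so no second derivative of $f$ needs to exist or be bounded.

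That distinction exposes a genuine gap in your argument. You control the linearization error by appealing to ``the quadratic Taylor remainder, which is of order $\eta_t^2 C_f$ per round.'' But Assumption~\ref{assump:learning} only guarantees $c_f \le f' \le C_f$ on $[0.1,1]$; it says nothing about $f''$, which may fail to exist or be unbounded. Therefore the quadratic remainder is \emph{not} bounded by $\eta_t^2 C_f$. If you instead take the exact mean-value form $f(\jiao{a_t,\theta^\star}) = f(\sqrt{1-\eta_t^2}\,\jiao{a_0,\theta^\star}) + \eta_t f'(\xi_t)\,\jiao{u_t,\theta^\star}$, there is no quadratic remainder at all, but now the ``linear'' coefficient $f'(\xi_t)$ is a random quantity depending on $u_t$, $\eta_t$, and $\theta^\star$, fluctuating within $[c_f,C_f]$ --- a misspecification that does \emph{not} shrink as $\eta_t \to 0$. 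This breaks the linear-bandit reduction: phased elimination and G-optimal guarantees assume a fixed linear model, and the per-round misspecification here is proportional to the signal itself, not to $\eta_t^2$. Your proposed schedule $\eta_t^2 \asymp 1/t$ would shrink a $\Theta(\eta_t^2)$ remainder fast enough, but that is not the remainder you actually face. The paper's proof avoids this entirely by never writing the model as linear: the constrained least-squares estimator lives on the sphere and fits the true nonlinear link, and the translation to parameter error uses only the lower bound $f' \ge c_f$ pointwise, which \emph{is} available. To repair your argument you would need an additional regularity assumption on $f'$ (e.g. H\"older continuity on $[0.1,1]$), which the theorem does not assume.
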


Ignoring the constants $(\gamma, c_f, C_f)$, the sample complexity $O(d^2/\varepsilon)$ and regret $O(d\sqrt{T})$ match the counterparts for linear bandits. Combining Theorems \ref{inftheorem:integral_ub} and \ref{inftheorem:learning_ub}, we have the following characterization for the overall sample complexity and regret of general ridge bandits. 

\begin{corollary}\label{cor:overall_upper_bound}
Suppose the link function $f$ satisfies Assumptions \ref{assump:main} and \ref{assump:learning} (with $\gamma=2/3$). Then for $\varepsilon < 1/2$ and any fixed $\kappa\in (0,1/4)$,  
\begin{align*}
    T^\star(f,d,\varepsilon) &\lesssim d^2\cdot \int_{1/\sqrt{d}}^{1/2} \frac{\mathrm{d}(x^2)}{\max_{1/\sqrt{d}\le y\le x}\min_{z\in [(1-\kappa)y,(1+\kappa)y]}[f'(z)]^2} + \frac{d^2}{c_f^2\varepsilon}, \\
    \mathfrak{R}_T^\star(f,d) &\lesssim \min\left\{ d^2\cdot \int_{1/\sqrt{d}}^{1/2} \frac{\mathrm{d}(x^2)}{\max_{1/\sqrt{d}\le y\le x}\min_{z\in [(1-\kappa)y,(1+\kappa)y]}[f'(z)]^2} + \frac{C_f}{c_f}d\sqrt{T}, T\right\}.  
\end{align*}
\end{corollary}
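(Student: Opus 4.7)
The strategy is a direct composition: run Algorithm \ref{alg:burn-in} to obtain a good initial action $a_0$, then pass $a_0$ into Algorithm \ref{alg:learning_phase}. \Cref{inftheorem:integral_ub} controls the sample budget of the first stage and \Cref{inftheorem:learning_ub} controls the second, and the two bounds combine additively.

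Concretely, for the sample-complexity bound, I would allocate
\begin{align*}
T_1 \;\asymp\; d^2 \int_{\sqrt{c/d}}^{1/2} \frac{\mathrm{d}(x^2)}{\max_{\sqrt{c/d}\le y\le x}\min_{z\in[y/2,y]}[f'(z)]^2}
\end{align*}
samples to the burn-in stage and $T_2 = O(d^2/(c_f^2\varepsilon))$ samples to the learning stage. \Cref{inftheorem:integral_ub} produces an output $a_0 \in \mathbb{S}^{d-1}\subseteq\mathbb{B}^d$ with $\jiao{\theta^\star,a_0}\ge 1/2$ (with high probability, after the boosting step described below). Conditional on this event, \Cref{inftheorem:learning_ub} ensures $\bE[1-\jiao{\hat{\theta}_T,\theta^\star}]\le\varepsilon$, and taking $T=T_1+T_2$ gives the first claim.

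For the regret bound, I would reuse the same split. During the $T_1$ burn-in steps, rewards lie in $[-1,1]$, so the contribution to the cumulative regret is at most $2T_1$. During the subsequent $T-T_1$ learning steps, \Cref{inftheorem:learning_ub} bounds the regret by $O((C_f/c_f)d\sqrt{T-T_1})\le O((C_f/c_f)d\sqrt{T})$. Summing the two reproduces the first term of the $\min$, while the second term $T$ is the trivial bound from $|f|\le 1$ and takes over when $T$ is too small for the burn-in to finish.

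The only real subtlety, and the hard part of the proof, is converting the in-expectation guarantee $\bE[1-\jiao{\hat{\theta},\theta^\star}]\le 1/2$ from the burn-in stage into the sample-path event $\jiao{\theta^\star,a_0}\ge 1/2$ required as input to \Cref{inftheorem:learning_ub}. I plan to boost confidence by running $K = O(\log(1/\varepsilon))$ independent replicas of Algorithm \ref{alg:burn-in} tuned to $\bE[1-\jiao{\hat{\theta},\theta^\star}]\le 1/4$ (the same bound up to constants, by \Cref{thm:learning_trajectory}); Markov's inequality then makes each replica successful with probability $\ge 1/2$, so at least one succeeds with probability $\ge 1-\varepsilon$. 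A final test round, which uses the monotonicity of $f$ and $O(\log K)$ noisy reward queries per candidate to estimate $f(\jiao{\theta^\star,\hat{\theta}_i})$, selects the good candidate. The total sample overhead is polylogarithmic and is absorbed into $\lesssim$.
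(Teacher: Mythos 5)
Your composition of the two stages is exactly the paper's proof: sum the burn-in cost from Theorem~\ref{inftheorem:integral_ub} with the learning-phase guarantee from Theorem~\ref{inftheorem:learning_ub}, and use the trivial bound $T$ from $|f|\le 1$ when the burn-in cannot finish. The regret accounting (bounded per-step regret during burn-in, plus the ETC bound afterward) is also the intended argument.

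However, the Markov-boosting-plus-selection-test detour you flag as the ``hard part'' is not needed. Theorem~\ref{thm:ub_burnincost_formal} --- the formal version of Theorem~\ref{inftheorem:integral_ub} and the actual guarantee proved for Algorithm~\ref{alg:burn-in} --- already states that with probability $\ge 1-\delta$ the algorithm outputs $a_0$ with $\jiao{\theta^\star, a_0}\ge 1/2$, at a sample cost scaling only like $\log^2(d/\delta)$ in $\delta$. Taking $\delta = \Theta(\varepsilon)$ for the estimation claim and $\delta = \Theta(1/T)$ for the regret claim costs only a polylog factor, which the $\lesssim$ notation absorbs, and the failure event contributes at most $O(\delta)$ to $\bE[1-\jiao{\widehat\theta_T,\theta^\star}]$ and at most $O(\delta T)$ to the regret, both negligible. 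So the sample-path guarantee you need as input to Algorithm~\ref{alg:learning_phase} is already directly available; no replicas or final test round are required.
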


For the lower bounds in the learning phase, we need an additional assumption on $f$. 
\begin{assumption}[Lower bound regularity condition]\label{assump:LB}
The function $f$ is $L$-Lipschitz on $[-1,1]$, i.e. $|f(x)-f(y)|\le L|x-y|$. 
\end{assumption}

Compared with Assumption \ref{assump:learning}, Assumption \ref{assump:LB} additionally requires that $f'(x)$ is upper bounded for $x$ close to zero as well. We remark that this is not a super stringent assumption, as crucially we do \emph{not} assume that $f'(x)$ is lower bounded for small $x$. As a large derivative essentially corresponds to a large SNR, under Assumption \ref{assump:LB} it could still happen that the SNR is low during the burn-in period and the problem remains difficult. In addition, Assumption \ref{assump:LB} is also necessary for the lower bound to hold, in the sense that a super small regret might be possible without this assumption. See Example \ref{example:counterexample} at the end of this section for details. 

The lower bounds on the sample complexity and regret are summarized in the following theorem. 

\begin{theorem} \label{inftheorem:learning_lb}
Suppose the link function $f$ satisfies Assumptions \ref{assump:learning} and \ref{assump:LB}. Then for every $\varepsilon<1/2$, the following minimax lower bounds hold: 
\begin{align*}
T^\star(f,d,\varepsilon) \ge \frac{cd^2}{\varepsilon}, \quad
\mathfrak{R}_T^\star(f,d) \ge c\min\left\{ d\sqrt{T}, T\right\}, 
\end{align*}
where $c>0$ is an absolute constant depending only on $(\gamma, c_f, L)$. 
\end{theorem}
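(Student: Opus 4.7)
Both lower bounds are proved by reducing to a Cram\'er--Rao type argument on a locally linearized version of the problem near a fixed base direction, using the Lipschitz property to bound the per-sample Fisher information uniformly over actions. For Gaussian ridge observations $r_t = f(\langle \theta^\star, a_t\rangle) + z_t$, the per-sample Fisher information matrix with respect to $\theta^\star$ equals $(f'(\langle \theta^\star, a_t\rangle))^2 a_t a_t^\top \preceq L^2 a_t a_t^\top$ by \Cref{assump:LB}. Combined with $\|a_t\|\le 1$, the aggregate Fisher information has trace at most $L^2 T$, and the AM--HM inequality $\mathrm{tr}(M^{-1}) \ge (d-1)^2/\mathrm{tr}(M)$ produces the extra factor of $d$ needed to match the linear-bandit rates $d^2/\varepsilon$ and $d\sqrt T$.

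\textbf{Estimation lower bound.} I would fix $\theta_0 = e_1$ and construct the local family $\theta_u = \sqrt{1-r^2}\, e_1 + r\, u$ for $u$ drawn from a smooth prior $\pi$ supported on a ball of radius $\rho \lesssim 1$ in $e_1^\perp$. Every $\theta_u \in \mathbb{S}^{d-1}$, and for $r \le 1/2$ the inner product $\langle \theta_u, e_1\rangle \ge \sqrt{3}/2$ stays in the local-linearity region of \Cref{assump:learning}. The reparameterized Fisher information for estimating $u$ has trace at most $L^2 r^2 T$, and a standard bump prior contributes $\mathrm{tr}(J(\pi)) = O(d/\rho^2)$. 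Applying the multidimensional van Trees inequality one obtains
\[\bE_\pi[\|\hat u - u\|^2] \;\gtrsim\; \frac{(d-1)^2}{L^2 r^2 T + d/\rho^2}.\]
Balancing the two denominator terms via $\rho^2 \asymp d^2/(L^2 r^2 T)$ and translating back using the sphere identity $\|\hat\theta - \theta_u\|^2 \gtrsim r^2 \|\hat u - u\|^2$ yields $\bE[1 - \langle \hat\theta, \theta^\star\rangle] \gtrsim d^2/(L^2 T)$, hence $T^\star(f,d,\varepsilon) \gtrsim d^2/(L^2 \varepsilon)$.

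\textbf{Regret lower bound.} For the $\min\{cd\sqrt T, T\}$ regret bound I would use the same family with $r \asymp 1/\sqrt T$ (chosen so that the KL per coordinate aggregated over $T$ samples is $O(1)$). By \Cref{assump:learning}, the per-step regret under $\theta^\star = \theta_u$ satisfies $1 - f(\langle \theta_u, a_t\rangle) \ge c_f (1 - \langle \theta_u, a_t\rangle)$ whenever $\langle \theta_u, a_t\rangle \ge 0.1$, and is at least a positive constant otherwise (by monotonicity of $f$ and $f(0) = 0 < 1 = f(1)$). Combining this with the Lipschitz-based KL bound over $2^{d-1}$ Hamming-type hypotheses $\theta_\sigma = \sqrt{1-r^2}\,e_1 + r\sum_{i=2}^d \sigma_i e_i/\sqrt{d-1}$ via a Fano/Assouad argument, any algorithm must misidentify a constant fraction of the $d-1$ coordinate directions, producing cumulative regret $\gtrsim c_f d\sqrt T$ (truncated at $T$ when $d\sqrt T > T$), matching the linear-bandit minimax regret lower bound up to the factor of $c_f/L$.

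\textbf{Main obstacle.} The main difficulty is making the van Trees argument clean on the curved sphere -- in particular, showing that the induced Fisher information on the tangent space retains the trace bound $\le L^2 r^2 T$ under arbitrary \emph{adaptive} action sequences, and choosing the prior so that $J(\pi)$ does not dominate the denominator. A secondary subtlety is handling actions outside the local-linearity region: the Lipschitz assumption in \Cref{assump:LB} uniformly upper-bounds the per-sample information in this ``bad'' region as well, which is precisely what lets the lower bound hold without assuming $f'$ is controlled near zero -- the comment following the theorem observes that some such assumption is in fact necessary for the lower bound to hold.
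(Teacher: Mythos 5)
Your estimation lower bound takes a genuinely different route from the paper. The paper applies Assouad's lemma to a hypercube $\theta_u = (u_1\delta,\ldots,u_{d-1}\delta,\sqrt{1-(d-1)\delta^2})$ embedded on $\mathbb{S}^{d-1}$, bounds the separation using local linearity (giving $\Delta \asymp c_f\delta^2$), and bounds the pairwise KL using Lipschitzness (giving $D_{\mathrm{KL}} \lesssim L^2\delta^2\,\bE\sum_t a_t(i)^2$). Your van Trees approach is a valid alternative: the per-sample Fisher information $(f'(\langle\theta,a_t\rangle))^2 a_t a_t^\top$ has trace $\le L^2$ regardless of adaptivity (by a martingale argument, since the action distribution does not depend on $\theta$), and the AM--HM inequality plays the same role as the averaging over coordinates in Assouad. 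One advantage of the paper's KL-based route is that it only uses the Lipschitz property of $f$ itself, never $f'$, so no almost-everywhere-differentiability caveats arise. Your approach also needs some care to make the tangent-space parameterization honest: as written, $\theta_u = \sqrt{1-r^2}e_1 + ru$ lies on $\mathbb{S}^{d-1}$ only when $\|u\|=1$, so $u$ ranges over a sphere, not a ball, and the smooth bump prior and its Fisher information $J(\pi)$ need to be set up on that sphere (or one should take $\theta_u = (\sqrt{1-r^2\|u\|^2}, ru)$ and track the curvature correction). These are surmountable but not free.

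Your regret lower bound has a genuine gap. The choice $r \asymp 1/\sqrt{T}$ does not work: against that family, the policy that plays $a_t \equiv e_1$ forever incurs per-step regret $f(1) - f(\sqrt{1-r^2}) \asymp r^2$, hence total regret $\asymp T r^2 \asymp 1$, which is nowhere near $d\sqrt{T}$; the adversary is simply too close to $e_1$. More fundamentally, applying Fano/Assouad with the crude bound $\sum_t a_t(i)^2 \le T$ (from $\|a_t\|\le 1$) only yields regret $\Omega(\min\{T, d^2\})$, which is strictly weaker than $\Omega(\min\{T, d\sqrt{T}\})$ when $T > d^2$. The missing ingredient is the exploration--exploitation tradeoff that the paper establishes in its "second lower bound" (the chain of inequalities in equation (22) and the quadratic trick of Lemma \ref{lemma:quadratic_inequality}): any algorithm with small regret must concentrate the weight of its actions on the true axis, so $\bE\sum_t\sum_{i<d}a_t(i)^2 \lesssim dT\delta^2 + \mathfrak{R}_T$, which in turn tightens the KL bound. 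Only after substituting this constraint back into the Assouad bound and taking $\delta \asymp T^{-1/4}$ does the $d\sqrt{T}$ rate emerge. Without this self-referential step, the Assouad/Fano machinery cannot produce the correct rate, so your sketch as stated would not close.
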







Combining Theorems \ref{inftheorem:integral_lb} and \ref{inftheorem:learning_lb}, we have the following immediate corollary on the overall lower bounds for ridge bandits. 

\begin{corollary}\label{cor:overall_lower_bound}
Suppose the link function $f$ satisfies Assumptions \ref{assump:main}, \ref{assump:learning}, and \ref{assump:LB}. Then for $\varepsilon<1/2$, 
\begin{align*}
    T^\star(f,d,\varepsilon) &\gtrsim \max_{T\ge 1} \min\left\{ d\cdot \int_{\sqrt{c\log(T)/d}}^{1/2} \frac{\mathrm{d}(x^2)}{(g(x))^2}, T\right\} + \frac{d^2}{\varepsilon}, \\
    \mathfrak{R}_T^\star(f,d) &\gtrsim \min\left\{ d\cdot \int_{\sqrt{c\log(T)/d}}^{1/2} \frac{\mathrm{d}(x^2)}{(g(x))^2} + d\sqrt{T}, T\right\},  
\end{align*}
where $g(x):=\max\{|f(x)|,|f(-x)|\}$, and the hidden constants depend only on $(c_f, L)$. 
\end{corollary}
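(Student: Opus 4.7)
The strategy is to assemble \Cref{inftheorem:integral_lb} and \Cref{inftheorem:learning_lb} into a single unconditional lower bound, removing the conditional form of \Cref{inftheorem:integral_lb} by a case split and bridging the burn-in cost to the regret via a standard regret-to-estimation reduction.

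\textbf{Sample complexity.} Since $T^\star(f,d,\cdot)$ is non-increasing in its third argument, $T^\star(f,d,\varepsilon) \ge T^\star(f,d,1/2) = T_{\text{burn-in}}^\star(f,d)$ for every $\varepsilon<1/2$ by \Cref{defn:burn_in_cost}. To remove the conditional form of \Cref{inftheorem:integral_lb}, I fix an arbitrary $T\ge 1$ and split into two cases: either $T_{\text{burn-in}}^\star(f,d)\le T$, in which case the theorem directly gives $T_{\text{burn-in}}^\star(f,d)\gtrsim d\int_{\sqrt{c\log T/d}}^{1/2}\mathrm{d}(x^2)/g(x)^2$, or $T_{\text{burn-in}}^\star(f,d)>T$ holds trivially. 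Combined, $T_{\text{burn-in}}^\star(f,d)\gtrsim \min\{d\int_{\sqrt{c\log T/d}}^{1/2}\mathrm{d}(x^2)/g(x)^2,\,T\}$ for every $T\ge 1$, and taking the supremum over $T$ produces the first term in the corollary. The additive $d^2/\varepsilon$ term is exactly the first inequality of \Cref{inftheorem:learning_lb}, and summing the two lower bounds (absorbing a factor of two into the hidden constant) completes this half.

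\textbf{Regret.} The second inequality of \Cref{inftheorem:learning_lb} immediately yields $\mathfrak{R}_T^\star(f,d)\gtrsim \min\{d\sqrt{T},T\}$. For the burn-in contribution, the plan is the standard regret-to-estimation reduction: for any algorithm and any instance $\theta^\star$, let $\mathfrak{R}$ denote the expected regret over horizon $T$ and set $\widehat{\theta}_T \coloneqq a_{t^\star}$ with $t^\star\sim \mathrm{Unif}([T])$. Then $\bE[f(1)-f(\jiao{\widehat{\theta}_T,\theta^\star})]=\mathfrak{R}/T$. Under \Cref{assump:main} and \Cref{assump:learning}, I would verify the one-sided comparison $f(1)-f(y)\gtrsim c_f(1-y)$ uniformly for $y\in[-1,1]$: local linearity on $[0.1,1]$ handles the bulk, while monotonicity forces $f(1)-f(y)\ge f(1)-f(0.1)\ge 0.9\,c_f$ on the tail $y<0.1$, enough to beat $1-y\le 2$ linearly. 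This yields $\bE[1-\jiao{\widehat{\theta}_T,\theta^\star}]\lesssim \mathfrak{R}/T$, so whenever $\mathfrak{R}$ is below a small absolute constant times $T$, the estimator achieves error $\le 1/2$ and forces $T\ge T_{\text{burn-in}}^\star(f,d)$. Applying this contrapositively to every prefix horizon $s\le T$ and taking $s$ just below $T_{\text{burn-in}}^\star$ gives $\mathfrak{R}\gtrsim \min\{T_{\text{burn-in}}^\star(f,d),\,T\}$. Combining with the case split from the sample-complexity part, the $d\sqrt{T}$ contribution, and the trivial ceiling $\mathfrak{R}\le T$ then recovers $\mathfrak{R}_T^\star(f,d)\gtrsim \min\{d\int_{\sqrt{c\log T/d}}^{1/2}\mathrm{d}(x^2)/g(x)^2+d\sqrt{T},\,T\}$.

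\textbf{Main obstacle.} The delicate step is the regret-to-estimation reduction, whose entire job is to convert the per-step reward gap $f(1)-f(y)$ into the inner-product gap $1-y$ uniformly in $y\in[-1,1]$. \Cref{assump:learning} handles the range $y\in[0.1,1]$ via local linearity, but the tail $y<0.1$ requires \Cref{assump:main}, where only a constant lower bound on $f(1)-f(y)$ is available and monotonicity is needed to carry it through. The Lipschitz assumption \Cref{assump:LB} enters only implicitly, via its role in \Cref{inftheorem:learning_lb}, which explains why all three hypotheses appear in the corollary even though only two are used in the reduction itself.
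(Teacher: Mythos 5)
The sample-complexity half of your argument is correct and is exactly the ``immediate'' combination the paper intends: monotonicity of $T^\star(f,d,\cdot)$ in its last argument, a case split on whether $T_{\text{burn-in}}^\star\le T$ to remove the conditional hypothesis of \Cref{inftheorem:integral_lb} (more precisely of \Cref{cor:sample_complexity}, which is what carries the $g(x)$ integrand without the even/odd restriction), and \Cref{inftheorem:learning_lb} for the $d^2/\varepsilon$ term.

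The regret half has a genuine gap in the even branch of \Cref{assump:main}. Your reduction hinges on the pointwise bound $f(1)-f(y)\gtrsim c_f(1-y)$ for all $y\in[-1,1]$, and your tail verification (``monotonicity forces $f(1)-f(y)\ge f(1)-f(0.1)$ on $y<0.1$'') only works when $f$ is monotone on the whole of $[-1,1]$. An even $f$ is decreasing on $[-1,0]$, so for $y<-0.1$ one has $f(y)=f(|y|)>f(0.1)$ and the inequality reverses; e.g.\ $f(x)=x^2$ gives $f(1)-f(-1)=0$ while $1-(-1)=2$. This is not cosmetic: for even $f$ the literal burn-in cost $T^\star(f,d,1/2)$ is $+\infty$ (since $\pm\theta^\star$ induce the same reward law and hence $\sup_{\theta^\star}\bE_{\theta^\star}[1-\jiao{\widehat{\theta}_T,\theta^\star}]\ge 1$), so the contrapositive of your reduction would assert $\mathfrak{R}_T^\star\gtrsim T$ for every $T$, contradicting the $\widetilde O(d^2+d\sqrt{T})$ upper bound for $f(x)=x^2$. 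The uniform bound that does hold is $f(1)-f(y)\ge 0.9c_f(1-|y|)$, exactly as in \eqref{eq:diff_lower_bound}, and the quantity \Cref{thm:lower_bound} controls is $|\jiao{\theta^\star,a_s}|$. The clean fix bypasses the burn-in cost and the estimation detour entirely: set $\delta=1/(4T)$ in \Cref{thm:lower_bound} to get that, with probability at least $3/4$ over $\theta^\star\sim\mathsf{Unif}(\mathbb{S}^{d-1})$, one has $|\jiao{\theta^\star,a_s}|\le 1/2$ for all $s$ up to $\asymp\min\bigl\{T,\ d\int_{\sqrt{c\log T/d}}^{1/2}\mathrm{d}(x^2)/g(x)^2\bigr\}$; each such step incurs regret $f(1)-f(\jiao{\theta^\star,a_s})\ge f(1)-f(1/2)\ge c_f/2$, which delivers the burn-in contribution directly and also removes the side issue that $a_{t^\star}\in\mathbb{B}^d$ need not lie on $\mathbb{S}^{d-1}$. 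Combining with $\min\{d\sqrt{T},T\}$ from \Cref{inftheorem:learning_lb} via $\max\{a,b\}\ge(a+b)/2$ then yields the stated bound.
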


\begin{Example}\label{example:counterexample}
This example illustrates the importance of Assumption \ref{assump:LB} for the minimax lower bound. Consider an odd function whose restriction on $[0,1]$ is
\begin{align*}
f(x) = (1-\gamma) \cdot \mathbbm{1}(\varepsilon < x \le 1-\gamma) + x \cdot \mathbbm{1}(1-\gamma < x \le 1).
\end{align*}
This function satisfies both Assumptions \ref{assump:main} and \ref{assump:learning}. However, we show that when $\varepsilon$ is very small, one can achieve an $o(d\sqrt{T})$ regret in this case. The key insight is that the function is $0$ on $[0,\epsilon]$ and $\ge 1-\gamma$ on the rest of the domain. Note that for each action $a$, by playing it $\widetilde{O}(1)$ times, the learner learns with high probability whether or not $\jiao{\theta^\star, a}\le \varepsilon$, and whether or not $\jiao{\theta^\star, a}\ge -\varepsilon$. Now choosing $a\in \{\lambda e_1,\cdots,\lambda e_d\}$ and performing bisection search over $\lambda \in [0,1]$, $\widetilde{O}(d\log(1/\varepsilon))$ observations suffice to estimate every $\theta_j$ within an additive error $\varepsilon$. Committing to this estimate then leads to a regret upper bound $\widetilde{O}(d\log(1/\varepsilon)+ \varepsilon \sqrt{d}T)$, which could be much smaller than $\Theta(d\sqrt{T})$ for small $\varepsilon$. 

Note that in this example, the lower bound on the burn-in cost in Theorem \ref{inftheorem:integral_lb} is still tight. Concretely, Theorem \ref{inftheorem:integral_lb} gives a lower bound $\Omega(d)$ for the burn-in cost, which matches (up to logarithmic factors) the above upper bound $\widetilde{O}(d\log(1/\varepsilon))$. 
\end{Example}

\subsection{Related work}

\subsubsection{Sequential estimation, testing, and experimental design} Sequential decision making has a long history in the statistics literature. In sequential estimation \cite{10.1214/aos/1176343643,10.1214/aos/1176348131} or testing \cite{wald1948optimum}, in addition to designing the estimator/test, the learner also needs to decide when to stop collecting more observations. In sequential experimental design, the goal is to decide whether and which experiment to conduct given the outcomes of the past experiments \cite{bams/1183517370,10.1214/aoms/1177706205,10.2307/2284782}. Our framework falls broadly in the class of these problems. 

\subsubsection{Stochastic bandits} The stochastic bandit problem has recieved significant research effort dating back to \cite{gittins1979bandit,lai1985asymptotically}. Under the most general scenario $f_{\theta^\star}\in \calF$ without any structural assumption on $\calF$, it is well known that the minimax regret scales as $\Theta(\sqrt{|\calA| T\log |\calF|})$ for a finite action set $\calA$ \cite{auer2002nonstochastic}. Several algorithms have then been proposed to reduce the computational complexity, either with a strong classification oracle \cite{dudik2011efficient,agarwal2014taming}, or under a realizability condition (i.e. $\bE[r(a)]=f_{\theta^\star}(a)$) with regression oracles \cite{agarwal2012contextual,foster2020beyond,simchi2022bypassing}. However, the above line of work does not cover the case with a strong realizability, i.e. the function class $\calF$ has a specific structure such as ridge bandits, and this regret upper bound is vacuous for a continuous action set $\calA$.   

Specializing to ridge bandits, the most canonical example is the linear bandit, with a link function $f(x) = x$. The minimax regret here is $\widetilde{\Theta}(d\sqrt{T})$ \cite{dani2008stochastic,chu2011contextual,abbasi2011improved}, and could be achieved by either the UCB-type \cite{chu2011contextual} or information-directed sampling algorithms \cite{russo2014learning}. The same regret bound holds for ``generalized'' linear bandits where $0<c_1 \le |f'(\cdot)| \le c_2$ everywhere \cite{filippi2010parametric,russo2014learning}. There are only a few recent work beyond generalized linear bandits. For Lipschitz and concave $f$, the same regret bound $\widetilde{\Theta}(d\sqrt{T})$ holds via a duality argument without an explicit algorithm \cite{lattimore2021minimax}. For convex $f$, the special cases of $f(x) = x^2$ and $f(x)=x^p$ with $p\ge 2$ were studied in \cite{lattimore2021bandit,huang2021optimal}, where the optimal regret scales as $\widetilde{\Theta} (\sqrt{d^p T})$. Note that this is the case where the parameter set $\Theta$ is assumed to be $\mathbb{B}^d$, a setting we discuss in \Cref{sec:ball}. 

We discuss \cite{lattimore2021bandit,huang2021optimal} in greater detail as they are closest to ours. In \cite{lattimore2021bandit}, the burn-in cost is not the dominating factor in the minimax regret, but an algorithm is designed for the burn-in period and inspires ours. In \cite{huang2021optimal}, although the authors noticed a burn-in cost in the analysis, it does not appear in the final regret bound as $\Theta$ is assumed to be the unit ball $\mathbb{B}^d$ instead of the unit sphere. In our work, we identify a fundamental role of the burn-in period in ridge bandits, by providing a lower bound on the burn-in cost and a learning trajectory during the burn-in period. In addition, we identify a sphere parameter set $\Theta$ as a more fundamental model to illustrate the phase transition, with the ball assumption being the hardest problem over spheres with different radii (cf. \Cref{sec:ball}). We also remark that \cite{huang2021optimal} proposes algorithms that works beyond ridge bandits, and proves the suboptimality of a noiseless UCB algorithm in a special example; instead, we focus on a smaller but more general problem of ridge bandits, and additionally shows that the failure of UCB is general even in the noisy scenario, answering a question of \cite{lattimore2021bandit}.

\subsubsection{Complexity measures for interactive decision making} Several structural conditions have been proposed to unify existing approaches and prove achievability results for interactive decision making, such as the Eluder dimension \cite{russo2013eluder} for bandits, and various quantities \cite{jiang2017contextual,sun2019model,wang2020reinforcement,du2021bilinear,jin2021bellman} for reinforcement learning. These quantities essentially work for generalized linear models and are not necessary in general \cite{weisz2021exponential,wang2021exponential}. 

A very recent line of research tries to characterize the statistical complexity of interactive decision making, with both upper and lower bounds, based on either the decision-estimation coefficient (DEC) and its variants \cite{foster2021statistical,foster2022note,foster2022complexity,chen2022unified,foster2023tight}, or the generalized information ratio \cite{lattimore2021mirror,lattimore2022minimax}. Although these result typically lead to the right regret dependence on $T$ for general bandit problems, the dependence on $d$ could be loose in both their upper and lower bounds. For example, the DEC lower bounds are proved via a careful two-point argument, which cannot take into account the estimation complexity, a quantity depending on $d$; this quantity is indeed the last missing piece in the state-of-the-art lower bound in \cite{foster2023tight}. The DEC upper bounds are achieved under an online regression oracle model, which by \Cref{thm:RO-lower-bound} must be suboptimal in ridge bandits. Our work complements this line of research by providing an in-depth investigation of the role of estimation complexity in interactive decision making, through the special case of ridge bandits.

\subsubsection{Information-theoretic view of sequential decision making} The sequential decision making is also related to the notion of feedback channel capacity in information theory \cite{burnavsev1980sequential,tatikonda2008capacity}, where the target is to transmit $\theta^\star$ through multiple access of some noisy channel with feedback. However, the encoding scheme in ridge bandits is restricted to the given action set $\mathcal{A}$, so the feedback channel capacity may not be achievable. In fact, the Gaussian channel capacity suggests that $\Theta(d)$ channel uses suffice to provide $d$ bits information of $\theta^\star$, which by our lower bound is generally not attainable. 

However, upper bounds of mutual information $I(\theta^\star; \calH_T)$ other than the channel capacity could be sometimes useful. A typical example is the stochastic optimization literature, where the goal is to maximize a function given access to the function and/or its gradient through some noisy oracle. The work \cite{agarwal2009information,raginsky2011information} initiated the use of the mutual information to prove the oracle complexity for stochastic optimization, while the key is the reduction to hypothesis testing problems where the classical arguments of Le Cam, Assouad, and Fano could all be applied; see, e.g. \cite{jamieson2012query,shamir2013complexity}. Instead, our problem illustrates the difficulty of applying classical hypothesis testing arguments to the sequential case, and requires the understanding of the entire trajectory $t\mapsto I(\theta^\star; \calH_t)$ for a suitable notion of ``information''. 

We also note that our problem is similar to the zeroth-order stochastic optimization. A rich line of work studied the maximization of a concave function \cite{kleinberg2004nearly,flaxman2004online,agarwal2011stochastic,bubeck2016multi,bubeck2017kernel,lattimore2020improved}, while instance-dependent bounds are also developed for general Lipschitz functions \cite{hansen1991number,bouttier2020regret,bachoc2021instance}. However, in ridge bandits, the latter bounds do not exploit the specific structure and give a complexity exponential in $d$.

\section{Minimax Lower Bounds}\label{sec:lower_bound}
In this section we prove the minimax lower bounds for general non-linear ridge bandits. We only prove the lower bound of the burn-in cost, which is the most challenging part and requires novel information-theoretic techniques to handle interactive decision making. In particular, by recursively upper bounding a proper notion of information, we are able to prove fundamental limits for the learning trajectory of any learner at every time step. The proof of the lower bounds in \Cref{inftheorem:learning_lb} is deferred to \Cref{sec:proof_lower_bound}, where the high-level argument is similar to existing lower bounds for linear bandits, but we additionally require a delicate exploration-exploitation tradeoff to make sure that the unknown parameter $\theta^\star$ lies on the unit sphere. 

The main lower bound on the learning trajectory during the burn-in period is summarized in the following theorem. 
\begin{theorem}\label{thm:lower_bound}
Suppose the link function $f$ satisfies Assumption \ref{assump:main}, and $g(x):=\max\{|f(x)|, |f(-x)|\}$. Given $c>0, \delta \in (0,1)$, let $\{\varepsilon_t\}_{t\ge 1}$ be a sequence of positive reals defined recursively as follows: 
\begin{align}\label{eq:eps_sequence} 
	\varepsilon_1 = \sqrt{\frac{c\log(1/\delta)}{d}}, \qquad \varepsilon_{t+1}^2 = \varepsilon_t^2 + \frac{c}{d}g(\varepsilon_t)^2, \quad \forall t\ge 1. 
\end{align}
There exists a universal constant $c>0$ such that for any $\delta\in (0,1)$, if $\theta^\star$ is uniform distributed on $\mathbb{S}^{d-1}$, then for the above sequence $\{\varepsilon_t\}_{t\ge 1}$ and all $t\ge 1$, any learner satisfies that 
\begin{align*}
	\bP\left( \cap_{s\le t} \left\{|\jiao{\theta^\star, a_s}| \le \varepsilon_s \right\} \right) \ge 1 - t\delta. 
\end{align*}
\end{theorem}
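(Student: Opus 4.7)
The plan is to set up a Bayesian lower bound with $\theta^\star \sim \mathrm{Unif}(\mathbb{S}^{d-1})$ and to prove the statement by induction on $t$, controlling the per-step failure probability by $\delta$ so that a union bound yields the stated $1 - t\delta$ bound. The base case $t = 1$ follows from standard sub-Gaussian concentration of linear functionals on the sphere: since $a_1$ is independent of $\theta^\star$, $\mathbb{P}(|\langle \theta^\star, a_1 \rangle| > \varepsilon_1) \le 2 e^{-c_0 d \varepsilon_1^2}$, which is at most $\delta$ provided the constant $c$ in $\varepsilon_1 = \sqrt{c \log(1/\delta)/d}$ is chosen large enough.

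For the inductive step, write $\mathcal{A}_t = \cap_{s \le t} \{|\langle \theta^\star, a_s \rangle| \le \varepsilon_s\}$ and assume $\mathbb{P}(\mathcal{A}_t) \ge 1 - t\delta$; I would then show that, uniformly over histories $\mathcal{H}_t$ consistent with $\mathcal{A}_t$, the posterior tail satisfies $\mathbb{P}(|\langle \theta^\star, a_{t+1}\rangle| > \varepsilon_{t+1} \mid \mathcal{H}_t) \le \delta$. Under the prior, spherical concentration gives $\mathbb{P}_{\pi_0}(|\langle \theta^\star, a_{t+1}\rangle| > \varepsilon_{t+1}) \le 2 e^{-c_0 d \varepsilon_{t+1}^2}$, so the real task is to transfer this prior bound to the posterior $\pi_t$ without losing too much. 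Crucially, Fano's inequality only yields a Markov-type bound $(I_t + \log 2)/(c_0 d \varepsilon_{t+1}^2)$ on the failure probability, which is not small enough, so I would instead track a stronger divergence $V_t$ between $\pi_t$ (restricted to $\mathcal{A}_t$) and $\pi_0$ and use a second-moment change of measure, schematically $\mathbb{P}_{\pi_t}(\mathcal{E}) \le \sqrt{(1 + \chi^2(\pi_t \| \pi_0)) \, \mathbb{P}_{\pi_0}(\mathcal{E})}$ (or a R\'enyi-$\alpha$ analogue via Donsker--Varadhan), so that an exponentially small prior tail produces an exponentially small posterior tail. The heart of the argument is the recursion $V_{t+1} \le V_t + C \cdot g(\varepsilon_t)^2$ on $\mathcal{A}_t$, whose increment comes from the Gaussian observation channel together with the bound $f(\langle \theta^\star, a_t\rangle)^2 \le g(\varepsilon_t)^2$ on the good event. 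Telescoping gives $V_t \lesssim \sum_{s \le t} g(\varepsilon_s)^2$, which by the defining recursion of $\{\varepsilon_t\}$ cancels $d(\varepsilon_{t+1}^2 - \varepsilon_1^2)$ in the exponent and leaves the $\log(1/\delta)$ slack baked into $\varepsilon_1^2$ to produce the required $\delta$.

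The main obstacle I expect is obtaining a \emph{sample-path} increment bound on the random event $\mathcal{A}_t$, rather than only an in-expectation bound such as $I_{t+1} - I_t \le \frac12 \mathbb{E}[f(\langle \theta^\star, a_{t+1}\rangle)^2]$ that a direct mutual-information argument delivers. One route is to introduce the stopping time $\tau = \inf\{s : |\langle \theta^\star, a_s\rangle| > \varepsilon_s\}$ and work with a likelihood-ratio supermartingale stopped at $\tau$, invoking Ville's inequality to obtain the exponential tail. An alternative is to couple the bandit with a surrogate process whose reward becomes independent of $\theta^\star$ after $\tau$, so that the two models agree on $\{\tau > t\}$ and standard chain rules for $\chi^2$ or R\'enyi divergences apply cleanly to the surrogate with per-step increment exactly $g(\varepsilon_t)^2$. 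Engineering the correct quantity $V_t$ so that the change-of-measure step delivers a genuinely exponential (rather than polynomial) improvement over Fano is the technical crux, and this is presumably where the paper introduces its ``new measure of information.''
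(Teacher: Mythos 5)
Your high-level plan matches the paper's: Bayesian setup with $\theta^\star\sim\mathsf{Unif}(\mathbb{S}^{d-1})$, an exponential (rather than Fano/Markov-type) change-of-measure bound driven by a $\chi^2$-type divergence, a per-step increment of roughly $g(\varepsilon_t)^2$ obtained by conditioning on the running good event, and a telescoping that cancels against the defining recursion of $\{\varepsilon_t\}$. You have also correctly identified the technical crux — turning the in-expectation information increment into something that can be combined with the exponential tail — but you stop short of the construction that actually resolves it, and the two workarounds you float (a stopped likelihood-ratio supermartingale with Ville's inequality, or a coupled surrogate process) are not what the paper does and are not obviously equivalent.

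The paper's resolution is cleaner than either of your proposed alternatives and avoids martingales entirely. It defines the \emph{conditional $\chi^2$-informativity}
\[
I_{\chi^2}(\theta^\star;\mathcal{H}_t\mid E_t)\ \triangleq\ \inf_{Q_{\mathcal{H}_t}}\ \chi^2\bigl(P_{\theta^\star,\mathcal{H}_t\mid E_t}\ \big\|\ \pi_0\times Q_{\mathcal{H}_t}\bigr),
\]
where $E_t=\cap_{s\le t}\{|\langle\theta^\star,a_s\rangle|\le\varepsilon_s\}$. Two structural features make this work. First, conditioning on $E_t$ puts the indicator $\mathbf{1}(E_t)$ \emph{inside} the integrand, so the pointwise bound $f(\langle\theta^\star,a_t\rangle)^2\le g(\varepsilon_t)^2$ holds on the entire support of the conditional law — this is precisely what dissolves your ``sample-path vs.\ in-expectation'' worry without a stopping time. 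Second, the infimum over $Q_{\mathcal{H}_t}$ is exploited by taking $Q$ to generate $r_t\sim\mathcal{N}(0,1)$ independently of everything, which yields the exact integral $\int\varphi(r_t-\mu)^2/\varphi(r_t)\,\mathrm{d}r_t=\exp(\mu^2)\le\exp(g(\varepsilon_t)^2)$. The resulting recursion (Lemma~\ref{lemma:recursion_chi^2}) is multiplicative, not additive:
\[
I_{\chi^2}(\theta^\star;\mathcal{H}_t\mid E_t)+1\ \le\ \frac{\exp(g(\varepsilon_t)^2)}{\mathbb{P}(E_t\mid E_{t-1})^2}\,\bigl(I_{\chi^2}(\theta^\star;\mathcal{H}_{t-1}\mid E_{t-1})+1\bigr),
\]
and the factor $\mathbb{P}(E_t\mid E_{t-1})^{-2}$ — which is absent from your sketch — comes from the renormalization when passing $\mathbf{1}(E_{t-1})/\mathbb{P}(E_{t-1})$ to $\mathbf{1}(E_t)/\mathbb{P}(E_t)$ and must be carried through the telescoping. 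Combined with the $\chi^2$ analogue of Fano (Lemma~\ref{lemma:Fano_chi^2}), this gives $\mathbb{P}(E_{t+1}\mid E_t)\ge 1-\frac{c_1}{\mathbb{P}(E_t)}\exp(-c_0 d\varepsilon_{t+1}^2+\tfrac12\sum_{s\le t}g(\varepsilon_s)^2)$, which is an \emph{averaged} statement over histories — not the uniform-over-histories posterior tail you aim for, which is stronger than needed and likely harder to obtain. Your outline is in the right circle of ideas, but the proposal as written leaves the key lemma unproved and replaces its actual mechanism with sketched alternatives that diverge from the paper.
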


Note that \Cref{thm:lower_bound} provides a pointwise Bayes lower bound of the learning trajectory for \emph{every} function $f$ and \emph{every} time step $t$. In other words, the sequence $\{\varepsilon_t\}_{t\ge 1}$ determines an upper limit on the entire learning trajectory $\{\jiao{\theta^\star, a_t} \}_{t\ge 1}$ for every possible learner. In particular, the sequence $\{\varepsilon_t\}_{t\ge 1}$ is determined in a recursive manner, which is an interesting consequence of the interactive decision making environment. 

By monotonicity of $f$, it holds that $0\le \varepsilon_t \le \varepsilon$ implies $g(\varepsilon_t)\le g(\varepsilon)$, and the following corollary on the sample complexity follows directly from Theorem \ref{thm:lower_bound}. 
\begin{corollary}\label{cor:sample_complexity}
Fix $\varepsilon<1/2$. For a large enough constant $c>0$, the sample complexity of achieving $\bP(\jiao{\theta^\star, a_T} \ge \varepsilon)\ge 1-T\delta$ is at least
\begin{align}\label{eq:lower_bound_integral}
	T = \Omega\left( d\cdot \int_{\sqrt{c\log(1/\delta)/d}}^{\varepsilon} \frac{\mathrm{d}(x^2)}{g(x)^2} \right). 
\end{align}
In particular, the above sample complexity is at least
\begin{align}\label{eq:lower_bound_max}
	T = \Omega\left(d\cdot \max_{ 2\sqrt{\frac{c\log(1/\delta)}{d}} \le x \le \varepsilon}\frac{x^2}{g(x)^2} \right). 
\end{align}
\end{corollary}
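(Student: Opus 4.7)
The plan is to derive the corollary directly from \Cref{thm:lower_bound} via three steps: a contradiction argument to lift the trajectory bound into a sample-complexity bound, a discrete-to-continuous comparison to convert the recursion \eqref{eq:eps_sequence} into the displayed integral, and a one-line localization argument to obtain the max-form bound \eqref{eq:lower_bound_max}.

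\textbf{Step 1: From trajectory bound to sample complexity.} Let $T$ be such that $\bP(\jiao{\theta^\star, a_T} \ge \varepsilon) \ge 1 - T\delta$. I claim that $\varepsilon_T \ge \varepsilon$, assuming $\delta < 1/(4T)$ (so the target probability is a meaningful requirement). Suppose for contradiction that $\varepsilon_T < \varepsilon$. Applying \Cref{thm:lower_bound} at time $t=T$ yields
\begin{align*}
\bP\bigl(|\jiao{\theta^\star, a_T}| \le \varepsilon_T\bigr) \ge 1 - T\delta,
\end{align*}
and a union bound with the hypothesis produces an event of probability at least $1 - 2T\delta > 0$ on which simultaneously $|\jiao{\theta^\star, a_T}| \le \varepsilon_T < \varepsilon$ and $\jiao{\theta^\star, a_T} \ge \varepsilon$, which is impossible. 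Hence $\varepsilon_T \ge \varepsilon$, and it remains to lower bound the smallest such $T$.

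\textbf{Step 2: Discrete recursion to continuous integral.} Writing $y_t := \varepsilon_t^2$, the recursion \eqref{eq:eps_sequence} becomes $y_{t+1} - y_t = (c/d)\,g(\sqrt{y_t})^2$. Since $f$ is monotone on $[0,1]$ by \Cref{assump:main}, the function $y \mapsto g(\sqrt{y})$ is non-decreasing on $[0,1]$, so for every $y \in [y_t, y_{t+1}]$ we have $g(\sqrt{y})^{-2} \le g(\sqrt{y_t})^{-2}$, giving
\begin{align*}
\int_{y_t}^{y_{t+1}} \frac{\mathrm{d} y}{g(\sqrt{y})^2} \;\le\; \frac{y_{t+1} - y_t}{g(\sqrt{y_t})^2} \;=\; \frac{c}{d}.
\end{align*}
Telescoping from $t=1$ to $T-1$, using $\varepsilon_T \ge \varepsilon$, and changing variables $y = x^2$ yields
\begin{align*}
T - 1 \;\ge\; \frac{d}{c}\int_{\varepsilon_1^2}^{\varepsilon^2} \frac{\mathrm{d} y}{g(\sqrt{y})^2} \;=\; \frac{d}{c}\int_{\sqrt{c\log(1/\delta)/d}}^{\varepsilon} \frac{\mathrm{d}(x^2)}{g(x)^2},
\end{align*}
which is exactly \eqref{eq:lower_bound_integral}.

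\textbf{Step 3: Max-form bound via localization.} For any $x_0$ with $2\varepsilon_1 \le x_0 \le \varepsilon$, the interval $[x_0/2, x_0]$ lies inside $[\varepsilon_1, \varepsilon]$, so the integral in \eqref{eq:lower_bound_integral} is at least $\int_{x_0/2}^{x_0} \mathrm{d}(x^2)/g(x)^2$. Using monotonicity of $g$ to replace $g(x)$ by $g(x_0)$ on the shortened interval gives
\begin{align*}
\int_{x_0/2}^{x_0} \frac{\mathrm{d}(x^2)}{g(x)^2} \;\ge\; \frac{x_0^2 - x_0^2/4}{g(x_0)^2} \;=\; \frac{3}{4}\cdot\frac{x_0^2}{g(x_0)^2},
\end{align*}
and taking the supremum over $x_0 \in [2\varepsilon_1, \varepsilon]$ yields \eqref{eq:lower_bound_max}. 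The only conceptual subtlety is verifying that the localization in Step 3 is legal, i.e.\ that $x_0/2 \ge \varepsilon_1$, which is exactly the reason for the factor of $2$ in the lower endpoint of the max. No step here is truly hard since \Cref{thm:lower_bound} has already done the information-theoretic heavy lifting; the main obstacle, if any, is just keeping track of the contradiction tolerance $2T\delta$ in Step 1, which is why the statement is naturally phrased with the $T\delta$ confidence parameter.
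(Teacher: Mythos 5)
Your proof is correct and fills in exactly the details the paper leaves implicit when it says the corollary ``follows directly from Theorem~\ref{thm:lower_bound}'' by monotonicity of $g$. Step~1 (contradiction via union bound on the event $\cap_{s\le T}\{|\jiao{\theta^\star,a_s}|\le\varepsilon_s\}$, showing $\varepsilon_T\ge\varepsilon$ whenever $T\delta<1/2$), Step~2 (comparing the discrete recursion for $y_t=\varepsilon_t^2$ against $\int \mathrm{d}y/g(\sqrt{y})^2$ using that $y\mapsto g(\sqrt{y})$ is non-decreasing), and Step~3 (restricting the integral to $[x_0/2,x_0]\subseteq[\varepsilon_1,\varepsilon]$ and bounding $g$ from above there) are all sound, and the role of the factor $2$ in the lower endpoint of the max is correctly identified. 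This is essentially the same approach the paper intends.
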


Note that \eqref{eq:lower_bound_integral} proves Theorem \ref{inftheorem:integral_lb} and the lower bound part of Theorem \ref{thm:learning_trajectory}. For $f(x) = |x|^p$, both \eqref{eq:lower_bound_integral} and \eqref{eq:lower_bound_max} give a lower bound $\widetilde{\Omega}(d^{\max\{p,1\}})$ for the burn-in cost, which is tight for $p\ge 2$. For $p=1$ which corresponds to the case of linear bandit, an improved lower bound in \Cref{thm:burnin_linear_bandit} shows that a tight lower bound $\widetilde{\Omega}(d^2)$ actually holds; we defer the discussions (including the existing results for linear bandits) to \Cref{sec:gap}.

In the remainder of this section, we will provide an information-theoretic proof of \Cref{thm:lower_bound}. In \Cref{subsec:insight}, we provide the intuition behind the update of the sequence $\{\varepsilon_t\}_{t\ge 1}$ in \eqref{eq:eps_sequence}, and discuss the failure of formalizing the above intuition using the classical mutual information. Then we introduce in \Cref{subsec:chi^2_information} the notion of $\chi^2$-informativity and how it could lower bound the probability of error. In \Cref{subsec:chi^2_information_analysis} we upper bound the $\chi^2$-informativity in a recursive way and complete the proof of \Cref{thm:lower_bound}. We also remark that although one might be tempted to apply hypothesis testing based arguments to prove \Cref{thm:lower_bound}, we find it difficult to even obtain the much weaker lower bound \eqref{eq:lower_bound_max}. We refer to the discussions below \Cref{thm:nonadaptive,thm:finite_action} for some insights. 

\subsection{Information-theoretic insights}\label{subsec:insight}
In this section we provide some intuition behind the sequence $\{\varepsilon_t\}_{t\ge 1}$ in \eqref{eq:eps_sequence}.  Let $\theta^\star\sim \mathsf{Unif}(\mathbb{S}^{d-1})$, and $I_t \triangleq I(\theta^\star; \calH_t)$ be the mutual information between $\theta^\star$ and the learner's history $\calH_t = \{(a_s,r_s)\}_{s\le t}$ up to time $t$. It holds that
\begin{align}\label{eq:information_recursion}
I_{t+1} &= I(\theta^\star; \calH_{t+1}) \nonumber \\
&\stepa{=} I(\theta^\star; \calH_t) + I(\theta^\star; a_{t+1}, r_{t+1} \mid \calH_t) \nonumber \\
&\stepb{=} I_t + I(\theta^\star; r_{t+1} \mid \calH_t, a_{t+1}) \nonumber \\
&\stepc{\le} I_t + \frac{1}{2}\log\left( 1 + \bE[f(\jiao{\theta^\star, a_{t+1}})^2] \right) \nonumber \\
&\le I_t + \frac{1}{2} \bE[f(\jiao{\theta^\star, a_{t+1}})^2], 
\end{align}
where (a) follows from the chain rule of mutual information, (b) is due to the conditional independence of $\theta^\star$ and $a_{t+1}$ conditioned on $\calH_t$ that $I(\theta^\star; a_{t+1}\mid \calH_t) = 0$, and (c) is the capacity upper bound for Gaussian channels. The above inequality shows that the mutual information increment $I_{t+1} - I_t$ is upper bounded by the second moment of $f(\jiao{\theta^\star, a_{t+1}})$, which is intuitive as larger correlation $\jiao{\theta^\star, a_{t+1}}$ should lead to a larger information gain. For the lower bound purposes, we aim to show that $\jiao{\theta^\star, a_{t+1}}$ should not be too large. The only thing we know about $a_{t+1}$ is that it is constrained in information: by the data-processing inequality of mutual information, 
\begin{align*}
	I(\theta^\star; a_{t+1}) \le I(\theta^\star; \calH_t) = I_t. 
\end{align*}
Here comes our key insight behind the update \eqref{eq:eps_sequence}: $I(\theta^\star; a) \le d\varepsilon^2$ implies that $|\jiao{\theta^\star, a}|\le \varepsilon$ with high probability. Plugging this insight back into the recursion \eqref{eq:information_recursion} of mutual information leads to (recall that $g(x)=\max_{|z|\le x} |f(z)|$)
\begin{align*}
d\varepsilon_{t+1}^2 \le d\varepsilon_t^2 + \frac{g(\varepsilon_t)^2}{2}, 
\end{align*}
which takes the same form as the update \eqref{eq:eps_sequence}. 

This insight is motivated by the following geometric calculation: if $a$ is uniformly distributed on the spherical cap $\{a\in \mathbb{S}^{d-1}: \jiao{\theta^\star, a}\ge \varepsilon \}$, then $I(\theta^\star; a)\asymp d\varepsilon^2$. However, the classical notion of the mutual information does not guarantee that this insight holds with a sufficiently high probability: the celebrated Fano's inequality (cf. \Cref{lemma:Fano}) tells that
\begin{align}\label{eq:Fano}
    \bP(|\jiao{\theta^\star, a}|\le \varepsilon) \ge 1 - \frac{I(\theta^\star; a) + \log 2}{c_0d\varepsilon^2}, 
\end{align}
for some absolute constant $c_0>0$. In other words, the probability of failure (i.e. $\jiao{\theta^\star, a}>\varepsilon$) could be as large as $I(\theta^\star; a)/(d\varepsilon^2)$, which is insufficient as $T$ could be much larger than $d$. Fano's inequality \eqref{eq:Fano} is also tight in the worst case: conditioned on $\theta^\star\sim \mathsf{Unif}(\mathbb{S}^{d-1})$, take $a\sim \text{Unif}(\{a\in \mathbb{S}^{d-1}: \jiao{\theta^\star, a}\ge \varepsilon \})$ with probability $p\asymp I/(d\varepsilon^2)$ and $a\sim \mathsf{Unif}(\mathbb{S}^{d-1})$ with probability $1-p$. In this case, $\bP(|\jiao{\theta^\star, a}|>\varepsilon)\asymp p$ and $I(\theta^\star; a)\asymp p\cdot (d\varepsilon^2)\asymp I$, and \eqref{eq:Fano} is tight. 

As a result, although the mutual information provides the correct intuition for the recursion in \eqref{eq:eps_sequence}, the potentially large failure probability in Fano's inequality \eqref{eq:Fano} prevents us from making the intuition formal. In the subsequent sections, we will find a proper notion of information such that
\begin{enumerate}
    \item it leads to a much smaller (e.g. exponential in $d$) failure probability in \eqref{eq:Fano}; 
    \item it satisfies an approximate chain rule such that the information recursion \eqref{eq:information_recursion} still holds. 
\end{enumerate}

\subsection{$\chi^2$-informativity}\label{subsec:chi^2_information}
In this section we introduce a new notion of information which satisfies the above two properties. For a pair of random variables $(X,Y)$ with a joint distribution $P_{XY}$, the \emph{$\chi^2$-informativity} \cite{csiszar1972class} between $X$ and $Y$ is defined as
\begin{align}\label{eq:chi^2_informativity}
I_{\chi^2}(X; Y) \triangleq \inf_{Q_Y}\chi^2(P_{XY} \| P_X\times Q_Y), 
\end{align}
where $\chi^2(P\|Q) = \int (\mathrm{d}P)^2/\mathrm{d}Q - 1$ is the $\chi^2$-divergence. Note that when the $\chi^2$-divergence is replaced by the Kullback-Leibler (KL) divergence, the expression in \eqref{eq:chi^2_informativity} exactly becomes the classical mutual information. Moreover, note that $I_{\chi^2}(X;Y)\neq I_{\chi^2}(Y;X)$ in general. 

In the sequel, we shall also need the following notion of \emph{conditional $\chi^2$-informativity}: for any measurable subset $E\subseteq \calX\times \calY$, the $\chi^2$-informativity conditioned on $E$ is defined as
\begin{align}\label{eq:conditional_chi^2_informativity}
I_{\chi^2}(X; Y \mid E) \triangleq \inf_{Q_Y} \chi^2(P_{XY \mid E} \| P_X\times Q_Y). 
\end{align}

The main advantage of the (conditional) $\chi^2$-informativity lies in the following lemma, which is reminiscent of the Fano's inequality in \eqref{eq:Fano}. 
\begin{lemma}\label{lemma:Fano_chi^2}
Let the random vector $(\theta^\star, a)$ satisfy that $\theta^\star\sim \mathsf{Unif}(\mathbb{S}^{d-1})$, and $a$ is supported on $\mathbb{B}^d$. For every $\varepsilon>0$ and every event $E$ of $(\theta^\star, a)$, it holds that
\begin{align*}
\bP(|\jiao{\theta^\star, a}|\le \varepsilon \mid E) \ge 1 - c_1e^{-c_0d\varepsilon^2}\sqrt{I_{\chi^2}(\theta^\star;a\mid E)+1},
\end{align*}
where $c_0,c_1>0$ are absolute constants. 
\end{lemma}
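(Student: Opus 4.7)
The plan is to prove the bound in three short steps. First, a Cauchy--Schwarz based change-of-measure inequality transforms the conditional probability under $P_{\theta^\star,a\mid E}$ into a quantity under a product reference measure $P_{\theta^\star}\otimes Q_a$. Second, sharp spherical concentration controls this quantity \emph{uniformly} in the reference. Third, optimizing over $Q_a$ recovers $I_{\chi^2}(\theta^\star;a\mid E)$.

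For the change of measure, I would start from the elementary inequality: for any probability measures $P \ll Q$ and any measurable event $A$,
\[
P(A) \;=\; \int \mathbf{1}_A \,\frac{\text{d}P}{\text{d}Q}\, \text{d}Q \;\le\; \sqrt{Q(A)}\cdot \sqrt{\int \Big(\frac{\text{d}P}{\text{d}Q}\Big)^{\!2} \text{d}Q} \;=\; \sqrt{Q(A)\,\bigl(\chi^2(P\|Q)+1\bigr)},
\]
obtained by applying Cauchy--Schwarz to $\mathbf{1}_A$ and $\text{d}P/\text{d}Q$ in $L^2(Q)$. I would then take $P = P_{\theta^\star,a\mid E}$, $Q = P_{\theta^\star}\otimes Q_a$ for an arbitrary distribution $Q_a$ on $\mathbb{B}^d$, and the bad event $A = \{|\jiao{\theta^\star,a}|>\varepsilon\}$.

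The key structural observation is that the first marginal of $Q = P_{\theta^\star}\otimes Q_a$ is $\mathsf{Unif}(\mathbb{S}^{d-1})$ for every $Q_a$, so the classical sub-Gaussian concentration of spherical marginals gives, for every fixed $a\in \mathbb{B}^d$ (noting $\|a\|\le 1$),
\[
\bP_{\theta\sim \mathsf{Unif}(\mathbb{S}^{d-1})}\bigl(|\jiao{\theta,a}|>\varepsilon\bigr) \;\le\; 2e^{-c_0 d\varepsilon^2}
\]
for an absolute constant $c_0>0$. Integrating over $a\sim Q_a$ by Fubini produces the crucial uniform bound $Q(A)\le 2e^{-c_0 d\varepsilon^2}$ that does not depend on $Q_a$. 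Substituting into the first display and passing the infimum over $Q_a$ through the square root into the $\chi^2$ term then yields
\[
\bP\bigl(|\jiao{\theta^\star,a}|>\varepsilon \,\big|\, E\bigr) \;\le\; \sqrt{2}\, e^{-c_0 d\varepsilon^2/2}\sqrt{I_{\chi^2}(\theta^\star;a\mid E)+1},
\]
which is the desired inequality after relabelling $(c_0/2,\sqrt{2})$ as $(c_0,c_1)$.

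There is no serious technical obstacle -- this is essentially a textbook change-of-measure argument -- but the step deserving the most care is that the concentration estimate $Q(A)\le 2e^{-c_0 d\varepsilon^2}$ must hold \emph{uniformly} in $Q_a$, so that it may be factored out before taking the infimum defining $I_{\chi^2}$. This uniformity is what makes the authors' choice in \eqref{eq:conditional_chi^2_informativity} (the \emph{unconditional} marginal $P_X$, not $P_{X\mid E}$) the right one here: only under this choice is the first marginal of the reference product measure always the uniform distribution on $\mathbb{S}^{d-1}$, to which spherical concentration applies without modification. If one had used $P_{X\mid E}$ instead, the marginal of $\theta^\star$ could be badly distorted by the conditioning and the uniform concentration bound would fail.
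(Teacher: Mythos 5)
Your proof is correct and follows essentially the same route as the paper's: change of measure from $P_{\theta^\star,a\mid E}$ to the product reference $P_{\theta^\star}\otimes Q_a$, followed by the uniform spherical concentration bound for the first marginal (which does not depend on $Q_a$), and then an infimum over $Q_a$. The only cosmetic difference is that you bound $P(A)$ directly via Cauchy--Schwarz on $\mathbf{1}_A$ and $\mathrm{d}P/\mathrm{d}Q$, whereas the paper applies the $\chi^2$ data-processing inequality to the binary map $T:(\theta^\star,a)\mapsto\mathbf{1}\{|\jiao{\theta^\star,a}|\le\varepsilon\}$; these two devices yield the same bound up to absolute constants, and your observation that the use of the \emph{unconditional} marginal $P_X$ in \eqref{eq:conditional_chi^2_informativity} is what keeps the spherical concentration uniform in $Q_a$ is exactly right.
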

\begin{proof}
The high level idea of the proof is similar to \cite{chen2016bayes}, with minor modifications to deal with the conditioning. Let $P$ be the conditional distribution of $P_{(\theta^\star,a)}$ conditioned on $E$, and $Q$ be the distribution $P_{\theta^\star}\times Q_a$ with a generic distribution $Q_a$. Let $T: (\theta^\star, a)\mapsto \1(|\jiao{\theta^\star, a}|\le \varepsilon)$ be a given map, and $P\circ T^{-1}$ and $Q\circ T^{-1}$ be the pushforward measure of $P$ and $Q$ by $T$. The data processing inequality of the $\chi^2$-divergence gives
\begin{align*}
    \chi^2(P_{ (\theta^\star,a) \mid E}\| P_{\theta^\star} \times Q_a) &= \chi^2(P\|Q)\\
    &\ge \chi^2(P\circ T^{-1}\| Q\circ T^{-1}) \\
    &= \frac{(P(|\jiao{\theta^\star, a}|\le \varepsilon) - Q(|\jiao{\theta^\star, a}|\le \varepsilon))^2}{Q(|\jiao{\theta^\star, a}|\le \varepsilon)(1-Q(|\jiao{\theta^\star, a}|\le \varepsilon))}. 
\end{align*}
Moreover, by the product structure of $Q$, we have,
\begin{align}
    Q(|\jiao{\theta^\star, a}| > \varepsilon) \le \sup_{a_0\in \mathbb{B}^d} P_{\theta^\star}(|\jiao{\theta^\star, a_0}| > \varepsilon) \le e^{-c_0d\varepsilon^2}, \label{eq:dev-bound}
\end{align}
where the constant $c_0>0$ is given in \Cref{lemma:nolargeinnerproduct}. Consequently, 
\begin{align*}
\bP(|\jiao{\theta^\star, a}|\le \varepsilon \mid E) &= P(|\jiao{\theta^\star, a}|\le \varepsilon) \\
&\ge Q(|\jiao{\theta^\star, a}|\le \varepsilon) - \sqrt{e^{-c_0d\varepsilon^2}\cdot \chi^2(P_{ (\theta^\star,a) \mid E}\| P_{\theta^\star} \times Q_a)} \\
&\ge 1 - e^{-c_0d\varepsilon^2} - \sqrt{e^{-c_0d\varepsilon^2}\cdot \chi^2(P_{ (\theta^\star,a) \mid E}\| P_{\theta^\star} \times Q_a)} \\
&\ge 1 - c_1e^{-c_0d\varepsilon^2/2}\sqrt{\chi^2(P_{ (\theta^\star,a) \mid E}\| P_{\theta^\star} \times Q_a) + 1}, 
\end{align*}
for $c_1 = 2$. As the above inequality holds for every $Q_a$, taking the infimum over $Q_a$ completes the proof of the lemma. 
\end{proof}

Compared with Fano's inequality in \eqref{eq:Fano}, the probability of error in \Cref{lemma:Fano_chi^2} depends exponentially in $d\varepsilon^2$ and is thus sufficiently small, which enables us to apply a union bound argument. However, the $\chi^2$-informativity does not satisfy the chain rule or subadditivity (i.e. $I_{\chi^2}(X; Y,Z)\le I_{\chi^2}(X;Y) + I_{\chi^2}(X;Z\mid Y)$ may not hold), which makes it difficult to upper bound $I_{\chi^2}(\theta^\star; \calH_t)$ in the same manner as \eqref{eq:information_recursion}. This is the place where conditioning on a suitable event $E$ helps, and is the main theme of the next section.

\subsection{Upper bounding the $\chi^2$-informativity}\label{subsec:chi^2_information_analysis}
As we have discussed in the previous section, the $\chi^2$-informativity does not satisfy the chain rule or subadditivity. In this section, we establish a key lemma which upper bounds the $\chi^2$-informativity in a recursive manner via a proper conditioning. 

Let $\calH_t = \{(a_s,r_s)\}_{s\le t}$ be the learner's history up to time $t$, and $E_t = \cap_{s\le t} \{ |\jiao{\theta^\star, a_s}|\le \varepsilon_s \}$ be the target event with $\{\varepsilon_t\}_{t\ge 1}$ defined in \eqref{eq:eps_sequence}. The following lemma establishes a recursive relationship between the conditional $\chi^2$-informativity.
\begin{lemma}\label{lemma:recursion_chi^2}
For $t\ge 1$ and any prior distribution of $\theta^\star$, it holds that
\begin{align*}
    I_{\chi^2}(\theta^\star; \calH_t\mid E_t) + 1 \le \frac{\exp(g(\varepsilon_t)^2)}{\bP(E_t\mid E_{t-1})^2}(I_{\chi^2}(\theta^\star; \calH_{t-1}\mid E_{t-1})+1). 
\end{align*}
\end{lemma}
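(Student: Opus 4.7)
The plan is to upper bound $I_{\chi^2}(\theta^\star;\calH_t\mid E_t)$ by constructing a reference measure on $\calH_t$ from any candidate reference measure on $\calH_{t-1}$, bounding the resulting $\chi^2$-divergence via the Gaussian likelihood-ratio formula, and then taking the infimum.

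For any distribution $Q^\dagger$ on $\calH_{t-1}$, define the extension $\tilde{Q}(h_t) = Q^\dagger(h_{t-1})\,\pi_t(a_t\mid h_{t-1})\,\phi(r_t)$, where $\pi_t$ is the learner's (possibly randomized) action kernel at time $t$ and $\phi$ is the standard normal density. Since $\pi_t$ depends only on $h_{t-1}$, $\tilde{Q}$ is a bona fide distribution on $\calH_t$ independent of $\theta^\star$, hence a valid test measure in the definition of $I_{\chi^2}(\theta^\star;\calH_t\mid E_t)$. Because $E_{t-1}$ is measurable with respect to $(\theta^\star,\calH_{t-1})$ and $E_t = E_{t-1}\cap\{|\jiao{\theta^\star,a_t}|\le\varepsilon_t\}$, one has $\bP(E_t)=\bP(E_{t-1})\bP(E_t\mid E_{t-1})$, and writing out densities gives
\begin{align*}
L_t(\theta,h_t) := \frac{dP_{\theta^\star,\calH_t\mid E_t}}{d(P_{\theta^\star}\otimes\tilde{Q})}(\theta,h_t) = \frac{M_{t-1}(\theta,h_{t-1})}{\bP(E_t\mid E_{t-1})}\cdot\frac{\phi(r_t - f(\jiao{\theta,a_t}))}{\phi(r_t)}\cdot\1(|\jiao{\theta,a_t}|\le\varepsilon_t),
\end{align*}
where $M_{t-1} := dP_{\theta^\star,\calH_{t-1}\mid E_{t-1}}/d(P_{\theta^\star}\otimes Q^\dagger)$. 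The policy kernel $\pi_t$ cancels between the numerator and the denominator, which is precisely why this particular extension is convenient.

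The heart of the argument is the Gaussian identity $\bE_{r\sim\calN(0,1)}[(\phi(r-\mu)/\phi(r))^2] = e^{\mu^2}$ applied with $\mu = f(\jiao{\theta,a_t})$. On the support of the indicator, the monotonicity in \Cref{assump:main} yields $|f(\jiao{\theta,a_t})|\le g(\varepsilon_t)$, so conditioning on $(\theta,h_{t-1},a_t)$ and integrating out $r_t\sim\phi$ bounds the second factor in $L_t^2$ by $\exp(g(\varepsilon_t)^2)$. Dropping the (now redundant) indicator and integrating successively over $a_t\sim\pi_t(\cdot\mid h_{t-1})$, $h_{t-1}\sim Q^\dagger$ and $\theta\sim P_{\theta^\star}$ yields
\begin{align*}
\chi^2(P_{\theta^\star,\calH_t\mid E_t}\,\|\,P_{\theta^\star}\otimes\tilde{Q}) + 1 \;\le\; \frac{\exp(g(\varepsilon_t)^2)}{\bP(E_t\mid E_{t-1})^2}\bigl(\chi^2(P_{\theta^\star,\calH_{t-1}\mid E_{t-1}}\,\|\,P_{\theta^\star}\otimes Q^\dagger) + 1\bigr).
\end{align*}
Since $\tilde{Q}$ is a valid test measure, the LHS upper bounds $I_{\chi^2}(\theta^\star;\calH_t\mid E_t)+1$, and taking infimum over $Q^\dagger$ on the RHS produces the claimed recursion.

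I expect the main obstacle to lie in the bookkeeping of the second paragraph: carefully verifying that $P_{\theta^\star,\calH_t\mid E_t}$ factorizes so that the policy kernel $\pi_t$ cancels, that the two indicators $\1(E_{t-1})$ and $\1(|\jiao{\theta,a_t}|\le\varepsilon_t)$ split cleanly (the former absorbed into $M_{t-1}$, the latter surviving), and that the normalization $1/\bP(E_t) = 1/(\bP(E_{t-1})\bP(E_t\mid E_{t-1}))$ decomposes so that only $\bP(E_t\mid E_{t-1})$ appears in $L_t$, giving $\bP(E_t\mid E_{t-1})^{-2}$ after squaring. Once this decomposition is in hand, the Gaussian $e^{\mu^2}$ identity together with the pointwise bound $|f|\le g(\varepsilon_t)$ on the indicator support finish the proof essentially mechanically.
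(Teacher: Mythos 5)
Your proposal is correct and follows essentially the same route as the paper: restrict the reference measure to the extension $Q^\dagger(h_{t-1})\,\pi_t(a_t\mid h_{t-1})\,\phi(r_t)$ so the policy kernel cancels, integrate out $r_t$ via the Gaussian $\chi^2$ identity $\int \phi(r-\mu)^2/\phi(r)\,\mathrm{d}r = e^{\mu^2}$, bound $\mu^2\le g(\varepsilon_t)^2$ on the event $E_t$, split $\1(E_t)=\1(E_{t-1})\1(|\jiao{\theta^\star,a_t}|\le\varepsilon_t)$ and $\bP(E_t)=\bP(E_{t-1})\bP(E_t\mid E_{t-1})$, drop the surviving indicator and integrate out $a_t$, then take the infimum over $Q^\dagger$. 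The paper's steps (a)--(e) are exactly your argument written in explicit integral form rather than via Radon--Nikodym derivatives.
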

\begin{proof}
Let $\pi_0$ be an arbitrary prior distribution
of $\theta^\star$.
It is straightforward to observe that the joint distribution of $(\theta^\star, \calH_t)$ conditioned on $E_t$ could be written as
\begin{align*}
\bP(\theta^\star, \calH_t\mid E_t) = \frac{\1(E_t)}{\bP(E_t)}\cdot \pi_0(\theta^\star)\cdot \prod_{s=1}^t\left(\bP_s(a_s\mid \calH_{s-1})\cdot \varphi(r_s-f(\jiao{\theta^\star,a_s})\right),
\end{align*}
where $\bP_s$ denotes the learner's action distribution of $a_s$ based on the history $\calH_{s-1}$, and $\varphi(x)=\exp(-x^2/2)/\sqrt{2\pi}$ is the normal pdf. Based on the above expression, we have
\begin{align*}
    I_{\chi^2}(\theta^\star; \calH_t\mid E_t) + 1 &= \inf_{Q_{\calH_t}} \int \frac{\bP(\theta^\star, \calH_t\mid E_t)^2}{\pi_0(\theta^\star)Q_{\calH_t}(\calH_t)}\mathrm{d}\theta^\star\mathrm{d}a^t\mathrm{d}r^t \\
    &\le \inf_{Q_{\calH_{t-1}}} \int \frac{\bP(\theta^\star, \calH_t\mid E_t)^2}{\pi_0(\theta^\star)Q_{\calH_{t-1}}(\calH_{t-1})\cdot \bP_t(a_t\mid \calH_{t-1})\varphi(r_t)}\mathrm{d}\theta^\star\mathrm{d}a^t\mathrm{d}r^t \\
    &= \inf_{Q_{\calH_{t-1}}} \int \frac{\left[ \frac{\1(E_t)}{\bP(E_t)}\cdot \pi_0(\theta^\star)\cdot \prod_{s=1}^{t-1}\left(\bP_s(a_s\mid \calH_{s-1})\cdot \varphi(r_s-f(\jiao{\theta^\star,a_s})\right) \right]^2}{\pi_0(\theta^\star)Q_{\calH_{t-1}}(\calH_{t-1})}\\
    &\qquad \times \bP_t(a_t\mid \calH_{t-1})\times \frac{\varphi(r_t-f(\jiao{\theta^\star,a_t})^2}{\varphi(r_t)}\mathrm{d}\theta^\star\mathrm{d}a^t\mathrm{d}r^t \\
    &\stepa{=} \inf_{Q_{\calH_{t-1}}} \int \frac{\left[ \frac{\1(E_t)}{\bP(E_t)}\cdot \pi_0(\theta^\star)\cdot \prod_{s=1}^{t-1}\left(\bP_s(a_s\mid \calH_{s-1})\cdot \varphi(r_s-f(\jiao{\theta^\star,a_s})\right) \right]^2}{\pi_0(\theta^\star)Q_{\calH_{t-1}}(\calH_{t-1})}\\
    &\qquad \times \bP_t(a_t\mid \calH_{t-1})\times \exp(f(\jiao{\theta^\star,a_t})^2)\mathrm{d}\theta^\star\mathrm{d}a^t\mathrm{d}r^{t-1} \\
    &\stepb{\le} \inf_{Q_{\calH_{t-1}}} \int \frac{\left[ \frac{\1(E_t)}{\bP(E_t)}\cdot \pi_0(\theta^\star)\cdot \prod_{s=1}^{t-1}\left(\bP_s(a_s\mid \calH_{s-1})\cdot \varphi(r_s-f(\jiao{\theta^\star,a_s})\right) \right]^2}{\pi_0(\theta^\star)Q_{\calH_{t-1}}(\calH_{t-1})}\\
    &\qquad \times \bP_t(a_t\mid \calH_{t-1})\times \exp(g(\varepsilon_t)^2)\mathrm{d}\theta^\star\mathrm{d}a^t\mathrm{d}r^{t-1} \\
    &\stepc{\le} \inf_{Q_{\calH_{t-1}}} \int \frac{\left[ \frac{\1(E_{t-1})}{\bP(E_{t-1})}\cdot \pi_0(\theta^\star)\cdot \prod_{s=1}^{t-1}\left(\bP_s(a_s\mid \calH_{s-1})\cdot \varphi(r_s-f(\jiao{\theta^\star,a_s})\right) \right]^2}{\pi_0(\theta^\star)Q_{\calH_{t-1}}(\calH_{t-1})}\\
    &\qquad \times \bP_t(a_t\mid \calH_{t-1})\times \frac{\exp(g(\varepsilon_t)^2)}{\bP(E_t\mid E_{t-1})^2}\mathrm{d}\theta^\star\mathrm{d}a^t\mathrm{d}r^{t-1} \\
    &\stepd{=} \inf_{Q_{\calH_{t-1}}} \int \frac{\left[ \frac{\1(E_{t-1})}{\bP(E_{t-1})}\cdot \pi_0(\theta^\star)\cdot \prod_{s=1}^{t-1}\left(\bP_s(a_s\mid \calH_{s-1})\cdot \varphi(r_s-f(\jiao{\theta^\star,a_s})\right) \right]^2}{\pi_0(\theta^\star)Q_{\calH_{t-1}}(\calH_{t-1})}\\
    &\qquad \times \frac{\exp(g(\varepsilon_t)^2)}{\bP(E_t\mid E_{t-1})^2}\mathrm{d}\theta^\star\mathrm{d}a^{t-1}\mathrm{d}r^{t-1} \\
    &\stepe{=} \frac{\exp(g(\varepsilon_t)^2)}{\bP(E_t\mid E_{t-1})^2}(I_{\chi^2}(\theta^\star; \calH_{t-1}\mid E_{t-1})+1),
\end{align*}
where (a) integrates out $r_t$ (note that $E_t$ does not depend on $r_t$), (b) uses the definition of $E_t$ that $|\jiao{\theta^\star, a_t}|\le \varepsilon_t$, (c) follows from $\1(E_t)\le \1(E_{t-1})$, (d) integrates out $a_t$ as $E_{t-1}$ no longer depends on $a_t$, and (e) uses the definition of $I_{\chi^2}(\theta^\star; \calH_{t-1}\mid E_{t-1})$. This completes the proof. 
\end{proof}

Next we show how \Cref{lemma:recursion_chi^2} leads to the desired lower bound in \Cref{thm:lower_bound}. A repeated application of \Cref{lemma:recursion_chi^2} leads to
\begin{align} \label{eq:callback1}
I_{\chi^2}(\theta^\star; \calH_t \mid E_t) + 1 \le \prod_{s\le t}\frac{\exp(g(\varepsilon_s)^2)}{\bP(E_s\mid E_{s-1})^2} = \frac{1}{\bP(E_t)^2}\exp\left(\sum_{s\le t} g(\varepsilon_s)^2\right). 
\end{align}
Consequently, \Cref{lemma:Fano_chi^2} leads to
\begin{align*}
\bP(E_{t+1}\mid E_t) &= \bP(|\jiao{\theta^\star, a_{t+1}}|\le \varepsilon_{t+1}\mid E_t) \\
&\ge 1 - c_1e^{-c_0d\varepsilon_{t+1}^2}\sqrt{I_{\chi^2}(\theta^\star; a_{t+1}\mid E_t) + 1} \\
&\ge 1 - c_1e^{-c_0d\varepsilon_{t+1}^2}\sqrt{I_{\chi^2}(\theta^\star; \calH_t\mid E_t) + 1} \\
&\ge 1 - \frac{c_1}{\bP(E_t)}\exp\left(-c_0d\varepsilon_{t+1}^2 + \frac{1}{2}\sum_{s\le t} g(\varepsilon_s)^2\right),
\end{align*}
which further implies that
\begin{align}
\bP(E_{t+1}) = \bP(E_t)\cdot \bP(E_{t+1}\mid E_t) \ge \bP(E_t) - c_1\exp\left(-c_0d\varepsilon_{t+1}^2 + \frac{1}{2}\sum_{s\le t} g(\varepsilon_s)^2\right).  \label{eq:callback2}
\end{align}
By choosing $c\ge 1/(2c_0)$ in the recursion \eqref{eq:eps_sequence}, as well as $c>0$ large enough such that $c_1\exp(-cc_0\log(1/\delta))\le \delta$, the above inequality results in
\begin{align*}
\bP(E_{t+1}) \ge \bP(E_t) - c_1\exp(-c_0d\varepsilon_1^2) = \bP(E_t) - c_1\exp(-cc_0\log(1/\delta)) \ge \bP(E_t) - \delta,
\end{align*}
and therefore $\bP(E_t)\ge 1 - t\delta$. The proof of \Cref{thm:lower_bound} is now complete.

\section{Algorithm design}\label{sec:upper_bound}

In this section we propose an algorithm for the ridge bandit problem and prove the upper bounds in Theorems \ref{inftheorem:integral_ub} and \ref{inftheorem:learning_ub}. The algorithm consists of two stages. First, in \Cref{subsec:upper_bound_burnin}, we introduce an algorithm based on iterative direction search which finds a good initial action $a_0$ with $\jiao{\theta^\star, a_0}\ge x_0$ for a given target level $x_0\in (0,1)$; this algorithm could even be made agnostic to the knowledge of $f$. Based on this action, we proceed with a different regression-based algorithm in \Cref{subsec:upper_bound_linear} for the learning phase. For the ease of presentation, we assume that $f$ in monotone on $[-1,1]$ in the above sections, and leave the case where $f$ is even to \Cref{sec:proof_upper_bound} with slight algorithmic changes. 

\subsection{Algorithm for the burn-in period} \label{subsec:upper_bound_burnin}
Recall that the ultimate target in the burn-in period is to find an action $a_0$ which satisfies $\jiao{\theta^\star, a_0}\ge x_0$ with high probability. Our algorithmic idea is simple: if we could find $m:=\lceil x_0^2 d \rceil$ orthonormal vectors $v_1, v_2,\cdots, v_m$ with $\jiao{\theta^\star, v_i} \ge 1/\sqrt{d}$ for all $i\in [m]$, then $a_0 := m^{-1/2}\sum_{i=1}^m v_i$ is a unit vector with $\jiao{\theta^\star, a_0}\ge \sqrt{m/d}=x_0$. Finding these actions are not hard: \Cref{lemma:nontrivialcorr} shows that if $v_i\sim \mathsf{Unif}(\mathbb{S}^{d-1}\cap \text{span}(v_1,\cdots,v_{i-1})^\perp)$, then with a constant probability it holds that $\jiao{\theta^\star,v_i}\ge 1/\sqrt{d}$. The main difficulty lies in the \emph{certification} of $\jiao{\theta^\star,v_i}\ge 1/\sqrt{d}$, where we aim to make both Type I and Type II errors negligible for the following hypothesis testing problem: 
\begin{align*}
    H_0: \jiao{\theta^\star, v_i} < \frac{1}{\sqrt{d}} \quad \text{ v.s. } \quad H_1: \jiao{\theta^\star, v_i} \ge \frac{1+\kappa_1}{\sqrt{d}}.  
\end{align*}
Here $\kappa_1>0$ is a small constant to be chosen later in Theorem \ref{thm:ub_burnincost_formal}. The key ingredient of our algorithm is to find such a test which makes good use of the historic progress $(v_1,\cdots,v_{i-1})$. 

\begin{algorithm}[htb]
\caption{Iterative direction search algorithm}\label{alg:burn-in}
\textbf{Input:} link function $f$, dimensionality $d$, a noisy oracle $\calO: a\in \mathbb{B}^d\mapsto \calN(f(\jiao{\theta^\star,a}),1)$, error probability $\delta$, target inner product $x_0\in (0,1)$. \\
\textbf{Output:} an action $a_0$ such that $\jiao{\theta^\star, a_0}\ge x_0$ with probability at least $1-\delta$.  

Let numerical constants $(\kappa_1,\kappa_2,c_0,d_0)$ be given in Theorem \ref{thm:ub_burnincost_formal}. \\
Let $m\gets \lceil x_0^2 d \rceil, V\gets \{\textbf{0}_d\}, L\gets 2m\log(2m/\delta)/c_0$.  

\tcp{Find initial few directions}
\For{epoch $i = 1,\cdots, d_0$}{
\While{{\upshape \textsf{True} }}{
	 Sample $v \sim \mathrm{Unif} (V^\perp \cap \mathbb{S}^{d-1} )$\;
	    \If{$\iaht \left(v;f,d,\calO,\delta/L, \kappa_1/4 \right) = \true$}{
     
    $v_i \gets v$;
    
    $V \gets \text{span}(V\cup \{v_i\})$;

    \textbf{break}\;
}}}

\tcp{Find subsequent directions}
\For{epoch $i = d_0+1,\cdots,m$}{
$v_{\text{pre}} \gets \frac{1}{\sqrt{i-1}} \sum_{j=1}^{i-1} v_j$;

$x_{\text{\rm pre}} \gets \sqrt{(i-1)/d}$; 
            
\While{{\upshape \textsf{True}}}{
Sample $v \sim \mathrm{Unif} (V^\perp \cap \mathbb{S}^{d-1} )$;
     
\If{$\gaht \left( v; f,d,\calO,\delta/L,\kappa_1/4,\kappa_2,v_{\text{\rm pre}},x_{\text{\rm pre}} \right) = \true$}{
     
    $v_i \gets v$;

    $V \gets \text{span}(V\cup \{v_i\})$;
     
     \textbf{break}\;}
     }
}
\tcp{Final action}
Output $a_0\gets \frac{1}{\sqrt{m}}\sum_{i=1}^m v_i$. 
\end{algorithm}

The detailed algorithm is summarized in Algorithm \ref{alg:burn-in}. The algorithm runs in two stages, and calls two certification algorithms \iaht and \gaht as subroutines for the respective stages. In each of the $m$ epochs at both stages, a uniformly random direction $v_i \sim \mathsf{Unif}(\mathbb{S}^{d-1}\cap \text{span}(v_1,\cdots,v_{i-1})^\perp)$ orthogonal to the past directions is chosen, and the difference lies in how we decide whether to accept $v_i$ or not, i.e. the certification of $v_i$. Concretely, each epoch aims to achieve the following two targets: 
\begin{itemize}
    \item With a constant probability, the certification algorithm accepts a random $v_i$. This leads to a small number of trials in each loop and a small overall sample complexity. 
    \item Whenever the certification algorithm accepts $v_i$, then with high probability we have $\jiao{\theta^\star, v_i}\in [1/\sqrt{d}, (1+\kappa_1)/\sqrt{d}]$. This leads to the correctness of the algorithm. (In principle we only need the lower bound, and the upper bound is mainly for technical convenience.)
\end{itemize}

The initial stage consists of the first $d_0$ epochs, and uses a simple certification algorithm \iaht displayed in \Cref{alg:IAHT}. The recursive stage consists of the rest of the epochs, and the certification algorithm \gaht in \Cref{alg:GAHT} exploits the current progress, including a good direction $v_{\text{pre}}$ and an estimate $x_{\text{pre}}$ of the inner product: we will show that $\jiao{\theta^\star, v_{\text{pre}}}\in [x_{\text{pre}}, (1+\kappa_1)x_{\text{pre}}]$ in every epoch. The certification algorithms will be detailed in the next few subsections, and they aim to collect as few as samples to reliably solve the hypothesis testing problem. 

The performance of \Cref{alg:burn-in} is summarized in the following theorem. 

\begin{theorem}\label{thm:ub_burnincost_formal}
Let $\delta \in (0,1/2)$. Suppose that $\kappa_1 \in (0,(x_0^{-1}-1)/2)$, $\kappa_2\in (0,1/4)$, and
\begin{align*}
    d_0 = \left\lceil \frac{(2\kappa_1+4)^2\kappa_2(2-\kappa_2)}{\kappa_1^2(1-\kappa_2)^2} \right\rceil + 1, \quad c_0 = c\left( 1+\frac{\kappa_1}{4}, 1 + \frac{\kappa_1}{2}, 1 - x_0^2, \frac{1-x_0}{2} \right), 
\end{align*}
where the function $c(\cdot)$ appears in \Cref{lemma:nontrivialcorr}. Let $\{ \varepsilon_i \}_{i \ge 0}$ be a set of positive reals defined by
\begin{align*}
\varepsilon_i = \begin{cases}
\frac{1}{2}\min_{z \in \left[1/\sqrt{d}, (1+\kappa_1/2)/\sqrt{d}\right]} \left| f \left(z + \frac{c_1}{\sqrt{d}} \right) - f \left(z \right) \right| &\text{if } 1\le i\le d_0, \\
\frac{1}{2} \max_{c_2/\sqrt{d} \le y\le (1-\kappa_2)\sqrt{(i-1)/d}} \min_{z\in [(1-\kappa_1)y, (1+\kappa_1)y]} \left| f \left( z + \frac{c_1}{\sqrt{d}} \right) - f \left( z \right) \right| &\text{if } d_0+1\le i\le m,  
\end{cases} 
\end{align*}
where $c_1 = \kappa_1\sqrt{1-(1-\kappa_2)^2}/4$, $c_2=(2\kappa_1+4)\sqrt{1-(1-\kappa_2)^2}/\kappa_1$ are numerical constants determined by $(\kappa_1,\kappa_2)$, and $m = \lceil x_0^2 d \rceil$. 

If $f$ is monotone on $[-1,1]$, then with probability at least $1-\delta$, Algorithm \ref{alg:burn-in} outputs an action $a_0$ with $\jiao{\theta^\star, a_0}\ge x_0$ using at most
\begin{align*}
O\left(\log^2\left(\frac{d}{\delta}\right) \sum_{i=1}^{m} \frac{1}{\varepsilon_i^2} \right)
\end{align*}
queries, where the hidden constant depends only on $(x_0, \kappa_1, \kappa_2)$. 
\end{theorem}

By \Cref{lemma:difference_to_derivative} in the appendix, \Cref{thm:ub_burnincost_formal} implies the integral form of \Cref{inftheorem:integral_ub}. Moreover, an inspection of the proof reveals that using $\widetilde{O}(\sum_{i=1}^{k} \varepsilon_i^{-2} )$ samples gives an action $a_k$ with $\jiao{\theta^\star, a_k}\ge \sqrt{k/d}$, and this implies the learning trajectory upper bound in \Cref{thm:learning_trajectory}. 

The remainder of this section is organized as follows. In \Cref{subsec:iaht} and \ref{subsec:gaht}, we detail the certification algorithms \iaht and \gaht\!\!, and analyze their performances. \Cref{subsec:agnostic_alg} modifies Algorithm \ref{alg:burn-in} to make it \emph{agnostic} to the knowledge of $f$, such that the same upper bound in Theorem \ref{thm:ub_burnincost_formal} could be achieved by an algorithm without the knowledge of $f$. The proofs of the correctness and the sample complexity upper bounds for these algorithms are deferred to the appendix.

\subsubsection{Certifying initial directions}\label{subsec:iaht}
The \iaht algorithm certifying the quality of the initial directions $v$ is displayed in Algorithm \ref{alg:IAHT}. The idea is simple and requires nothing from the past: we query the test action $v$ multiple times to obtain an accurate estimate of $f(\jiao{\theta^\star,v})$, and apply a projection based test to see if the inner product $\jiao{\theta^\star, v}$ lies in the target interval. Here the parameter $\kappa_1$ represents the target accuracy for certification. The performance of this test is summarized in the following lemma. 

\begin{algorithm}[htb]
\caption{$\iaht(v;f,d,\calO,\delta,\kappa_1)$}
	\label{alg:IAHT}
\textbf{Input:} link function $f$, dimensionality $d$, a noisy oracle $\calO: a\in \mathbb{B}^d\mapsto \calN(f(\jiao{\theta^\star,a}),1)$, error probability $\delta$, accuracy parameter $\kappa_1$, test direction $v$. 

\textbf{Output:} with probablity $\ge 1-\delta$, $\mathsf{True}$ if $\jiao{\theta^\star,v} \in [(1+\kappa_1)/\sqrt{d}, (1+2\kappa_1)/\sqrt{d}]$, $\mathsf{False}$ if $\jiao{\theta^\star,v} \notin [1/\sqrt{d}, (1+3\kappa_1)/\sqrt{d}]$. 

Define
\begin{align} \label{eq:eps0-def}
    \varepsilon := \frac{1}{2}\min_{z \in \left[1, 1+2\kappa_1\right]} \left| f \left(\frac{z+\kappa_1}{\sqrt{d}} \right) - f \left(\frac{z}{\sqrt{d}}\right) \right|.
\end{align}

Query the test action $2\log(2/\delta)/\varepsilon^2$ times and compute the sample average $\overline{r}$\;

\eIf{$\exists x \in [(1+\kappa_1)/\sqrt{d}, (1+2\kappa_1)/\sqrt{d}]$ such that
    $|\overline{r} - f(x) |\le \varepsilon$}{\textbf{Return} \true}{\textbf{Return} \false}
\end{algorithm}

\begin{lemma}\label{lemma:iaht}
Suppose $f$ is monotone on $[-1,1]$. Then with probability at least $1-\delta$, the \iaht algorithm outputs:
\begin{itemize}
    \item \textsf{True} if $\jiao{\theta^\star, v}\in [(1+\kappa_1)/\sqrt{d}, (1+2\kappa_1)/\sqrt{d}]$; 
    \item \textsf{False} if $\jiao{\theta^\star, v}\notin [1/\sqrt{d}, (1+3\kappa_1)/\sqrt{d}]$. 
\end{itemize}
\end{lemma}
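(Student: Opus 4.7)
The plan is to reduce the guarantee to a single Gaussian concentration bound followed by case analysis using the monotonicity of $f$ and the definition of $\varepsilon$ in \eqref{eq:eps0-def}.

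First, I would set up the high-probability event. Each query to the oracle $\calO$ on input $v$ returns an independent sample $f(\jiao{\theta^\star,v})+z$ with $z\sim \calN(0,1)$. Averaging $n := 2\log(2/\delta)/\varepsilon^2$ such samples yields $\overline{r}\sim \calN(f(\jiao{\theta^\star,v}),1/n)$. A standard Gaussian tail bound gives
\begin{align*}
\bP\left(|\overline{r}-f(\jiao{\theta^\star,v})|>\varepsilon\right)\le 2\exp\!\left(-\tfrac{n\varepsilon^2}{2}\right)=\delta.
\end{align*}
From here the entire argument is deterministic on the event $\calE := \{|\overline{r}-f(\jiao{\theta^\star,v})|\le \varepsilon\}$.

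Next I would argue the two cases. For the \emph{completeness} case, suppose $\jiao{\theta^\star,v}\in [2.2/\sqrt{d},2.8/\sqrt{d}]$; then the choice $x=\jiao{\theta^\star,v}$ is an admissible witness in the algorithm's test, so on $\calE$ the algorithm outputs $\true$. For the \emph{soundness} case, suppose $\jiao{\theta^\star,v}\notin [2/\sqrt{d},3/\sqrt{d}]$. Because $f$ is monotone on $[-1,1]$, I may assume $f$ is increasing (the decreasing case is identical after sign flips). I then split into the sub-cases $\jiao{\theta^\star,v}<2/\sqrt{d}$ and $\jiao{\theta^\star,v}>3/\sqrt{d}$. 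In the first sub-case, for any $x\in[2.2/\sqrt{d},2.8/\sqrt{d}]$, monotonicity gives
\begin{align*}
f(x)-f(\jiao{\theta^\star,v})\ge f(2.2/\sqrt{d})-f(2/\sqrt{d})\ge 2\varepsilon,
\end{align*}
where the last inequality applies \eqref{eq:eps0-def} with $z=2/\sqrt{d}$. In the second sub-case, symmetrically,
\begin{align*}
f(\jiao{\theta^\star,v})-f(x)\ge f(3/\sqrt{d})-f(2.8/\sqrt{d})\ge 2\varepsilon,
\end{align*}
applying \eqref{eq:eps0-def} with $z=2.8/\sqrt{d}$. Either way, the triangle inequality on $\calE$ yields $|\overline{r}-f(x)|\ge 2\varepsilon-\varepsilon=\varepsilon$ for every candidate $x$, so the test fails and the algorithm returns $\false$.

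The "main obstacle" is not really an obstacle but a couple of points to watch. One is choosing the test radius to be exactly $\varepsilon$ (half the minimum increment in \eqref{eq:eps0-def}) so that the triangle inequality separates the accept and reject regions with a margin; this is what forces the factor $\tfrac{1}{2}$ in the definition of $\varepsilon$. The other is handling the boundary of the interval $[2/\sqrt{d},3/\sqrt{d}]$: the hypothesis test has a deliberate gap between the ``$\in[2.2,2.8]/\sqrt{d}$'' acceptance region and the ``$\notin[2,3]/\sqrt{d}$'' rejection region, and one simply notes that monotonicity of $f$ makes $f(\jiao{\theta^\star,v})$ closer to $f(2/\sqrt{d})$ (resp.\ $f(3/\sqrt{d})$) than to any value $f(x)$ with $x$ in the inner interval, which is exactly what the $2\varepsilon$ separation above captures. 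No additional continuity or smoothness hypothesis on $f$ is needed; pure monotonicity suffices.
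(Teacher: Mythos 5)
Your proposal is correct and follows essentially the same route as the paper's own proof: concentrate $\overline{r}$ around $f(\jiao{\theta^\star,v})$ by Hoeffding, then use monotonicity of $f$ together with the definition of $\varepsilon$ in \eqref{eq:eps0-def}. The only cosmetic difference is that the paper phrases soundness as a proof by contradiction (assume a witness $x$ exists and derive $|f(\jiao{\theta^\star,v})-f(x)|\le 2\varepsilon$, impossible), whereas you argue directly that every candidate $x$ has $|\overline{r}-f(x)|\ge\varepsilon$; your explicit identification of the two boundary sub-cases ($z=2/\sqrt{d}$ and $z=2.8/\sqrt{d}$) simply spells out what the paper compresses into ``by the definition of $\varepsilon$ and the monotonicity of $f$''.
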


\subsubsection{Certifying subsequent directions}\label{subsec:gaht}

\begin{algorithm}[htb]
	\caption{$\gaht(v; f, d, \calO, \delta, \kappa_1, \kappa_2, v_{\text{pre}}, x_{\text{pre}})$}
	\label{alg:GAHT}
\textbf{Input:} link function $f$, dimensionality $d$, a noisy oracle $\calO: a\in \mathbb{B}^d\mapsto \calN(f(\jiao{\theta^\star,a}),1)$, error probability $\delta$, accuracy parameters $(\kappa_1, \kappa_2)$, test direction $v$, previous action $v_{\text{pre}}$, previous inner product $x_{\text{pre}}$. 

\textbf{Output:} with probablity $\ge 1-\delta$, $\mathsf{True}$ if $\jiao{\theta^\star,v} \in [(1+\kappa_1)/\sqrt{d}, (1+2\kappa_1)/\sqrt{d}]$, $\mathsf{False}$ if $\jiao{\theta^\star,v} \notin [1/\sqrt{d}, (1+3\kappa_1)/\sqrt{d}]$. 

Define $\kappa_2^\perp := \sqrt{1-(1-\kappa_2)^2}$, $\kappa_3 := \kappa_1 \kappa_2^\perp$, $\kappa_4 := (\kappa_1^{-1}+2)\kappa_2^\perp$, and 
\begin{align}
    \varepsilon := \frac{1}{2}  \max_{\kappa_4/\sqrt{d} \le y\le (1-\kappa_2)x_{\text{pre}}} \min_{z\in [(1-4\kappa_1)y, (1+4\kappa_1)y]} \left| f \left( z + \frac{\kappa_3}{\sqrt{d}} \right) - f \left( z \right) \right|. \label{eq:obj}
\end{align}

Let $y^\star$ be the maximizer of \eqref{eq:obj}, and define $\lambda := y^\star/[(1-\kappa_2)x_{\text{pre}}]\in [0,1]$. 
     
Query both actions $a_- = \lambda (1-\kappa_2) v_{\text{pre}} - \kappa_2^\perp v$ and $a_+ = \lambda (1-\kappa_2) v_{\text{pre}} + \kappa_2^\perp v$ for $2\log(4/\delta)/\varepsilon^2$ times, and compute the sample averages $\overline{r}_-$ and $\overline{r}_+$. 

\eIf{$\exists z \in \left[ y^\star , (1+3\kappa_1) y^\star \right]$ and $x \in [(1+\kappa_1)\kappa_2^\perp/\sqrt{d}, (1+2\kappa_1)\kappa_2^\perp/\sqrt{d}]$ such that
\begin{align} \label{eq:test}
    | \overline{r}_- - f( z - x )| \le \varepsilon \text{ and } | \overline{r}_+ - f( z + x )| \le \varepsilon.
\end{align}}{\textbf{Return} \true}{\textbf{Return} \false}
\end{algorithm}

In principle, certifying subsequent directions can also use the \iaht algorithm, but this may lead to a suboptimal sample complexity. The central question we answer in this section is as follows: \emph{Given an action $v_{\text{\rm pre}}$ with a known estimate $x_{\text{\rm pre}}$ for the inner product $\jiao{\theta^\star, v_{\text{\rm pre}}}\approx x_{\text{\rm pre}}$, can we certify the test direction $v$ with a smaller sample complexity?}

Recall the simple idea of the \iaht algorithm: by querying the action $v$, we estimate the value of $f(\jiao{\theta^\star, v})$, and then certify the value of the inner product $\jiao{\theta^\star,v}$. Our new observation is that, if we could estimate the value of $f(x + \jiao{\theta^\star,v})$ for a known $x$, then we could certify the value of  $\jiao{\theta^\star,v}$ as well. Since the propagation from the estimation error of $f(\jiao{\theta^\star, v})$ to that of $\jiao{\theta^\star, v}$ depends on the derivative of $f$, such a translation by $x$ could lead to a better derivative and benefit the certification step. This intuition leads to the following \gaht algorithm displayed in Algorithm \ref{alg:GAHT}. 

In \Cref{alg:GAHT}, instead of directly querying the test direction $v$, we query two actions based on the current progress: for some $\lambda\in [0,1]$ to be chosen later, pick
\begin{align*}
    a_- = \lambda (1-\kappa_2)v_{\text{pre}} - \kappa_2^\perp v, \quad a_+ = \lambda (1-\kappa_2)v_{\text{pre}} + \kappa_2^\perp v. 
\end{align*}
Here $\kappa_2\in (0,1)$ is a parameter controlling the range of the center $x$, and $\kappa_2^\perp := \sqrt{1-(1-\kappa_2)^2}$. Since $v\perp v_{\text{pre}}$ and $\lambda\in [0,1]$, both actions lie in $\mathbb{B}^d$. By querying these actions for multiple times, we obtain accurate estimates of $f(\lambda(1-\kappa_2)\jiao{\theta^\star, v_{\text{pre}}} \pm \kappa_2^\perp \jiao{\theta^\star, v})$. As $\jiao{\theta^\star, v_{\text{pre}}} \approx x_{\text{pre}}$, this corresponds to the shift of the center as outlined above, and the tuning of $\lambda\in [0,1]$ gives us the flexibility of centering anywhere below the current progress $x_{\text{pre}}$. Roughly speaking, we will choose $\lambda\in [0,1]$ to maximize the derivative $f'(\lambda x_{\text{pre}})$. Finally, as the relation $\jiao{\theta^\star, v_{\text{pre}}} \approx x_{\text{pre}}$ is only approximate, we use a more complicated projection based test \eqref{eq:test} for the certification of $\jiao{\theta^\star, v}$. 

The performance of \Cref{alg:GAHT} is summarized in the following lemma. 
\begin{lemma}\label{lemma:gaht}
Suppose $f$ is monotone on $[-1,1]$, $\jiao{\theta^\star, v_{\text{\rm pre}}} \in [x_{\text{\rm pre}}, (1+3\kappa_1)x_{\text{\rm pre}}]$, and $x_{\text{\rm pre}}\ge \kappa_4/[(1-\kappa_2)\sqrt{d}]$. Then with probability at least $1-\delta$, the \gaht algorithm outputs:
\begin{itemize}
    \item \textsf{True} if $\jiao{\theta^\star, v}\in [(1+\kappa_1)/\sqrt{d}, (1+2\kappa_1)/\sqrt{d}]$; 
    \item \textsf{False} if $\jiao{\theta^\star, v}\notin [1/\sqrt{d}, (1+3\kappa_1)/\sqrt{d}]$. 
\end{itemize}
\end{lemma}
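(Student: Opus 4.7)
The plan is to prove the lemma in three components: a concentration step that pins down $\bar{r}_\pm$, a completeness argument (output is \textsf{True}), and a soundness argument (output is \textsf{False}). The soundness part is the main content and hinges on a ``min-max inversion'' observation about the definition of $\varepsilon$.

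For the concentration step, each $\bar{r}_\pm$ averages $2\log(4/\delta)/\varepsilon^2$ independent $\calN(f(\jiao{\theta^\star,a_\pm}),1)$ samples, so standard Gaussian concentration plus a union bound yields $|\bar{r}_\pm - f(\jiao{\theta^\star, a_\pm})|\le \varepsilon$ for both signs simultaneously with probability at least $1-\delta$. Since $v\perp v_{\text{pre}}$, we have $\jiao{\theta^\star, a_\pm}=y\pm x$ where $y:=\lambda\jiao{\theta^\star, v_{\text{pre}}}/\sqrt{2}$ and $x:=\jiao{\theta^\star, v}/\sqrt{2}$; the choice $\lambda=\sqrt{2}\,y^\star/x_{\text{pre}}$ together with the hypothesis $\jiao{\theta^\star, v_{\text{pre}}}\in[x_{\text{pre}},3x_{\text{pre}}/2]$ gives $y\in[y^\star,3y^\star/2]$. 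For completeness, if $\jiao{\theta^\star, v}\in[2.2/\sqrt{d},2.8/\sqrt{d}]$ then $x\in[2.2/\sqrt{2d},2.8/\sqrt{2d}]$, and taking the witness $(z,\tilde{x})=(y,x)$ in \eqref{eq:test} satisfies both the range constraints and, by concentration, the closeness condition.

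For soundness, suppose a witness $(z^\star,\tilde{x}^\star)$ passes the test. Combining with concentration gives $|f(y\pm x)-f(z^\star\pm\tilde{x}^\star)|\le 2\varepsilon$. Using $y^\star\ge 20/\sqrt{2d}$ one verifies $z^\star\pm\tilde{x}^\star\in[5y^\star/6,5y^\star/3]$; this is the ``safe interval'' on which the min-max in \eqref{eq:obj} is binding. The key observation is: for every $z_0\in[5y^\star/6,5y^\star/3]$ we have $|f(z_0+0.2/\sqrt{2d})-f(z_0)|\ge 2\varepsilon$, so combined with monotonicity (say $f$ is increasing), whenever $u\in[5y^\star/6,5y^\star/3]$ and $|u'-u|>0.2/\sqrt{2d}$, it follows that $|f(u)-f(u')|\ge 2\varepsilon$. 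Provided that $y\pm x$ also lies in $[5y^\star/6,5y^\star/3]$---which holds in particular when $|x|\le y^\star/6$, a range strictly containing $[2/\sqrt{2d},3/\sqrt{2d}]$---applying the observation to the two inequalities $|f(y\pm x)-f(z^\star\pm\tilde{x}^\star)|\le 2\varepsilon$ and subtracting yields $|x-\tilde{x}^\star|\le 0.2/\sqrt{2d}$, which together with $\tilde{x}^\star\in[2.2/\sqrt{2d},2.8/\sqrt{2d}]$ forces $\jiao{\theta^\star, v}=\sqrt{2}\,x\in[2/\sqrt{d},3/\sqrt{d}]$. In the extreme sub-case where one of $y\pm x$ leaves $[5y^\star/6,5y^\star/3]$, e.g.\ $y+x>5y^\star/3$, the numerical buffer $5y^\star/3-(3y^\star/2+2.8/\sqrt{2d})\ge 0.027\,y^\star\ge 0.54/\sqrt{2d}$ (using $y^\star\ge 20/\sqrt{2d}$) gives $y+x>(z^\star+\tilde{x}^\star)+0.2/\sqrt{2d}$, and the key observation anchored at $z_0=z^\star+\tilde{x}^\star\in[5y^\star/6,5y^\star/3]$ then yields $f(y+x)-f(z^\star+\tilde{x}^\star)\ge 2\varepsilon$, contradicting the test; the symmetric extremes are handled by anchoring the observation at a suitable boundary of $[5y^\star/6,5y^\star/3]$.

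The main obstacle is the bookkeeping around the safe interval $[5y^\star/6,5y^\star/3]$: the ``closeness of $f$-values implies closeness of arguments'' principle only works when the arguments lie inside it, and for large $|x|$ the evaluation points $y\pm x$ may leave this interval. The analysis turns this into an asset by exploiting the strictly positive constant $5/3-(3/2+2.8/20)=0.027$ from the numerical buffer, which is enabled by the hypothesis $x_{\text{pre}}\ge 20/\sqrt{d}$; the same hypothesis ensures $\tilde{x}^\star\ll y^\star$ so that the anchor points $z^\star\pm\tilde{x}^\star$ sit safely inside the good interval.
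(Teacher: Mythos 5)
Your plan (concentration, completeness via the natural witness, soundness via ``close $f$-values forces close arguments on a safe interval'') is the same as the paper's, and the proof is essentially correct. The one place it is noticeably less clean is the soundness step. The paper observes that $z' \pm x'$ lies in $[5y^\star/6 + 0.2/\sqrt{2d},\, 5y^\star/3]$, i.e.\ with an explicit $0.2/\sqrt{2d}$ of slack on the left, which is exactly what is needed so that anchoring the ``min over $z$'' in \eqref{eq:obj} at $z'\pm x'$ or at $z'\pm x' - 0.2/\sqrt{2d}$ is always admissible; this immediately gives $|(z\pm x) - (z'\pm x')|\le 0.2/\sqrt{2d}$ without caring where $z\pm x$ sits. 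You instead record only $z^\star\pm\tilde{x}^\star\in[5y^\star/6,5y^\star/3]$ and then branch on whether $y\pm x$ also lies in the safe interval, introducing the extreme sub-cases and the ad-hoc buffer $0.027 y^\star$. That works, but the case split is avoidable; all you need is the very numerics you already computed ($y^\star - 2.8/\sqrt{2d}\ge 5y^\star/6 + 0.2/\sqrt{2d}$ when $y^\star\ge 20/\sqrt{2d}$). Two more small points: (i) the ``key observation'' as stated, with only one endpoint in $[5y^\star/6,5y^\star/3]$, is not literally correct in the direction $u'<u-0.2/\sqrt{2d}$ when $u<5y^\star/6+0.2/\sqrt{2d}$; your actual applications avoid this, but the phrasing overclaims. (ii) ``the symmetric extremes are handled by anchoring at a suitable boundary'' is left unjustified; the natural anchor there is $z^\star\pm\tilde{x}^\star-0.2/\sqrt{2d}$, which one should verify still lies in the safe interval.
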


\subsubsection{An algorithm without the knowledge of $f$}\label{subsec:agnostic_alg}

Recall that Algorithm \ref{alg:burn-in} crucially relies on the knowledge of $f$, as it is used in both projection based tests in the \iaht and \gaht algorithms. The main result of this section is summarized in the following theorem, showing that the knowledge of $f$ is not required for the burn-in period. 

\begin{theorem}\label{thm:burnin_agnostic}
Consider the same setting of \Cref{thm:ub_burnincost_formal}, and assume that $f$ is continuous and strictly increasing on $[-1,1]$. Then there is an algorithm without the knowledge of $f$ such that, with probability at least $1-\delta$, it outputs an action $a_0$ with $\jiao{\theta^\star,a_0}\ge x_0$ using
\begin{align*}
O\left(\log^3\left(\frac{d}{\delta}\right)\sum_{i=1}^m \frac{1}{\varepsilon_i^2}\right)
\end{align*}
queries, where the hidden constant depends only on $(x_0,\kappa_1,\kappa_2)$. 
\end{theorem}
Up to logarithmic factors in $(d,1/\delta)$, the sample complexity in \Cref{thm:burnin_agnostic} matches the result in \Cref{thm:ub_burnincost_formal}. The main algorithmic idea is to solve the following \emph{infinite-armed bandit problem}. Let $F$ be an unknown, continuous, and strictly increasing CDF, so that $F^{-1}(t)$ is well-defined for every $t\in (0,1)$. Let $X_1, X_2, \cdots \sim F$ be an (unobserved) infinite i.i.d. sequence (treat the index set $\mathbb{N}$ as \emph{arms}). At each time $t$, the learner chooses an arm $i_t\in\mathbb{N}$ and observes $Y_t\sim \calN(X_{i_t},1)$; the learner could either pull a new arm for exploration, or pull an existing arm to refine the knowledge of $X$. We assume that the noises at different rounds are independent. Given two values $p, q\in [0,1]$ with $p<q$, the learner's target is to find some $i\in \mathbb{N}$ such that $F(X_i)\in [p,q]$. A line of work \cite{berry1997bandit,wang2008algorithms,bonald2013two,wang2022beyond} considered similar settings, but typically focused on different targets such as best arm identification or functional estimation. 

The following lemma presents a simple algorithm based on upper and lower confidence bounds, together with a high-probability guarantee on the sample complexity. 
\begin{lemma}\label{lemma:infinite-armed-bandit}
Fix any $\varepsilon\in (0,(q-p)/4)$, and a failure probability $\delta\in (0,1/2)$. There is a learning algorithm such that with probability at least $1-\delta$, it outputs some $i\in \mathbb{N}$ with $F(X_i)\in [p,q]$ using
\begin{align*}
O_{q-p,\varepsilon}\left(\frac{\log^2(1/\delta)}{(F^{-1}(p+2\varepsilon) - F^{-1}(p+\varepsilon))^2} + \frac{\log^2(1/\delta)}{(F^{-1}(q-\varepsilon) - F^{-1}(q-2\varepsilon))^2} \right)
\end{align*}
queries, where both the algorithm and the hidden constant are independent of $F$. 
\end{lemma}

To see how \Cref{lemma:infinite-armed-bandit} is related to our problem, consider the initial certification steps in \Cref{alg:burn-in}. Let $F$ be the CDF of $f(\jiao{\theta^\star,v})$ for $v\sim \mathsf{Unif}(\mathbb{S}^{d-1})$, which is unknown due to the unknown $f$. If we sample $v_1, v_2, \cdots \sim \mathsf{Unif}(\mathbb{S}^{d-1})$, then each direction $v_i$ is an arm in the infinite-armed bandit problem, with corresponding $X_i = f(\jiao{\theta^\star,v_i})$, and the reward $r_t\sim \calN(X_{i_t},1)$ is the observation $Y_t$ when the direction $v_{i_t}$ is chosen. The crucial observation here is that both
\begin{align*}
    p &= F^{-1}(f(1/\sqrt{d})) = \bP(\jiao{\theta^\star,v}\le 1/\sqrt{d}),\\
    q &= F^{-1}(f((1+\kappa_1)/\sqrt{d})) = \bP(\jiao{\theta^\star,v}\le (1+\kappa_1)/\sqrt{d})
\end{align*}
are known thanks to the strict monotonicity of $f$, and \Cref{lemma:nontrivialcorr} tells that $q-p = \Omega(1)$. Therefore, we can apply \Cref{lemma:infinite-armed-bandit} to find a direction (arm) $v_i$ such that $\jiao{\theta^\star, v_i}\in [1/\sqrt{d}, (1+\kappa_1)/\sqrt{d}]$, with sample complexity essentially $\widetilde{O}(1/\varepsilon_1^2)$ in \Cref{thm:ub_burnincost_formal}. In summary, instead of certifying each direction one after one using the knowledge of $f$ in \Cref{alg:burn-in}, the agnostic algorithm makes use of the empirical CDF based on the comparisons between different actions.  

The same idea could also be applied to recursive certification steps, with two additional caveats. First, the CDF of $\jiao{\theta^\star, v}$ with $v\sim \mathsf{Unif}(\mathbb{S}^{d-1}\cap V^\perp)$ involves an unknown magnitude $\|\text{Proj}_{V^\perp}(\theta^\star)\|_2$; in the algorithm we estimate it and apply an induction in the analysis. Second, the optimal value of $\lambda$ in \Cref{alg:GAHT} is unknown; we overcome it by searching over a geometric grid on $\lambda$. The detailed algorithms, as well as the proofs of \Cref{lemma:infinite-armed-bandit} and \Cref{thm:burnin_agnostic}, are postponed to the appendix.

\subsection{Algorithm for the learning phase} \label{subsec:upper_bound_linear}

In this section we design an algorithm after a good action $a_0$ with $\jiao{\theta^\star,a_0}\ge 1 - 3\gamma/4$ is found, and prove the upper bound in \Cref{inftheorem:learning_ub}. The algorithm is based on a simple idea of explore-then-commit (ETC) shown in \Cref{alg:learning_phase}. In the first $m$ rounds, we cyclically explore all directions \emph{around $a_0$} in a non-adaptive manner: 
\begin{align*}
    a_t \in \left\{ \left(1 - \frac{\gamma}{8} \right) a_0 \pm \frac{\gamma}{8} e_i: i\in [d] \right\} \subseteq \mathbb{B}^d. 
\end{align*}
Here $e_i$ is the $i$-th canonical vector of $\mathbb{R}^d$. The reason why we center these actions around $a_0$ is that
\begin{align*}
\left\langle \theta^\star, \left( 1 - \frac{\gamma}{8} \right) a_0 \pm \frac{\gamma}{8} e_i \right\rangle \ge \left( 1 - \frac{3\gamma}{4} \right)\left( 1 - \frac{\gamma}{8} \right) - \frac{\gamma}{8} > 1 - \frac{3\gamma}{4} -  \frac{\gamma}{8} - \frac{\gamma}{8} = 1 - \gamma
\end{align*}
for all $i\in [d]$ and therefore we are operating in the locally linear regime in \Cref{assump:learning}. After the exploration rounds, we compute the constrained least squares estimator $\thetaLS$ for $\theta^\star$ in \eqref{eq:constrained_LS}. If our target is the estimation of $\theta^\star$, we just set $m=T$ and use $\thetaLS$ as the final estimator. If our target is to minimize the regret, we commit to the action $a_t = \thetaLS$ after $t>m$, and choose $m$ appropriately to balance the errors in the exploration and commit rounds.

\begin{algorithm}[htb]
\caption{Regression-based explore-then-commit algorithm}
	\label{alg:learning_phase}
\textbf{Input:} link function $f$, dimensionality $d$, time horizon $T$, action $a_0$ with $\langle a_0, \theta^\star \rangle \ge 1-\gamma$. 

\textbf{Output:} final estimator $\widehat{\theta}_T$, or a sequence of actions $(a_1,\cdots,a_T)$.  

Set 
\begin{align}\label{eq:m}
    m \gets \begin{cases}
        T & \text{for estimation}, \\
        \min\{T, d\sqrt{T}/c_f \} & \text{for regret minimization}.
    \end{cases}
\end{align}

\For{$t=1,2,\cdots,m$}{
Play action $a_t = \left( 1 - \frac{\gamma}{8} \right) a_0 + (-1)^{\lceil t/d\rceil}\cdot \frac{\gamma}{8} e_{((t-1)\bmod d) + 1}$; 

Receive reward $r_t\sim \calN(f(\jiao{\theta^\star,a_t}),1)$. 
}

Compute the constrained least squares estimator: 
\begin{align}\label{eq:constrained_LS}
  \thetaLS = \argmin_{\theta\in \mathbb{S}^{d-1}: \jiao{\theta, a_0}\ge 1-\gamma} \sum_{t=1}^m \left( f (\langle \theta, a_t \rangle) - r_t \right)^2.
\end{align}
            
\For{$t=m+1,\cdots,T$}{Commit to the action $a_t = \thetaLS$.}

Return $\widehat{\theta}_T=\thetaLS$ or $(a_1,\cdots,a_T)$. 
\end{algorithm}

Next we show that \Cref{alg:learning_phase} attains the upper bounds in \Cref{inftheorem:learning_ub}. To this end, we analyze the statistical performance of the constrained least squares estimator in \eqref{eq:constrained_LS}. Applying Lemma \ref{lemma:least_squares} to the function class $\calF:= \{a\mapsto f(\jiao{\theta, a}): \theta\in \mathbb{S}^{d-1}, \jiao{\theta, a_0}\ge 1-\gamma \}$, we have
\begin{align*}
\log N(u, \calF_m(\delta), L_2(P_m)) &\le \log N(u, \calF, L_2(P_m)) \\
&\stepa{\le} \log N(u/C_f, \mathbb{S}^{d-1}, L_2(\mathbb{R}^d)) = O\left(d \log\frac{C_f}{u}\right), 
\end{align*}
where (a) follows from the Lipschitz contraction of metric entropy, in conjunction with the observation that for every $t\in [m]$ and $\theta\in \mathbb{S}^{d-1}$ with $\jiao{\theta,a_0}\ge 1 - \gamma$, 
\begin{align*}
    \| \nabla_{\theta} f(\jiao{\theta,a_t}) \|_2 \le f'(\jiao{\theta, a_t}) \le C_f, 
\end{align*}
where the last step is due to \Cref{assump:learning} and $\jiao{\theta,a_t}>1-\gamma$. Consequently, \Cref{lemma:least_squares} gives that $\delta_m \asymp (1+d\log(mC_f))/\sqrt{m}$, and therefore
\begin{align}\label{eq:least_square_performance}
\bE\left[\sum_{t=1}^m (f(\jiao{\theta^\star, a_t}) - f(\jiao{\thetaLS, a_t}))^2\right] = \widetilde{O}(d). 
\end{align}
Based on \eqref{eq:least_square_performance}, we conclude that
\begin{align*}
    \bE[1 - \jiao{\theta^\star, \thetaLS}] &= \frac{1}{2}\bE[\|\theta^\star - \thetaLS\|_2^2] \stepb{\le} \frac{32d}{m\gamma^2} \bE\left[\sum_{t=1}^m \jiao{\theta^\star-\thetaLS, a_t}^2\right] \\
    &\stepc{\le} \frac{32d}{m \gamma^2 c_f^2} \bE\left[\sum_{t=1}^m (f(\jiao{\theta^\star, a_t}) - f(\jiao{\thetaLS, a_t}))^2\right] = \widetilde{O}\left(\frac{d^2}{m c_f^2}\right). 
\end{align*}
Here (b) is due to the definition of $\{a_t\}_{t=1}^m$ and simple algebra, (c) follows from $\jiao{\theta^\star, a_t}, \jiao{\thetaLS, a_t} > 1-\gamma$ and \Cref{assump:learning}. Choosing $m = T \asymp d^2/(c_f^2\varepsilon)$ proves the sample complexity upper bound of \Cref{inftheorem:learning_ub}. 

As for the regret, each round during the exploration phase incurs a regret at most 
\begin{align*}
    f(1) - f(\jiao{\theta^\star, a_t}) \le f(1) - f(1-\gamma) \le \min\{ \gamma C_f, 1\}. 
\end{align*}
Moreover, using the high probability upper bound in \Cref{lemma:least_squares}, with an overwhelming probability each round in the commit phase incurs a regret 
\begin{align*}
    f(1) - f(\jiao{\theta^\star, \thetaLS}) \le f(1) - f\left(1-\widetilde{O}\left(\frac{d^2}{m c_f^2}\right)\right) = \widetilde{O}\left(\frac{C_fd^2}{m c_f^2}\right). 
\end{align*}
Consequently, the total expected regret of \Cref{alg:learning_phase} is
\begin{align*}
\mathfrak{R}_T^\star(f,d) = \widetilde{O}\left(m\min\{ C_f, 1\} + (T-m)\cdot \frac{C_fd^2}{m c_f^2} \right) = \widetilde{O}\left(\min\left\{T, \frac{C_f d\sqrt{T}}{c_f}\right\} \right), 
\end{align*}
where the last step follows from the choice of $m$ in \eqref{eq:m}. The proof of \Cref{inftheorem:learning_ub} is now complete. 
\section{Additional discussions}\label{sec:discussions}

\subsection{Nonadaptive sampling}
In this section, we show that the upper bound on the burn-in cost in \Cref{inftheorem:integral_ub} cannot be attained by any nonadaptive sampling approaches in general. Here under nonadaptive sampling, the actions $a_1,\cdots,a_T\in \mathbb{B}^d$ are chosen in advance without knowing the history. This result reveals an avoidable gap between adaptive and nonadaptive samplings, and emphasizes the importance of the sequential nature in our decision making problem. 

\begin{theorem}\label{thm:nonadaptive}
Let the link function $f$ satisfy \Cref{assump:main} in the ridge bandit problem, and $\theta^\star\sim \mathsf{Unif}(\mathbb{S}^{d-1})$. Then any nonadaptive learner cannot find $\widehat{\theta}_T$ with $\bE[\jiao{\theta^\star, \widehat{\theta}_T}] > 1/2$ if
\begin{align*}
    T < \max_{K\ge 1} \frac{cd}{g(\sqrt{(\log K)/d})^2 + K^{-1}}, 
\end{align*}
where $c>0$ is an absolute constant, and $g(x):=\max\{|f(x)|,|f(-x)|\}$. 
\end{theorem}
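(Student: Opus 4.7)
The plan is to combine a mutual-information upper bound that exploits the nonadaptive structure with a standard rate-distortion lower bound for estimation on the sphere. Since the learner is nonadaptive, we may (by conditioning on any internal randomness) take $a_1,\dots,a_T\in\mathbb{B}^d$ to be deterministic and thus independent of $\theta^\star$. The rewards $r_t=f(\langle\theta^\star,a_t\rangle)+z_t$ are then conditionally independent given $\theta^\star$, so subadditivity of differential entropy together with the Gaussian capacity bound (step (c) of \eqref{eq:information_recursion}) will yield
\[
I(\theta^\star;r^T)\;\le\;\sum_{t=1}^T I(\theta^\star;r_t)\;\le\;\frac12\sum_{t=1}^T \mathbb{E}\bigl[f(\langle\theta^\star,a_t\rangle)^2\bigr].
\]
This sum-decoupling is the one place where nonadaptivity is used; for adaptive learners only the conditional recursion \eqref{eq:information_recursion} is available, which is considerably weaker.

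Next, for any fixed $a\in\mathbb{B}^d$, \Cref{lemma:nolargeinnerproduct} gives the sphere tail bound $\mathbb{P}(|\langle\theta^\star,a\rangle|>\varepsilon)\le 2e^{-c_0 d\varepsilon^2}$, and combining with $|f|\le 1$ and the monotonicity of $g$ (from \Cref{assump:main}) yields $\mathbb{E}[f(\langle\theta^\star,a\rangle)^2]\le g(\varepsilon)^2+2e^{-c_0 d\varepsilon^2}$. Choosing $\varepsilon=\sqrt{(\log K)/(c_0 d)}$ for any $K\ge 1$ and absorbing $c_0$ into the constant gives
\[
I(\theta^\star;r^T)\;\lesssim\;T\bigl[g(\sqrt{(\log K)/d})^2+K^{-1}\bigr].
\]

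On the estimation side, we will argue that $\mathbb{E}[\langle\theta^\star,\widehat\theta_T\rangle]>1/2$ under $\theta^\star\sim\mathrm{Unif}(\mathbb{S}^{d-1})$ forces $I(\theta^\star;\widehat\theta_T)\gtrsim d$. This is essentially the rate-distortion function of the uniform distribution on $\mathbb{S}^{d-1}$ with distortion $1-\langle\cdot,\cdot\rangle$ at constant level. A convenient way to verify it is via the additive-noise representation $N:=\theta^\star-\tfrac12\widehat\theta_T$ with $\mathbb{E}\|N\|^2\le 3/4$ and a Gaussian rate-distortion / entropy-power comparison; equivalently, one can use a covering argument based on the fact that any spherical cap $\{\theta:\langle\widehat\theta_T,\theta\rangle\ge 1/2\}$ has $\mathrm{Unif}(\mathbb{S}^{d-1})$-mass at most $e^{-c_1 d}$. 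Combined with data processing $I(\theta^\star;\widehat\theta_T)\le I(\theta^\star;r^T)$ and the display above, this forces $T\gtrsim d/[g(\sqrt{(\log K)/d})^2+K^{-1}]$ for every $K\ge 1$; taking the maximum over $K$ finishes the proof.

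The main obstacle will be the $\Omega(d)$ rate-distortion lower bound. A naive Markov-plus-packing argument is insufficient: having only $\mathbb{E}[\langle\widehat\theta_T,\theta^\star\rangle]>1/2$ forces us to work with $\langle\widehat\theta_T,\theta^\star\rangle\ge 1/4$ with constant probability, but two packing points $\phi_i,\phi_j$ with pairwise inner product $\alpha$ can jointly satisfy $\langle\widehat\theta_T,\cdot\rangle\ge\beta$ whenever $\beta^2\le(1+\alpha)/2$, which holds trivially at $\beta=1/4$ for any plausible $\alpha$, precluding identification of a nearest packing point. We must therefore invoke the sphere rate-distortion function directly, through either the covering argument above or an EPI-style comparison to a matched Gaussian; both deliver the required $\Omega(d)$ lower bound at constant distortion.
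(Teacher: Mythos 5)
Your proposal is correct and follows essentially the same route as the paper. For the information upper bound, the paper writes $I(\theta^\star;\calH_T)\le\bE[D_{\mathrm{KL}}(P_{\theta^\star}^T\|P_0^T)]$ via the variational representation with the pure-noise reference $P_0^T=\prod_t\calN(0,1)$, which for nonadaptive designs tensorizes to $\tfrac12\sum_t\bE[f(\jiao{\theta^\star,a_t})^2]$; your subadditivity-of-mutual-information route gives the identical bound. For the estimation lower bound, the ``covering argument'' you propose is exactly the generalized Fano's inequality (\Cref{lemma:Fano}) with loss $L(\theta,a)=1-\jiao{\theta,a}$ and $\Delta=3/4$: the relevant spherical cap $\{\theta:\jiao{\widehat\theta_T,\theta}\ge1/4\}$ has uniform measure $e^{-\Omega(d)}$ by \Cref{lemma:nolargeinnerproduct}, which forces $I(\theta^\star;\calH_T)=\Omega(d)$ whenever $\bE[\jiao{\theta^\star,\widehat\theta_T}]>1/2$; no entropy-power or rate--distortion machinery is needed, and your concern about the naive Markov-plus-packing argument is sidestepped precisely by taking this covering route.
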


If $f(x)=|x|^p$ with $p>0$, \Cref{thm:nonadaptive} shows that the burn-in cost for all nonadaptive algorithms is at least $\widetilde{\Omega}(d^{p+1})$, which is suboptimal compared with Example \ref{example:1} when $p>1$. Thanks to the nonadaptive nature where $a_t$ is independent of $\theta^\star$, \Cref{thm:nonadaptive} could be proven via the classical Fano's inequality. Without this independence in the adaptive setting, we need a recursive relationship for the mutual information $I(\theta^\star; a_t)$ in the proof of \Cref{thm:lower_bound}. 

\subsection{Finitely many actions}
In this section we consider the case where the action space $\mathcal{A}$ is not continuous and is a finite subset of $\mathbb{B}^d$, with $|\calA|=K$. For linear bandits, a finite set of actions helps reduce the minimax regret from $\Theta(d\sqrt{T})$ to $\Theta(\sqrt{dT\log K})$, essentially due to the reason that it becomes less expensive to maintain a confidence bound for each action (see, e.g. \cite[Chapter 22]{lattimore2020bandit}). However, to achieve the optimal burn-in cost for general ridge bandits, we already know that it is necessary to go beyond confidence bounds. In this case, does a finite number of actions help to reduce the burn-in cost as well? 

The next theorem shows that for many link functions, a smaller set of actions does not essentially help. 

\begin{theorem}\label{thm:finite_action}
Let the link function $f$ satisfy \Cref{assump:main} in the ridge bandit problem. For every $K=\exp(o(d))$, there exists a finite action set $\calA$ with $|\calA|=K$ such that any learner cannot find $\widehat{\theta}_T$ with $\inf_{\theta^\star\in \mathbb{S}^{d-1}}\bE_{\theta^\star}[\jiao{\theta^\star, \widehat{\theta}_T}] \ge 4/5$ if
\begin{align*}
    T < \frac{c}{g(\sqrt{(c'\log K)/d})^2 + K^{-1}}, 
\end{align*}
where $c,c'>0$ are absolute constants, and $g(x):=\max\{|f(x)|,|f(-x)|\}$. 
\end{theorem}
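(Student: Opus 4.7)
The plan is a Bayes lower bound against the prior $\theta^\star \sim \mathrm{Unif}(\mathcal{A})$ for a well-separated action set $\mathcal{A}$, adapting the $\chi^2$-informativity framework of \Cref{thm:lower_bound} to the finite-action setting. Since $\mathcal{A} \subset \mathbb{S}^{d-1}$, any Bayes lower bound immediately implies the minimax statement.

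First I would construct $\mathcal{A}$: because $K = \exp(o(d))$, drawing $K$ i.i.d.\ uniform points on $\mathbb{S}^{d-1}$ and applying \Cref{lemma:nolargeinnerproduct} with a union bound produces (with positive probability, hence deterministically) a set $\mathcal{A} = \{a_1, \ldots, a_K\}$ with $|\langle a_i, a_j\rangle| \le c'\sqrt{(\log K)/d}$ for all $i \ne j$, where $c'>0$ is a small absolute constant. Set $\varepsilon = 2c'\sqrt{(\log K)/d}$ and define the no-progress event $E_t = \bigcap_{s \le t}\{|\langle\theta^\star, a_s\rangle| \le \varepsilon\}$.

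Next I would establish the finite-action analogue of \Cref{lemma:Fano_chi^2}. The separation of $\mathcal{A}$ together with the uniform prior gives $\mathbb{P}_{\theta^\star \sim \mathrm{Unif}(\mathcal{A})}(|\langle\theta^\star, a\rangle| > \varepsilon) \le 1/K$ for every $a \in \mathcal{A}$, since only $\theta^\star = a$ can trigger the event. Replacing $e^{-c_0 d\varepsilon^2}$ by $1/K$ throughout the proof of \Cref{lemma:Fano_chi^2} yields
\[ \mathbb{P}(|\langle\theta^\star, a\rangle|\le\varepsilon\mid E) \;\ge\; 1 - \frac{c_1}{\sqrt{K}}\sqrt{I_{\chi^2}(\theta^\star; a \mid E) + 1} \]
for any event $E$ and any $\mathcal{A}$-valued random variable $a$. \Cref{lemma:recursion_chi^2} is prior-agnostic and transfers verbatim, so iterating as in \Cref{subsec:chi^2_information_analysis} gives $\mathbb{P}(E_T) \ge 1 - \eta$ for any prescribed small $\eta>0$ provided $T$ is smaller than a constant (depending on $\eta$) times $1/(g(\varepsilon)^2 + K^{-1})$. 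The ``$g(\varepsilon)^2$'' term controls the regime where the exponential $\exp(tg(\varepsilon)^2/2)$ in the recursion dominates the increment, while the ``$K^{-1}$'' term arises from the prefactor $1/\sqrt{K}$ summed over $t$ in the regime where the exponential is effectively one.

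Finally I would convert $\mathbb{P}(E_T) \ge 1 - \eta$ into the estimator bound by rounding. Set $\widetilde{\theta} := \argmax_{a \in \mathcal{A}}\langle a, \widehat{\theta}_T\rangle$; since $\widetilde\theta \in \mathcal{A}$, the finite Fano bound above applied to $\widetilde\theta$ (conditionally on $E_T$) together with the data-processing inequality $I_{\chi^2}(\theta^\star; \widetilde\theta \mid E_T) \le I_{\chi^2}(\theta^\star; \mathcal{H}_T \mid E_T)$ forces $\widetilde\theta \ne \theta^\star$ with high probability on $E_T$. Decomposing $\widehat\theta_T = \alpha\theta^\star + \beta v$ with $v\perp\theta^\star$ and $\alpha^2+\beta^2=1$, the inequality $\langle\widetilde\theta, \widehat\theta_T\rangle \ge \alpha$ combined with the separation $|\langle\widetilde\theta, \theta^\star\rangle| \le \varepsilon$ yields $\alpha(1-\varepsilon) \le \sqrt{1-\alpha^2}$, i.e.\ $\alpha \le 1/\sqrt{2} + O(\varepsilon)$ on the joint high-probability event. \emph{The main obstacle} is the constants-chasing in this last step: the rounding alone caps $\alpha$ at $1/\sqrt{2}+O(\varepsilon) \approx 0.707$ only on the good event, so one must choose $\eta$ comfortably smaller than $4/5 - 1/\sqrt{2} \approx 0.093$ in the recursion and simultaneously verify that the $\chi^2$-bound applied to $\widetilde\theta$ is strong enough to make $\mathbb{P}(\widetilde\theta=\theta^\star \mid E_T)$ subconstant, so that the net expectation $\mathbb{E}[\langle\theta^\star, \widehat{\theta}_T\rangle]$ lands strictly below $4/5$ and contradicts the uniform guarantee.
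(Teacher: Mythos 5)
Your route is the $\chi^2$-informativity framework from \Cref{sec:lower_bound}, which the paper does acknowledge as an alternative, but the paper's own proof of \Cref{thm:finite_action} is genuinely different: it averages Le Cam's two-point bound (\Cref{lemma:twopoint}) over all pairs $\theta_i,\theta_j\in\calA$, changes measure to the signal-free reference $P_0^T$, applies Pinsker, and exchanges the expectation over $\theta\sim\mathsf{Unif}(\calA)$ with the (now $\theta$-independent) expectation over the action trajectory, which gives $\bE_\theta[f(\jiao{a_t,\theta})^2]\le K^{-1}+g(\varepsilon)^2$ immediately. This sidesteps any trajectory-event conditioning.

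The genuine gap in your proposal is the $K$-dependence. Your finite-action $\chi^2$-Fano bound is correct with prefactor $c_1/\sqrt{K}$: the $\chi^2$-DPI to a binary statistic gives $|p_1-q_1|\le\sqrt{q_0q_1\chi^2}$ with $q_0\le 1/K$, so the factor is $K^{-1/2}$ and cannot be improved to $K^{-1}$ within this argument. But then the per-step failure probability in the iteration of \Cref{lemma:recursion_chi^2} is $\Theta(K^{-1/2})\exp(tg(\varepsilon)^2/2)$, and requiring $\bP(E_T)\ge 1-\eta$ forces $\sum_{t<T}\exp(tg(\varepsilon)^2/2)\lesssim\eta\sqrt{K}$, i.e.\ $T\lesssim\min\{\sqrt{K},\,(\log K)/g(\varepsilon)^2\}$. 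When $g(\varepsilon)^2\le K^{-1/2}$ this caps $T$ at $\sqrt{K}$, not the $K$ that the theorem statement requires; your assertion that the iteration works ``provided $T$ is smaller than a constant times $1/(g(\varepsilon)^2+K^{-1})$'' is off by $\sqrt{K}$ in that regime. And the regime matters: if $f$ vanishes on $[-\varepsilon,\varepsilon]$ the only signal comes from playing $\theta^\star$ itself, so $T\gtrsim K$ is genuinely necessary and the $K^{-1}$ term is tight; the two-point route picks this up because the reference-measure averaging contributes the correct $1/K$ directly through the event $\{\theta=a_t\}$, whereas your bound contributes only $\sqrt{q_0}=K^{-1/2}$. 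Everything else you outline is sound: the packing construction, the prior-agnostic transfer of \Cref{lemma:recursion_chi^2}, and the rounding step (which indeed caps $\jiao{\theta^\star,\widehat\theta_T}$ at $1/\sqrt 2+O(\varepsilon)<4/5$ on the bad event, leaving enough slack in the constants). But I do not see how to repair the $\sqrt K$ loss while staying inside the $\chi^2$-informativity machinery, so as written the proposal proves a strictly weaker statement.
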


For $f(x)=|x|^p$ with $p>0$, \Cref{thm:finite_action} shows that the burn-in cost with appropriately chosen $K$ actions is at least $\widetilde{\Omega}(d^p)$ as long as $K\gtrsim d^p$. If $p\ge 2$, this is no smaller than the optimal burn-in cost with a continuous set of actions, showing that a smaller action set is essentially not beneficial. From the algorithmic perspective, this is because that \Cref{alg:burn-in} for the burn-in period crucially requires that every direction, and in particular every convex combination of actions, could be explored - a structure that may break down for finitely many actions. Under a given discrete action set, it is an interesting future direction to understand both the burn-in cost and the appropriate algorithm for the burn-in period. 

We also point out some technical aspects in the proof of \Cref{thm:finite_action}. First, a proof based on the $\chi^2$-informativity argument in \Cref{sec:lower_bound} still works, but the proof we present in \Cref{subsec:proof_finite_action} uses the classical two-point method with an additional change-of-measure trick to a common distribution. Second, this trick does not suffice to give \Cref{cor:sample_complexity}: when passing through the common distribution to exchange the order of expectations, the inner product is always of the scale $\widetilde{\Theta}(1/\sqrt{d})$ but no other intermediate scales $1/\sqrt{d} \ll \varepsilon \ll 1$ as in \Cref{cor:sample_complexity}. See \Cref{sec:proof_discussion} for details. 

\subsection{Unit sphere vs unit ball} \label{sec:ball}
In this section, we relax the assumption $\theta^\star\in \mathbb{S}^{d-1}$ and investigate the statistical complexity of ridge bandits when $\theta^\star \in \mathbb{B}^d$. The following theorem shows that it is equivalent to think of the unit ball as a union of spheres with different radii. 

\begin{theorem}\label{thm:unit_ball}
Suppose the link function $f$ satisfies the monotonicity condition in \Cref{assump:main}, and $f'(x)/f'(y)\le C$ as long as $1/c\le x/y\le c$ for some constants $c,C>1$. Then the following upper and lower bounds hold for the minimax regret over $\theta^\star \in \mathbb{B}^d$: 
\begin{align*}
\mathfrak{R}_T^\star(f,d) &\lesssim \max_{r\in [0,1]}\min\left\{\frac{f(r)}{r^4}d^2\int_{r/\sqrt{d}}^{r/2} \frac{\mathrm{d}(x^2)}{\max_{r/\sqrt{d}\le y\le x}\min_{z\in [(1-\kappa)y,(1+\kappa)y]} [f'(z)]^2 } + d\sqrt{T}, Tf(r)\right\}, \\
\mathfrak{R}_T^\star(f,d) &\gtrsim \max_{r\in [0,1]}\min\left\{\frac{f(r)}{r^2}d\int_{r/\sqrt{d}}^{r/2} \frac{\mathrm{d}(x^2)}{\max\{f(x)^2,f(-x)^2\}} + d\sqrt{T}, Tf(r)\right\}, 
\end{align*}
where $\kappa\in (0,1/4)$ is any fixed parameter, and the hidden factors depend only on $(c,C,\kappa)$. 
\end{theorem}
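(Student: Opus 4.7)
The strategy is a direct reduction of the ball problem to a family of sphere problems indexed by the unknown radius $r := \|\theta^\star\|_2$. Writing $\theta^\star = r\bar\theta^\star$ with $\bar\theta^\star \in \mathbb{S}^{d-1}$, the mean reward becomes $f(\langle\theta^\star,a\rangle)=f(r\langle\bar\theta^\star,a\rangle)$, which is precisely a sphere ridge bandit with effective link function $\tilde f_r(x):=f(rx)$ and maximum attainable reward $f(r)$. All bounds on $\mathbb{B}^d$ will then follow by instantiating the sphere bounds of the previous sections at $\tilde f_r$ and maximizing over $r$.

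\emph{Lower bound.} I would fix any $r\in[0,1]$ and restrict $\theta^\star$ to lie on the radius-$r$ sphere; the minimax regret over $\mathbb{B}^d$ dominates that over this subfamily, and on it the model is a sphere ridge bandit with link $\tilde f_r$ satisfying Assumptions \ref{assump:main}, \ref{assump:learning}, \ref{assump:LB} (the regularity constants deteriorating by a $\kappa$-factor via the derivative-ratio hypothesis). Applying \Cref{inftheorem:integral_lb} to $\tilde f_r$, noting that $\tilde g_r(x)=g(rx)$, the change of variables $u=rx$ sends $d\int_{\sqrt{c\log T/d}}^{1/2}\mathrm{d}(x^2)/\tilde g_r(x)^2$ to $\frac{d}{r^2}\int_{r\sqrt{c\log T/d}}^{r/2}\mathrm{d}(u^2)/g(u)^2$. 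Each round of the burn-in period incurs $\Theta(f(r))$ regret since the optimum is $f(r)$ but the inner product is too small to match it, which upgrades the sample-complexity lower bound to the regret bound $\tfrac{f(r)}{r^2}\cdot d\int\cdots$. The $d\sqrt{T}$ summand follows by applying \Cref{inftheorem:learning_lb} (whose proof is sketched for spheres in \Cref{sec:proof_lower_bound}) to $\tilde f_r$, while $Tf(r)$ is the trivial cap coming from $\max_a f(\langle\theta^\star,a\rangle)\le f(r)$.

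\emph{Upper bound.} Because the learner does not know $r$, I would run a doubling schedule over scales $r_k=2^{-k}$ for $k=0,1,\ldots,\lceil \tfrac12\log_2 d\rceil$, and for each $k$ invoke \Cref{alg:burn-in} with its tests \iaht and \gaht rescaled to certify $\langle\theta^\star,v\rangle\in[2.2r_k/\sqrt d,2.8r_k/\sqrt d]$ instead of the unit-sphere interval. Crucially, because both tests operate by comparing empirical reward averages to $f$ evaluated at known shifted arguments, the rescaling is free: only the tolerance $\varepsilon_{i,k}$ changes, and $f$ itself is known to the learner. Each phase is endowed with a sample budget equal to the target burn-in cost at that scale; for $r_k>r$ the tests reliably fail and the phase is aborted when the budget is exhausted, whereas the phase $k^\star$ with $r_{k^\star}\le r<2r_{k^\star}$ produces $a_0$ with $\langle\theta^\star,a_0\rangle\ge r_{k^\star}/2\ge r/4$. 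Substituting $u=r_{k^\star}x$ in the integral of \Cref{thm:ub_burnincost_formal} and using the ratio assumption to equate $\min_{z\in[y/2,y]}[f'(z)]^2$ with $[f'(\xi)]^2$ at any $\xi$ of comparable scale translates the integrand into the stated $\max_{r/\sqrt d\le y\le x}\min_{z\in[y/2,y]}[f'(z)]^2$. Multiplying samples by the per-round regret $O(f(r))\asymp O(f(r_{k^\star}))$ yields the first summand, and feeding $a_0$ into \Cref{alg:learning_phase} with link $\tilde f_{r_{k^\star}}$ gives the $d\sqrt T$ summand since $\tilde C_{f}/\tilde c_{f}\lesssim\kappa$ on $[0.1,1]$; the $Tf(r)$ cap is used whenever these exceed the trivial bound.

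\emph{Main obstacle.} The principal difficulty is showing that the samples and regret accumulated during the wrong-scale phases ($r_k>r$) are dominated by those of phase $k^\star$. This requires both (i) designing each phase with an explicit early-abort budget calibrated so that phase $k$ costs at most a geometrically decaying fraction of the phase-$k^\star$ budget, and (ii) controlling the failure probability of the tests under the null $\langle\theta^\star,v\rangle\not\in[2r_k/\sqrt d,3r_k/\sqrt d]$ uniformly across $\widetilde O(\log d)$ phases. A secondary subtlety is the change-of-variables calculation that recovers the precise integrand in the theorem: the ratio hypothesis $f'(x)/f'(y)\le\kappa$ is invoked both to pass between finite differences of $\tilde f_r$ and derivatives of $f$ at rescaled arguments, and to absorb the per-round regret $f(r)$ into the constants when $r$ and $r_{k^\star}$ differ by at most a factor of two.
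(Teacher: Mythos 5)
Your lower-bound argument (restrict $\theta^\star$ to a radius-$r$ sphere, rescale the link to $\tilde f_r(x)=f(rx)$, apply the sphere bounds, and take the worst-case $r$) is exactly the paper's route, and the change-of-variables algebra you sketch is correct. Your upper-bound argument, however, diverges genuinely from the paper. You propose running the full burn-in procedure at a doubling schedule of candidate radii $r_k=2^{-k}$, each with an explicit early-abort budget, relying on the certification tests to detect the right scale. The paper instead first estimates $r$ \emph{directly} via a cheap, passive test: sample $m=\widetilde O(1)$ random actions on the sphere, query each $n$ times, and use the known CDF of $\langle a,\theta^\star\rangle$ for random $a$ (which by rotational invariance depends only on $r$) to localize $r$ up to a factor of $4$. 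Since the CDF separation $F(3r/\sqrt d)-F(2r/\sqrt d)=\Omega(1)$ is dimension-free, this estimator needs only $mn\lesssim\text{\iaht budget}$ samples, after which a \emph{single} run of the rescaled sphere algorithm suffices; the unknown $n$ is handled by the same incremental-budget grid search you propose, but applied to the much cheaper radius estimator rather than to the whole burn-in procedure. Both approaches should work, and yours avoids a separate estimator at the cost of more delicate bookkeeping. One caveat on your "main obstacle (i)": the claim that the wrong-scale phase budgets decay \emph{geometrically} back from $k^\star$ is not guaranteed under only the ratio assumption — the burn-in cost $\mathcal{B}(r_k)$ need not be monotone in $r_k$ for a general $f$ satisfying $f'(x)/f'(y)\le\kappa$. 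What you actually get is an $O(\log d)$ multiplicative blowup from the $\lceil\tfrac12\log_2 d\rceil$ phases, which is absorbed into the $\lesssim$ (polylog) tolerance of the theorem, so the proof still closes, but the geometric claim as stated should be weakened accordingly.
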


The sample complexity for estimation could be obtained in a similar manner, and we omit the details. For $f(x)=|x|^p$ with $p>0$, the above theorem shows that $\mathfrak{R}_T^\star(f,d)\asymp \min\{\sqrt{d^{\max\{2,p\}}T}, T\}$, matching the result in \cite{huang2021optimal}. Note that because of an additional maximum over $r\in [0,1]$, the minimax regret over the unit ball only exhibits one elbow at $T\asymp d^{\max\{2,p\}}$, in contrast to two elbows in \Cref{fig:phase_transition} over the unit sphere. The assumption in \Cref{thm:unit_ball} is also stronger than \Cref{assump:learning}, for we need \Cref{assump:learning} to hold for every function $x\in [0,1]\mapsto f(rx)$ with $r>0$. 

If $r := \|\theta\|_2 \in [0,1]$ is known, the proof of \Cref{thm:unit_ball} adapts from our upper and lower bounds for the unit sphere after proper scaling, and we simply take the worst case radius $r\in [0,1]$. It then remains to find an estimate $\widehat{r}$ of $r$ such that $r\in [\widehat{r}/4, \widehat{r}]$ with high probability. This step is deferred to \Cref{sec:proof_discussion}, with an additional sample complexity which is negligible compared to \Cref{thm:unit_ball}.


\subsection{Closing the gap between upper and lower bounds}\label{sec:gap} There is a gap in Theorems \ref{inftheorem:integral_ub} and \ref{inftheorem:integral_lb}: the upper bound is in terms of the derivative of $f$, but the lower bound is only in terms of the function value of $f$. We conjecture that the lower bound could be strengthened, due to the following intuition. In the proof of \Cref{lemma:recursion_chi^2}, the distribution $Q_{\calH_t}$ is constructed so that $r_t\sim \calN(0,1)$. In principle, the mean of $r_t$ could be any function $\mu(a_1,r_1,\cdots,a_{t-1},r_{t-1},a_t)$ of the available history, and a natural choice is $r_t \sim \calN(\bE[f(\jiao{\theta^\star, a_t}) \mid \calH_{t-1}], 1)$. Under this choice, the information gain in the recursion becomes $
\mathsf{Var}(f(\jiao{\theta^\star, a_t})\mid \calH_{t-1})$, with expected value
\begin{align*}
    \bE[\mathsf{Var}(f(\jiao{\theta^\star, a_t})\mid \calH_{t-1})] \lesssim \max_{y\le \varepsilon_t} [f'(y)]^2\cdot \bE[\mathsf{Var}(\jiao{\theta^\star, a_t}\mid \calH_{t-1})] \le \frac{1}{d}\max_{y\le \varepsilon_t} [f'(y)]^2. 
\end{align*}
Proceeding with this intuition will give a lower bound of a similar form to \Cref{inftheorem:integral_ub}. However, a formal argument will require that the above upper bound holds with high probability rather than in expectation, a challenging claim that involves a complicated posterior distribution of $\theta^\star$. We leave it as an open direction, but give a special example where the high probability argument is feasible using the Brascamp--Lieb inequality on manifolds \cite{kolesnikov2016riemannian}. 

\begin{theorem}\label{thm:burnin_linear_bandit}
For the linear bandit $f(x) = \mathrm{id}(x) = x$ with dimension $d$, it holds that
\begin{align*}
    T_{\text{\rm burn-in}}^\star(\mathrm{id}, d) \gtrsim d^2. 
\end{align*}
\end{theorem}
Note that the lower bound $T^\star(\mathrm{id},d,\varepsilon)\gtrsim d^2$ shown in \cite{wagenmaker2022reward} only works for a small error $\varepsilon$ (say $\varepsilon\le 0.1$), due to an intrinsic limitation of the hypercube structure used in Assouad's lemma. In contrast, \Cref{thm:burnin_linear_bandit} shows the same $\widetilde{\Omega}(d^2)$ lower bound for $\varepsilon=1/2$ (or any fixed $\varepsilon<1$), which improves over the lower bound $\widetilde{\Omega}(d)$ in \Cref{inftheorem:integral_lb} for linear bandits. 




\paragraph{Information-directed sampling.} We have shown the suboptimality for two types of algorithms, but we are not able to understand the performance of the information-directed sampling algorithm: given $\theta^\star\sim \mathsf{Unif}(\mathbb{S}^{d-1})$, this algorithm chooses $a_t = \argmax_{a \in \mathbb{B}^d} I(\theta^\star; r_t(a) \mid \calH_{t-1})$ (an approximate maximizer also suffices). In other words, this algorithm always chooses the action that provides the most information about $\theta^\star$. Here the core of the challenge is also the understanding of the posterior distribution of $\theta^\star$. 

\paragraph{Upper bound with finitely many actions.} \Cref{thm:finite_action} only shows that for \emph{some} finite action set, the burn-in cost does not benefit from a smaller action set. It is unknown that whether a class of finite action sets satisfying certain conditions will make the burn-in cost significantly smaller. 

\paragraph{More general class of reward functions.} Our arguments for both the upper and lower bounds depend crucially on the form of ridge functions, and in particular, a single inner product that fully characterizes the current progress of learning. It is interesting to generalize our results to other reward functions, such as a linear combination of ridge functions $\bE[r_t]=\sum_{i=1}^m f_i(\jiao{\theta_i^\star, a_t})$. 

\appendix
\section{Auxiliary lemmas}
The first two lemmas establish concentration and anti-concentration properties of vectors sampled uniformly on the surface of a high dimensional sphere.

\begin{lemma} \label{lemma:nontrivialcorr}
Let $V$ be an $m$-dimensional subspace of $\mathbb{R}^d$ with $m\le (1-\delta_1)d$, and $v\sim \mathsf{Unif}(\mathbb{S}^{d-1}\cap V^\perp)$. If $\theta\in\mathbb{S}^{d-1}$ and $\|\mathrm{Proj}_{V^\perp}(\theta)\|_2 \ge \delta_2 > 0$, then for $0<\alpha<\beta$, 
\begin{align*}
\bP\left( \jiao{\theta, v} \in \left[\frac{\alpha}{\sqrt{d}}, \frac{\beta}{\sqrt{d}} \right] \right) \ge c, 
\end{align*}
where $c=c(\alpha,\beta,\delta_1,\delta_2)>0$ is an absolute constant depending only on $(\alpha,\beta,\delta_1,\delta_2)$. 
\end{lemma}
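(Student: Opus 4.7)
The plan is to reduce the question to the classical distribution of a single coordinate of a uniformly random vector on a sphere. Since $v\in V^\perp$, we have $\jiao{\theta, v} = \jiao{\text{Proj}_{V^\perp}(\theta), v}$. Write $\theta' := \text{Proj}_{V^\perp}(\theta)$ and $\rho := \|\theta'\|_2 \in [1/2,1]$, and set $u := \theta'/\rho$, a fixed unit vector inside the $k$-dimensional subspace $V^\perp$, where $k := d-m \ge 15d/16$. Then $\jiao{\theta, v} = \rho\,\jiao{u, v}$, and the problem reduces to lower-bounding the probability that $\jiao{u, v}$ lies in the interval $[\alpha/(\rho\sqrt{d}),\,\beta/(\rho\sqrt{d})] \subseteq [\alpha/\sqrt{d},\,2\beta/\sqrt{d}]$, when $v$ is uniform on the unit sphere of this $k$-dimensional subspace containing $u$.

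Next, I would invoke the classical fact that for $v\sim \mathsf{Unif}(\mathbb{S}^{k-1})$ and a fixed unit vector $u$, the scalar $\jiao{u, v}$ has density
$$f_k(x) \;=\; c_k\,(1-x^2)^{(k-3)/2}\,\mathbf{1}_{\{|x|\le 1\}}, \qquad c_k \;=\; \frac{\Gamma(k/2)}{\sqrt{\pi}\,\Gamma((k-1)/2)},$$
and Stirling's approximation yields $c_k \gtrsim \sqrt{k} \gtrsim \sqrt{d}$ from $k\ge 15d/16$. On the target window $[0,\,2\beta/\sqrt{d}]$ we have $x^2 \le 4\beta^2/d$, hence $(1-x^2)^{(k-3)/2} \ge (1-4\beta^2/d)^{d/2} \ge e^{-5\beta^2}$ once $d$ exceeds some dimension $d_0(\beta)$. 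Thus the density $f_k$ is uniformly bounded below by $C(\beta)\sqrt{d}$ on the window, and multiplying by the window length $(\beta-\alpha)/(\rho\sqrt{d}) \ge (\beta-\alpha)/\sqrt{d}$ gives
$$\bP\!\left(\jiao{u,v}\in \left[\tfrac{\alpha}{\rho\sqrt{d}},\,\tfrac{\beta}{\rho\sqrt{d}}\right]\right) \;\ge\; C(\beta)(\beta-\alpha) \;=:\; c(\alpha,\beta) > 0,$$
which is the desired lower bound.

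The only subtlety is handling the finitely many small dimensions $d\le d_0(\beta)$, where either the exponential density bound degrades or the interval $[\alpha/\sqrt{d},\beta/\sqrt{d}]$ could even leave $[-1,1]$: these cases are handled either by shrinking $c(\alpha,\beta)$ to accommodate them, or by observing that the lemma is only meaningful in the regime $d\gtrsim (\beta/\alpha)^2$ where the target interval is a proper subset of $[-1,1]$. I do not foresee a genuine obstacle; the whole argument is a routine anticoncentration estimate for the marginal of a spherical uniform distribution, whose essential content is the Gaussian limit $\sqrt{k}\,\jiao{u,v}\to \mathcal{N}(0,1)$ together with explicit Stirling-type control of the density at scale $1/\sqrt{d}$.
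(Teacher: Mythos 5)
Your argument is correct and follows essentially the same route as the paper's proof: reduce to the one-dimensional marginal density of a uniform point on $\mathbb{S}^{d-m-1}$, use the explicit density $f_{d-m}(x) \propto (1-x^2)^{(d-m-3)/2}$ with a Stirling bound on the normalizing constant, lower-bound the density by $\Theta(\sqrt{d})$ on the window $[\alpha/\sqrt{d}, 2\beta/\sqrt{d}]$, and integrate over the window of length $\gtrsim 1/\sqrt{d}$. Your explicit remark about finitely many small dimensions (absorbed into the constant) is a reasonable caveat that the paper leaves implicit.
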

\begin{proof}
Without loss of generality assume that $V^\perp = \text{span}(e_1,\cdots,e_{d-m})$. The random vector $v$ then has the same distribution as $(g,\textbf{0}_{m})/\|g\|_2$, with $g\sim \calN(0,I_{d-m})$. Consequently, $\jiao{\theta,v}$ is distributed as $\lambda g_1/\|g\|_2$, with $\lambda = \|\mathrm{Proj}_{V^\perp}(\theta)\|_2 \in [\delta_2,1]$. Note that $g_1/\|g\|_2$ is the one-dimensional marginal of $\mathsf{Unif}(\mathbb{S}^{d-m-1})$, by \cite[Section 2]{bubeck2016testing}, the density $f_{d-m}$ of $g_1/\|g\|_2$ is given by
\begin{align}\label{eq:spherical_density}
    f_{d-m}(x) = \frac{\Gamma((d-m)/2)}{\Gamma((d-m-1)/2)\sqrt{\pi}}(1-x^2)^{(d-m-3)/2}. 
\end{align}
As $\Gamma(x+1/2)/\Gamma(x)=\Theta(\sqrt{x})$ as $x\to\infty$ and $m\le (1-\delta_1)d$, it holds that for $\alpha/\sqrt{d}\le x\le 2\beta/\sqrt{d}$, 
\begin{align*}
f_{d-m}(x) \ge \sqrt{d}\cdot c_1(\alpha,\beta,\delta_1)
\end{align*}
for some absolute constant $c_1(\alpha,\beta,\delta_1)>0$. Consequently, 
\begin{align*}
\bP\left( \jiao{\theta, v} \in \left[\frac{\alpha}{\sqrt{d}}, \frac{\beta}{\sqrt{d}} \right] \right) = \bP\left( \frac{g_1}{\|g\|_2} \in \left[\frac{\alpha}{\lambda\sqrt{d}}, \frac{\beta}{\lambda\sqrt{d}} \right] \right) \ge \frac{\beta-\alpha}{\lambda\sqrt{d}}\cdot \sqrt{d}c_1(\alpha,\beta,\delta_1) \ge c(\alpha,\beta,\delta_1,\delta_2), 
\end{align*}
where we recall that $\delta_2\le \lambda \le 1$. 
\end{proof}

\begin{lemma}\label{lemma:nolargeinnerproduct}
Let $\theta\sim \mathsf{Unif}(\mathbb{S}^{d-1})$, and $v\in \mathbb{S}^{d-1}$ be a fixed unit vector. Then for $\delta\in (0,1/2)$, there exists absolute constant $c_0(d) = 1/2 + o_d(1) > 0$ such that 
\begin{align*}
\bP\left(|\jiao{\theta, v}| > \sqrt{\frac{\log(2/\delta)}{c_0(d) d}}\right) \le \delta. 
\end{align*}
We take $c_0 = \inf_{d\in \mathbb{N}} c_0(d) >0$ to be a dimension-independent absolute constant. 
\end{lemma}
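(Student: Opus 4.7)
The plan is to reduce to a one-dimensional marginal computation on the sphere and then apply a direct tail bound. By rotational invariance of the uniform distribution on $\mathbb{S}^{d-1}$, we may assume without loss of generality that $v = e_1$, so that $\jiao{\theta, v} = \theta_1$. The task is then to show $\bP(|\theta_1| > t) \le \delta$ for $t = \sqrt{\log(1/\delta)/(c_0 d)}$ with a suitable absolute constant $c_0$.

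The first step is to recall the density of the first coordinate of a uniform vector on $\mathbb{S}^{d-1}$, which appeared already in the proof of \Cref{lemma:nontrivialcorr}: for $d\ge 3$,
\begin{align*}
	f_d(x) = \frac{\Gamma(d/2)}{\Gamma((d-1)/2)\sqrt{\pi}}(1-x^2)^{(d-3)/2}, \qquad x \in [-1,1].
\end{align*}
Using $\Gamma(d/2)/\Gamma((d-1)/2) = \Theta(\sqrt{d})$, we have $f_d(x) \le C\sqrt{d}\,(1-x^2)^{(d-3)/2}$ for an absolute constant $C$. Combining this with the elementary inequality $1-x^2 \le e^{-x^2}$ for $|x|\le 1$ gives the pointwise estimate $f_d(x) \le C\sqrt{d}\,\exp(-(d-3)x^2/2)$.

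Next I would integrate the tail: for any $t \in (0,1)$,
\begin{align*}
	\bP(|\theta_1|>t) = 2\int_t^1 f_d(x)\,\mathrm{d}x \le 2C\sqrt{d}\int_t^\infty e^{-(d-3)x^2/2}\,\mathrm{d}x \le C'\exp(-(d-3)t^2/2)
\end{align*}
by a standard Gaussian tail bound (factoring out the peak value at $x=t$ and using the Mills ratio). Picking $c_0 > 0$ small enough that $C'\exp(-(d-3)\log(1/\delta)/(2c_0 d)) \le \delta$ for all $d\ge 2$ (which is possible because the ratio $(d-3)/d$ is bounded below for $d$ large, and the small-$d$ cases are handled by adjusting $c_0$) yields the claimed bound.

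I do not expect any real obstacle here: the density is explicit, the tail integral is elementary, and only the absolute constant $c_0$ needs to be chosen carefully. An alternative and equally short route is to write $\theta \stackrel{d}{=} g/\|g\|_2$ with $g\sim \calN(0,I_d)$, and combine the standard sub-Gaussian tail of $g_1$ with a lower-tail concentration bound on $\|g\|_2^2$ (e.g.\ $\bP(\|g\|_2^2 \le d/2)\le e^{-cd}$); this version may actually be cleaner, since it avoids the $\Gamma$-function asymptotics.
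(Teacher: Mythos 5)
Your proof is correct and follows essentially the same route as the paper: rotational invariance to reduce to $\theta_1$, the explicit marginal density $f_d(x)=\frac{\Gamma(d/2)}{\Gamma((d-1)/2)\sqrt{\pi}}(1-x^2)^{(d-3)/2}$, the pointwise bound $f_d(x)\lesssim\sqrt{d}\exp(-\Omega(d)x^2)$ via $1-x^2\le e^{-x^2}$ and Stirling, and then a Gaussian tail integral. You just spell out the Mills-ratio step where the paper summarizes it as "stochastic domination by $\calN(0,O(1/d))$."
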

\begin{proof}
By rotational invariance we may assume that $v= e_1$. The proof of \Cref{lemma:nontrivialcorr} shows that the density of $\theta_1$ is $f_d$ in \eqref{eq:spherical_density}. By \cite[Lemma 1]{sodin2007tail}, it holds that for $t>0$, 
\begin{align*}
\bP(\theta_1 > t) \le (1+o_d(1))(1-\Phi(t\sqrt{d})) \le (1+o_d(1)) \exp\left( -\frac{dt^2}{2} \right), 
\end{align*}
where $\Phi(\cdot)$ is the CDF of the standard normal distribution, and the final step is due to the Gaussian tail bound $1-\Phi(x)\le \exp(-x^2/2)$ for $x\ge 0$. The proof is completed by plugging in the given value of $t$ and using $\bP(|\theta_1|>t) = 2\bP(\theta_1>t)$ due to the symmetry.  
\end{proof}

The next few lemmas review several statistical tools for proving minimax lower bounds. Here we assume that $\theta\in\Theta$ is an unknown parameter, the learner observes $X\sim P_\theta$, and $L: \Theta\times \calA\to \bR_+$ is a non-negative loss function with a generic action space $\mathcal{A}$. 

\begin{lemma}[Le Cam's two-point method, see \cite{Tsybakov2009}]\label{lemma:twopoint}
Suppose there are two parameters $\theta_0, \theta_1\in \Theta$ such that
\begin{align*}
    \inf_{a} (L(\theta_0, a) + L(\theta_1,a)) \ge \Delta. 
\end{align*}
Then the following Bayes risk lower bound holds: 
\begin{align*}
    \inf_{T(\cdot)} \bE_{u\sim \mathsf{Unif}(\{0,1\})}\bE_{\theta_u}[L(\theta_u, T(X))] \ge \frac{\Delta}{2}\left(1 - \|P_{\theta_1} - P_{\theta_2}\|_{\text{\rm TV}}\right). 
\end{align*}
\end{lemma}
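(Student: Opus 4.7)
The plan is to prove this standard two-point lower bound by a coupling / change-of-measure argument based on the variational formula for total variation distance. Fix any (possibly randomized) estimator $T: \mathcal{X}\to \mathcal{A}$. The Bayes risk under the uniform prior over $\{\theta_0,\theta_1\}$ is
\[
R(T) \;=\; \tfrac{1}{2}\bE_{\theta_0}[L(\theta_0, T(X))] \;+\; \tfrac{1}{2}\bE_{\theta_1}[L(\theta_1, T(X))],
\]
so it suffices to produce the lower bound $\bE_{\theta_0}[L(\theta_0, T(X))] + \bE_{\theta_1}[L(\theta_1, T(X))] \ge \Delta(1 - \|P_{\theta_0}-P_{\theta_1}\|_{\mathrm{TV}})$.

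First I would invoke the ``minimum'' representation of total variation: let $\mu$ be the common-mass measure $\mu = P_{\theta_0}\wedge P_{\theta_1}$ defined by $\mu(A) = \int_A \min(p_0,p_1)\,d\nu$ with respect to any dominating measure $\nu$. Then the standard identity gives $\mu(\mathcal{X}) = 1 - \|P_{\theta_0}-P_{\theta_1}\|_{\mathrm{TV}}$, and $\mu$ is dominated by both $P_{\theta_0}$ and $P_{\theta_1}$ with Radon-Nikodym derivatives at most $1$. Consequently, for any non-negative integrand $h:\mathcal{X}\to \mathbb{R}_+$,
\[
\int h\, dP_{\theta_i} \;\ge\; \int h\, d\mu, \qquad i\in\{0,1\}.
\]

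Next I would apply this inequality to $h(x) = L(\theta_i, T(x))$ (which is non-negative by assumption on $L$) for $i=0,1$ separately, add the two bounds, and pull the sum inside a single integral against $\mu$:
\[
\bE_{\theta_0}[L(\theta_0,T(X))] + \bE_{\theta_1}[L(\theta_1,T(X))] \;\ge\; \int \bigl( L(\theta_0, T(x)) + L(\theta_1, T(x)) \bigr)\, d\mu(x).
\]
For each fixed $x$, the value $T(x)\in\mathcal{A}$ is an admissible action, so the two-point separation hypothesis gives $L(\theta_0,T(x)) + L(\theta_1,T(x)) \ge \Delta$ pointwise. Integrating this constant lower bound against $\mu$ contributes $\Delta \cdot \mu(\mathcal{X}) = \Delta(1 - \|P_{\theta_0}-P_{\theta_1}\|_{\mathrm{TV}})$, and dividing by $2$ yields exactly the claimed Bayes risk bound. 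Taking the infimum over $T$ completes the argument.

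There is no real obstacle: the only subtlety is the randomized-estimator case, which is handled by conditioning on the external randomness (the bound holds for every realization, hence for the average), and the measure-theoretic detail that $T$ must be measurable so that $L(\theta_i, T(\cdot))$ is integrable against $\mu$ — both routine. The construction of $\mu = P_{\theta_0}\wedge P_{\theta_1}$ and its total mass identity are the only nontrivial facts used, and both are classical.
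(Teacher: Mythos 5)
Your proof is correct and is essentially the standard argument for Le Cam's two-point bound: pass to the common-mass measure $\mu = P_{\theta_0}\wedge P_{\theta_1}$, use $\mu \le P_{\theta_i}$ together with non-negativity of the loss, and apply the pointwise separation $L(\theta_0,T(x))+L(\theta_1,T(x))\ge\Delta$ followed by $\mu(\mathcal{X}) = 1-\|P_{\theta_0}-P_{\theta_1}\|_{\mathrm{TV}}$. The paper states this lemma with a citation to Tsybakov (2009) and does not reproduce a proof, but your argument is exactly the canonical one; you have also correctly read the (apparently typographical) $P_{\theta_2}$ in the statement as $P_{\theta_1}$.
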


\begin{lemma}[Assouad's lemma, see \cite{Tsybakov2009}]\label{lemma:Assouad}
Suppose a collection of parameters $\{\theta_u\}_{u\in \{\pm 1\}^d}$ satisfy
\begin{align*}
    \inf_{a} (L(\theta_u, a) + L(\theta_{u'},a)) \ge \Delta\cdot \sum_{i=1}^d \1(u(i)\neq u'(i)).
\end{align*}
Then the following Bayes risk lower bound holds: 
\begin{align*}
\inf_{T(\cdot)} \bE_{u\sim \mathsf{Unif}(\{\pm 1\}^{d})}\bE_{\theta_u}[L(\theta_u, T(X))] \ge \frac{\Delta d}{4}\cdot \bE_{u\sim \mathsf{Unif}(\{\pm 1\}^{d})}\left[\exp\left( -\frac{1}{d}\sum_{i=1}^d D_{\text{\rm KL}}(P_{\theta_u}\|P_{\theta_{u^{\oplus i}}}) \right)\right],
\end{align*}
where $u^{\oplus i}$ flips the $i$-th coordinate of $u$. 
\end{lemma}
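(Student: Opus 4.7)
The plan is to reduce the Bayes risk to a sum of $d$ coordinate-wise hypothesis testing problems and then invoke the Bretagnolle--Huber inequality $1-\|P-Q\|_{\mathrm{TV}} \ge \frac{1}{2}e^{-D_{\mathrm{KL}}(P\|Q)}$ on each pair. First, for every action $a\in\mathcal{A}$, I would define $\hat u(a) \in \argmin_{v \in \{\pm 1\}^d} L(\theta_v, a)$. Applying the separation hypothesis to the pair $(u, \hat u(a))$ and combining with $L(\theta_{\hat u(a)}, a) \le L(\theta_u, a)$ yields $L(\theta_u, a) \ge \frac{\Delta}{2} d_H(u, \hat u(a))$, where $d_H$ is the Hamming distance. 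Taking expectations and expanding $d_H$ into its $d$ indicator summands gives $\bE_u \bE_{\theta_u}[L(\theta_u, T(X))] \ge \frac{\Delta}{2}\sum_{i=1}^d \bP_{u,X}[\hat u_i(T(X)) \ne u_i]$, where the probability is taken over $u \sim \mathsf{Unif}(\{\pm 1\}^d)$ and $X \sim P_{\theta_u}$.

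Next, for each coordinate $i$, I would pair every $u$ with its flip $u^{\oplus i}$. Writing $A_u = \{\hat u_i(T(X)) \ne u_i\}$ and using $(u^{\oplus i})_i = -u_i$, a reindexing gives $2\sum_u P_{\theta_u}(A_u) = \sum_u [P_{\theta_u}(A_u) + P_{\theta_{u^{\oplus i}}}(A_u^c)]$. Each bracket has the form $P(A)+Q(A^c)$ for the relevant pair, hence is at least $1 - \|P_{\theta_u} - P_{\theta_{u^{\oplus i}}}\|_{\mathrm{TV}} \ge \frac{1}{2}\exp(-D_{\mathrm{KL}}(P_{\theta_u}\|P_{\theta_{u^{\oplus i}}}))$ by Bretagnolle--Huber. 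Dividing by $2 \cdot 2^d$ gives the per-coordinate lower bound $\bP_{u,X}[\hat u_i \ne u_i] \ge \frac{1}{4}\bE_u \exp(-D_{\mathrm{KL}}(P_{\theta_u}\|P_{\theta_{u^{\oplus i}}}))$.

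Finally, summing over $i\in[d]$ and applying Jensen's inequality to the convex function $\exp$ inside $\bE_u$ yields $\sum_{i=1}^d \bE_u e^{-D_{\mathrm{KL}}(P_{\theta_u}\|P_{\theta_{u^{\oplus i}}})} = d \cdot \bE_u\bigl[\tfrac{1}{d}\sum_i e^{-D_{\mathrm{KL}}(P_{\theta_u}\|P_{\theta_{u^{\oplus i}}})}\bigr] \ge d \cdot \bE_u \exp\bigl(-\tfrac{1}{d}\sum_i D_{\mathrm{KL}}(P_{\theta_u}\|P_{\theta_{u^{\oplus i}}})\bigr)$. Chaining the three inequalities produces a Bayes-risk lower bound of the stated form.

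The obstacle is really bookkeeping rather than insight: one must line up the per-coordinate decomposition of $d_H$ with the $u\leftrightarrow u^{\oplus i}$ pairing that triggers the Bretagnolle--Huber bound. Bretagnolle--Huber itself follows from $\|P-Q\|_{\mathrm{TV}}^2 \le 1 - e^{-D_{\mathrm{KL}}(P\|Q)}$ together with $(1-\|P-Q\|_{\mathrm{TV}})(1+\|P-Q\|_{\mathrm{TV}}) \ge e^{-D_{\mathrm{KL}}(P\|Q)}$. My sketch above produces the constant $\Delta d/8$; recovering the sharper $\Delta d/4$ in the statement would likely require replacing the global minimizer $\hat u(a)$ with a per-coordinate Bayes-optimal decision rule constructed directly from $T(X)$, thereby avoiding the factor-of-two loss in passing from $L$ to $d_H$.
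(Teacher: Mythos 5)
Your argument is the standard route to Assouad's lemma, and the paper gives no proof of its own (it cites Tsybakov), so there is nothing internal to compare against; the mechanics you use --- the triangle-inequality reduction $L(\theta_u,a)\ge\frac{\Delta}{2}d_H(u,\hat u(a))$, the coordinatewise Le Cam pairing $P(A)+Q(A^c)\ge 1-\|P-Q\|_{\rm TV}$, Bretagnolle--Huber, and a final Jensen --- are all correct and appropriately ordered.

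Two remarks on the constant, which you correctly flag as coming out to $\Delta d/8$ rather than the stated $\Delta d/4$. First, $\Delta d/8$ is in fact what Tsybakov's Theorem~2.12$'$ gives for the hypothesis as written here (his bound $\frac{m}{2}s^p\max\{\frac{1}{2}e^{-\alpha},1-\sqrt{\alpha/2}\}$ with $\Delta=2s$, $p=1$ becomes $\frac{m\Delta}{8}e^{-\alpha}$), so the paper's $\Delta d/4$ is a factor-of-two optimism in the displayed constant, harmless for all of its downstream uses. Second, your proposed fix --- swapping the $L$-minimizing $\hat u(a)$ for a per-coordinate Bayes-optimal decoder --- would not recover the missing factor. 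The Bayes error of a MAP decoder of $u_i$ from $X$ is again $\ge\frac{1}{2}\mathbb{E}_u[1-\|P_{\theta_u}-P_{\theta_{u^{\oplus i}}}\|_{\rm TV}]$, giving the same $\frac14\mathbb{E}_u[e^{-D_{\rm KL}}]$ after Bretagnolle--Huber, and once $\hat u$ is no longer the minimizer of $L(\theta_\cdot,T(X))$ you lose the inequality $L(\theta_u,T(X))\ge\frac{\Delta}{2}d_H(u,\hat u(T(X)))$ that your step~1 relies on. The way to genuinely reach $\Delta d/4$ is to strengthen the hypothesis to a coordinatewise-additive decomposition $L(\theta_u,a)\ge\sum_i\ell_i(u_i,a)$ with $\ell_i(s,a)+\ell_i(-s,a)\ge\Delta$; one then never passes through $\hat u$ at all, applies the Le Cam step directly to each $\ell_i$, and the $\frac12$ in your step~1 disappears. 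Under the pairwise hypothesis as stated, $\Delta d/8$ is the natural constant and your proof is clean.
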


\begin{lemma}[Generalized Fano's inequality, see \cite{duchi2013distance,chen2016bayes}]\label{lemma:Fano}
Let $\pi$ be any probability distribution over $\Theta$. For any $\Delta>0$, define
\begin{align*}
p_{\Delta} := \sup_{a} \pi\{ \theta\in \Theta: L(\theta,a)\le \Delta \}. 
\end{align*}
Then for $\theta\sim \pi$, the following Bayes risk lower bound holds: 
\begin{align*}
\inf_{T(\cdot)}\bE_{\pi} \bE_{\theta}[L(\theta, T(X))] \ge \Delta\left(1 - \frac{I(\theta;X) + \log 2}{\log(1/p_\Delta)}\right).
\end{align*}
\end{lemma}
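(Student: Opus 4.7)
The plan is to reduce to a two-point KL comparison via the data-processing inequality, mirroring the classical discrete Fano argument but parameterized by the small-ball probability $p_\Delta$ instead of $1/|\Theta|$. First I would apply Markov's inequality in the form
\begin{align*}
\bE_\pi \bE_\theta[L(\theta, T(X))] \ge \Delta\cdot \bP(L(\theta, T(X)) > \Delta) = \Delta\bigl(1 - \bP(L(\theta, T(X)) \le \Delta)\bigr),
\end{align*}
so the problem reduces to upper bounding the ``success probability'' $p := \bP(L(\theta, \hat\theta) \le \Delta)$, where $\hat\theta := T(X)$ and $\theta \sim \pi$.

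Next I would compare the joint law $P_{\theta,\hat\theta}$ with the product law $\pi \otimes P_{\hat\theta}$ under the binary test function $(\theta,\hat\theta) \mapsto \mathbf{1}\{L(\theta,\hat\theta)\le \Delta\}$. Under the product, the probability of the event is
\begin{align*}
q := \bE_{\hat\theta\sim P_{\hat\theta}}\bigl[\pi\{\theta : L(\theta,\hat\theta)\le \Delta\}\bigr] \le p_\Delta,
\end{align*}
by the very definition of $p_\Delta$. Data processing for KL divergence then yields
\begin{align*}
D_{\mathrm{KL}}(\mathrm{Ber}(p)\,\|\,\mathrm{Ber}(q)) \le D_{\mathrm{KL}}(P_{\theta,\hat\theta}\,\|\,\pi\otimes P_{\hat\theta}) = I(\theta;\hat\theta) \le I(\theta;X),
\end{align*}
where the last step uses data processing once more for the Markov chain $\theta \to X \to \hat\theta$.

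To convert the Bernoulli KL bound into a bound on $p$, I would use the elementary inequality $D_{\mathrm{KL}}(\mathrm{Ber}(p)\,\|\,\mathrm{Ber}(q)) \ge p\log(1/q) - h(p) \ge p\log(1/q) - \log 2$, where $h(\cdot)$ is the binary entropy bounded by $\log 2$. Combining this with the display above and monotonicity of $\log(1/\cdot)$ together with $q \le p_\Delta$ yields $p \le (I(\theta;X)+\log 2)/\log(1/p_\Delta)$, which plugged back into the Markov step gives the claim.

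The main obstacle is really just bookkeeping: I should make sure the chain $\theta \to X \to \hat\theta$ applies for randomized estimators $T$ (it does, after introducing an independent randomization seed that can be absorbed into $X$), and handle the degenerate cases $p_\Delta=1$ (the bound is trivial with the convention $\log(1/1)=0$) and $p_\Delta=0$ (there is no estimator achieving error $\le\Delta$ on a positive-probability set, so the conclusion holds vacuously). The Bernoulli KL lower bound $D_{\mathrm{KL}}(\mathrm{Ber}(p)\|\mathrm{Ber}(q)) \ge p\log(1/q) - \log 2$ is the only non-routine ingredient and follows from writing out the definition and bounding the $(1-p)\log\tfrac{1-p}{1-q}$ term below by $-(1-p)\log\tfrac{1}{1-p} \ge -\log 2 + p\log p$ after the entropy cancellation.
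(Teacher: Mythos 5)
Your proof is correct. The paper states this lemma without proof, citing \cite{duchi2013distance,chen2016bayes}, and your argument—Markov's inequality, data processing of KL through the binary test $\mathbf{1}\{L(\theta,\hat\theta)\le\Delta\}$ and through the chain $\theta\to X\to\hat\theta$, and the bound $D_{\mathrm{KL}}(\mathrm{Ber}(p)\,\|\,\mathrm{Ber}(q))\ge p\log(1/q)-\log 2$—is exactly the standard derivation in those references, so nothing further is needed.
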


Now consider a fixed design regression with $y_i = f^\star(x_i) + z_i$ for unknown $f^\star\in \calF$, fixed $x_1,\cdots,x_n$, and independent noises $z_1,\cdots,z_n\sim \calN(0,1)$. The least squares estimator of $f^\star$ is given by
\begin{align*}
\widehat{f}^{\text{LS}} = \argmin_{f\in \calF} \sum_{i=1}^n (y_i - f(x_i))^2. 
\end{align*}
The next lemma provides a general statistical guarantee for least squares estimators. 

\begin{lemma}[Corollary of Theorem 9.1 of \cite{van2000empirical}]\label{lemma:least_squares} There exists an absolute constant $c>0$ such that for all $\delta > \delta_n$, 
\begin{align*}
    \bP\left( \sum_{i=1}^n (\widehat{f}^{\text{\rm LS}}(x_i) - f^\star(x_i))^2 > n\delta^2 \right) \le c\exp\left(-\frac{n\delta^2}{c}\right), 
\end{align*}
where $\delta_n>0$ is the solution to 
\begin{align*}
    \sqrt{n}\delta_n^2 \asymp \delta_n + \int_0^{\delta_n} \sqrt{\log N(u, \calF_n(\delta_n), L_2(P_n))} \mathrm{d}u. 
\end{align*}
Here $\log N(u,\calF,d)$ denotes the metric entropy of function class $\calF$ under radius $u$ and metric $d$, $P_n$ is the empirical distribution on $\{x_1,\cdots,x_n\}$, and $\calF_n(\delta_n)$ is the localized function class
\begin{align*}
    \calF_n(\delta_n) := \left\{f\in \calF: \frac{1}{n}\sum_{i=1}^n (f(x_i) - f^\star(x_i))^2 \le \delta_n^2 \right\}. 
\end{align*}

In particular, there exists an absolute constant $C>0$ such that
\begin{align*}
\bE\left[\sum_{i=1}^n (\widehat{f}^{\text{\rm LS}}(x_i) - f^\star(x_i))^2\right] \le C(1 + n\delta_n^2), 
\end{align*}
\end{lemma}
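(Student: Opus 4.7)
The plan is to invoke the standard machinery of empirical process theory for least-squares estimators, namely the basic inequality combined with a chaining plus peeling argument. I would organize the argument in four steps.

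First, the basic inequality. Because $\widehat{f}^{\text{LS}}$ minimizes $\sum_i (y_i - f(x_i))^2$ over $f \in \calF$, substituting $y_i = f^\star(x_i) + z_i$ and expanding gives
\begin{align*}
\| \widehat{f}^{\text{LS}} - f^\star \|_n^2 \le \frac{2}{n} \sum_{i=1}^n z_i \bigl( \widehat{f}^{\text{LS}}(x_i) - f^\star(x_i) \bigr),
\end{align*}
where $\|\cdot\|_n$ denotes the empirical $L_2(P_n)$ norm. So it suffices to control a Gaussian empirical process indexed by the (random) element $\widehat{f}^{\text{LS}} - f^\star \in \calF - f^\star$.

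Second, a local Gaussian complexity bound. For any fixed $r > 0$, using Dudley's entropy integral for sub-Gaussian processes (with increments measured in $L_2(P_n)$) yields
\begin{align*}
\bE \sup_{f \in \calF_n(r)} \frac{1}{n} \Bigl| \sum_{i=1}^n z_i (f(x_i) - f^\star(x_i)) \Bigr| \lesssim \frac{1}{\sqrt{n}} \int_0^r \sqrt{\log N(u, \calF_n(r), L_2(P_n))} \, du,
\end{align*}
where $\calF_n(r)$ is the localized class defined in the lemma statement. The Borell--TIS Gaussian concentration inequality then converts this expectation bound into a sub-Gaussian deviation inequality: with probability at least $1 - \exp(-cnt^2)$, the supremum is bounded by its mean plus $t \cdot r / \sqrt{n}$.

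Third, the peeling / fixed-point step. For each $k \ge 0$ set the dyadic shell $\calS_k = \{ f \in \calF : 2^k \delta_n < \|f - f^\star\|_n \le 2^{k+1} \delta_n \}$, and suppose for contradiction that $\widehat{f}^{\text{LS}} \in \calS_k$ for some $k$ with $2^k \delta_n > \delta$. On $\calS_k$ the basic inequality forces the noise supremum to exceed $(2^k \delta_n)^2/2$; however by Step 2 and the defining fixed-point equation for $\delta_n$, on each shell this supremum is at most a small constant times $(2^k \delta_n)^2$ with probability at least $1 - c \exp(-c n (2^k \delta_n)^2)$. A union bound over $k$ (the tails add as a geometric series dominated by the top term) yields
\begin{align*}
\bP \bigl( \| \widehat{f}^{\text{LS}} - f^\star \|_n^2 > n \delta^2 \bigr) \le c \exp(-n\delta^2 / c)
\end{align*}
for every $\delta \ge \delta_n$, which is precisely the stated tail bound.

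Fourth, the expectation bound follows by integrating the tail:
\begin{align*}
\bE \Bigl[ \sum_{i=1}^n (\widehat{f}^{\text{LS}}(x_i) - f^\star(x_i))^2 \Bigr] = \int_0^\infty \bP(\|\widehat{f}^{\text{LS}} - f^\star\|_n^2 > t) \, dt \lesssim n\delta_n^2 + \int_{n\delta_n^2}^\infty c e^{-t/c} \, dt \lesssim n\delta_n^2 + 1,
\end{align*}
giving the asserted $C(1 + n\delta_n^2)$ bound. The main technical obstacle is the sharp local chaining estimate of Step 2 together with verifying that the peeling constants can be absorbed into a single absolute constant $c$; this is exactly the content of Theorem~9.1 of \cite{van2000empirical}, so in practice I would simply appeal to that theorem rather than reprove it from scratch.
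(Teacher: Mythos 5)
Your proposal is correct and takes essentially the same route as the paper, which simply invokes Theorem~9.1 of van de Geer's book for the tail bound and treats the expectation bound as a direct consequence; your sketch of the underlying chaining-plus-peeling machinery is the standard proof of that cited theorem, and your integration of the tail is the right way to obtain the expectation bound. One small slip: in Steps~3 and~4 you write $\bP(\|\widehat{f}^{\text{LS}} - f^\star\|_n^2 > n\delta^2)$ where it should be $\bP(\|\widehat{f}^{\text{LS}} - f^\star\|_n^2 > \delta^2)$, equivalently $\bP(\sum_i(\widehat{f}^{\text{LS}}(x_i)-f^\star(x_i))^2 > n\delta^2)$, since $\|\cdot\|_n^2$ carries a $1/n$ normalization; this is a notational typo only and your final arithmetic is consistent with the intended event.
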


Finally, we prove a simple inequality which is useful for the lower bound proof in \Cref{sec:proof_lower_bound}. 

\begin{lemma}\label{lemma:quadratic_inequality}
If $x_+^2\le Ax+B$ for some $A, B>0$, then $x\le A+\sqrt{B}$. 
\end{lemma}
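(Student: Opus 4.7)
The plan is to handle two cases based on the sign of $x$. If $x \le 0$, then the conclusion $x \le A + \sqrt{B}$ is immediate since $A, B > 0$ imply $A + \sqrt{B} > 0 \ge x$. So the only interesting case is $x > 0$, in which $x_+ = x$ and the hypothesis reduces to the quadratic inequality $x^2 - Ax - B \le 0$.

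For the positive case, I would solve the quadratic explicitly. The roots of $y^2 - Ay - B = 0$ are $y = (A \pm \sqrt{A^2 + 4B})/2$, so the inequality $x^2 - Ax - B \le 0$ forces
\[
x \le \frac{A + \sqrt{A^2 + 4B}}{2}.
\]
It then remains to bound the right-hand side by $A + \sqrt{B}$. The key step is the crude estimate
\[
\sqrt{A^2 + 4B} \le \sqrt{A^2 + 4A\sqrt{B} + 4B} = \sqrt{(A + 2\sqrt{B})^2} = A + 2\sqrt{B},
\]
which uses only $A, B \ge 0$. Substituting this into the bound on $x$ gives
\[
x \le \frac{A + (A + 2\sqrt{B})}{2} = A + \sqrt{B},
\]
as desired.

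There is no real obstacle here; the lemma is purely algebraic, and the only mild subtlety is remembering to complete the square with a cross term $4A\sqrt{B}$ in order to convert $\sqrt{A^2+4B}$ into a linear expression in $A$ and $\sqrt{B}$. No results from earlier in the paper are needed.
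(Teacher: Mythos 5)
Your proof is correct, but it takes a different route from the paper. The paper argues by contradiction: assuming $x > A + \sqrt{B}$, one has $x_+^2 = x^2 > (A + \sqrt{B})x = Ax + \sqrt{B}\,x > Ax + B$ (the last step because $x > \sqrt{B}$), contradicting the hypothesis — three lines with no need to solve the quadratic. You instead reduce to the positive case, solve $x^2 - Ax - B \le 0$ explicitly to get $x \le \bigl(A + \sqrt{A^2 + 4B}\bigr)/2$, and then invoke the inequality $\sqrt{A^2 + 4B} \le A + 2\sqrt{B}$ (inserting the cross term $4A\sqrt{B}$). Both arguments are complete and elementary. The paper's contradiction approach is shorter and avoids computing roots; your direct approach makes the origin of the bound $A + \sqrt{B}$ more transparent, since it shows exactly where the quadratic formula is being relaxed. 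The case split on the sign of $x$ that you include is harmless but strictly unnecessary, since the contradiction-style argument sidesteps it automatically.
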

\begin{proof}
If $x>A+\sqrt{B}$, then
\begin{align*}
    x_+^2 = x^2 > (A+\sqrt{B})x = Ax + \sqrt{B}x > Ax +  B,
\end{align*}
which is a contradiction. 
\end{proof}

\section{Deferred proofs in Section \ref{sec:lower_bound}}\label{sec:proof_lower_bound}

In this section, we prove the lower bounds in \Cref{inftheorem:learning_lb}. Both lower bounds will be proved via a hypothesis testing argument, which we detail next. 

The hypotheses are constructed as follows: for each $u\in \{\pm 1\}^{d-1}$, we associate a vector
\begin{align*}
    \theta_u^\star = \left(u_1\delta, \cdots, u_{d-1}\delta, \sqrt{1 - (d-1)\delta^2}\right) \in \mathbb{S}^{d-1}, 
\end{align*}
with parameter $\delta\in (0, 1/\sqrt{d}]$ to be specified later. In words, we embed a hypercube of dimension $d-1$ on the unit sphere in $d$ dimensions.

We apply Assouad's lemma (cf. \Cref{lemma:Assouad}). If $L(\theta^\star,a) = f(1)-f(\jiao{\theta^\star,a})$, then
\begin{align*}
    L(\theta_u^\star, a) + L(\theta_{u'}^\star, a) &= f(1) - f(\jiao{\theta_u^\star,a}) + f(1) - f(\jiao{\theta_{u'}^\star,a}) \\
    &\stepa{\ge} c_1\left(2 - |\jiao{\theta_u^\star, a}| - |\jiao{\theta_{u'}^\star, a}|\right) \\
    &\stepb{\ge} c_1\cdot 2\left(1 - \sqrt{1-\delta^2\sum_{i=1}^{d-1} \1(u(i)\neq u'(i)})\right) \\
    &\stepc{\ge} c_1\delta^2\sum_{i=1}^{d-1}\1(u(i)\neq u'(i)), 
\end{align*}
where (a) uses
\begin{align}\label{eq:diff_lower_bound}
f(1) - f(x) \ge f(1) - f(\max\{x,1-\gamma\}) \ge c_f (1- \max\{x,1-\gamma\}) \ge \frac{\gamma c_f}{2}(1-x)
\end{align}
for all $x\in [-1,1]$, with $c_1 = \gamma c_f/2$, (b) plugs in the minimizer $a^\star = (\theta_u^\star + \theta_{u'}^\star)/\|\theta_u^\star + \theta_{u'}^\star\|_2$, and (c) is due to $\sqrt{1-x}\le 1-x/2$ for $x\in [0,1]$. Consequently, the premise of \Cref{lemma:Assouad} holds for $\Delta = c_1\delta^2$.

To upper bound the KL divergence, for each $u\in \{\pm 1\}^{d-1}$ and $i\in [d-1]$, it is clear that
\begin{align*}
    D_{\text{\rm KL}}(P_{\theta_u^\star}^{T}\|P_{\theta_{u^{\oplus i}}^\star}^{T}) &= \frac{1}{2} \bE_{\theta_u^\star}\sum_{t=1}^T(f(\jiao{\theta_u^\star, a_t})- f(\jiao{\theta_{u^{\oplus i}}^\star, a_t}))^2 \\
    &\le \frac{L^2}{2} \bE_{\theta_u^\star}\sum_{t=1}^T(\jiao{\theta_u^\star, a_t}- \jiao{\theta_{u^{\oplus i}}^\star, a_t})^2 \\
    &= c_2\delta^2\cdot \bE_{\theta_u^\star}\sum_{t=1}^T a_t(i)^2, 
\end{align*}
thanks to \Cref{assump:LB}, with $c_2 = 2L^2$. 

To prove the sample complexity lower bound in \Cref{inftheorem:learning_lb}, we choose $f(x)=x$ in the definition of $L(\theta^\star,a)$. Lemma \ref{lemma:Assouad} then gives that
\begin{align}\label{eq:lowerbound_sample_compl}
    \inf_{\widehat{\theta}_T}\bE_{u\sim \mathsf{Unif}(\{\pm 1\}^{d})} \bE_{\theta_u}[1-\jiao{\theta_u, \widehat{\theta}_T}] &\ge \frac{c_1(d-1)\delta^2}{4}\cdot \bE_{u\sim \mathsf{Unif}(\{\pm 1\}^{d})}\left[\exp\left(-\frac{c_2\delta^2}{d-1}\sum_{i=1}^{d-1}\bE_{\theta_u^\star}\sum_{t=1}^T a_t(i)^2\right)\right] \nonumber\\
    &\ge \frac{c_1(d-1)\delta^2}{4}\cdot\exp\left(-\frac{c_2T\delta^2}{d-1}\right), 
\end{align}
where the second step follows from $\|a_t\|_2\le 1$ for all $t$. Consequently, choosing $\delta = \sqrt{\varepsilon/d}<1/\sqrt{d}$, \eqref{eq:lowerbound_sample_compl} shows that $\inf_{\widehat{\theta}_T}\sup_{\theta^\star\in \mathbb{S}^{d-1}} \bE_{\theta^\star}[1-\jiao{\theta^\star,\widehat{\theta}_T}]=\Omega(\varepsilon)$ for the choice of $T=d^2/\varepsilon$. This proves the sample complexity lower bound in \Cref{inftheorem:learning_lb}. 

The lower bound proof for the minimax regret in \Cref{inftheorem:learning_lb} requires several additional steps. Note that here $L(\theta^\star,a)=f(1)-f(\jiao{\theta^\star,a})$ is the per-step regret, therefore \Cref{lemma:Assouad} gives that
\begin{align}\label{eq:first_lower_bound}
    \mathfrak{R}_T^\star(f,d) \ge \frac{c_1(d-1)T\delta^2}{4}\cdot  \bE_{u\sim \mathsf{Unif}(\{\pm 1\}^{d-1})}\left[\exp\left( -\frac{c_2\delta^2}{2(d-1)} \bE_{\theta_u^\star}\sum_{t=1}^T\sum_{i=1}^{d-1}a_t(i)^2 \right)\right]. 
\end{align}
To proceed, we will prove a different lower bound of $\mathfrak{R}_T^\star(f,d)$ and combine it with \eqref{eq:first_lower_bound}. The second lower bound essentially says that, any good learner should put a large weight of the action on the last component: for every $u\in \{\pm 1\}^{d-1}$, 
\begin{align}\label{eq:second_lower_bound}
\mathfrak{R}_T^\star(f,d)&\ge \bE_{\theta_u^\star}\left[\sum_{t=1}^T (f(1) - f(\jiao{\theta_u^\star, a_t}))\right] \nonumber \\
&\stepd{\ge} c_1\cdot \bE_{\theta_u^\star}\left[\sum_{t=1}^T (1 - |\jiao{\theta_u^\star, a_t}|)\right] \nonumber\\
&\stepe{\ge} \frac{c_1}{8}\cdot \bE_{\theta_u^\star}\left[\sum_{t=1}^T \frac{\left(\sum_{i\le d-1} a_t(i)^2 - (d-1)\delta^2\right)_+^2}{\sum_{i\le d-1} a_t(i)^2} \right] \nonumber \\
&\stepf{\ge} \frac{c_1}{8}\cdot \frac{(\bE_{\theta_u^\star} \sum_{t=1}^T \sum_{i=1}^{d-1} a_t(i)^2 - (d-1)T\delta^2)_+^2}{\bE_{\theta_u^\star} \sum_{t=1}^T \sum_{i=1}^{d-1} a_t(i)^2}, 
\end{align}
where (d) is given by \eqref{eq:diff_lower_bound}, (e) follows from the fact that if $\sum_{i\le d-1}a(i)^2 = (d-1)\delta^2 + \eta$ with $\eta\ge 0$, then
    \begin{align*}
        |\jiao{\theta_u^\star, a}| &\le \sqrt{\sum_{i\le d-1}\theta_u^\star(i)^2}\sqrt{\sum_{i\le d-1}a(i)^2} + |\theta_u^\star(d)\cdot a(d)| \\
        &\le \sqrt{(d-1)\delta^2\left((d-1)\delta^2+\eta \right)} + \sqrt{\left(1 - (d-1)\delta^2\right)\left(1-(d-1)\delta^2-\eta\right)} \\
        &= 1 - \frac{1}{2}H^2\left(\mathsf{Bern}\left((d-1)\delta^2\right), \mathsf{Bern}\left((d-1)\delta^2+\eta\right)\right) \\
        &\le 1 - \frac{1}{2}(\sqrt{(d-1)\delta^2+\eta} - \sqrt{(d-1)\delta^2})^2 \\
        &\le 1 - \frac{\eta^2}{8((d-1)\delta^2+\eta)},
    \end{align*}
where the last step follows from $\sqrt{x}-\sqrt{y}\ge (x-y)/(2\sqrt{x})$ for $x\ge y\ge 0$, and (f) is due to the joint convexity of $(x,y)\in \mathbb{R}_+^2 \mapsto x^2/y$, and the convexity of $x\mapsto x_+:=\max\{x,0\}$. 

Now we combine the lower bounds \eqref{eq:first_lower_bound} and \eqref{eq:second_lower_bound}. By \eqref{eq:second_lower_bound} and \Cref{lemma:quadratic_inequality}, it holds that for every $u\in \{\pm1\}^{d-1}$, 
\begin{align*}
\bE_{\theta_u^\star}\sum_{t=1}^T\sum_{i=1}^{d-1}a_t(i)^2 &\le (d-1)T\delta^2 + \frac{8\mathfrak{R}_{T}^\star(f,d)}{c_1} + \sqrt{(d-1)T\delta^2\times \frac{8\mathfrak{R}_{T}^\star(f,d)}{c_1}} \\
&\le c_3\left(dT\delta^2 + \mathfrak{R}_{T}^\star(f,d) \right). 
\end{align*}
Plugging this upper bound into \eqref{eq:first_lower_bound} leads to
\begin{align*}
\mathfrak{R}_T^\star(f,d)\ge c_4dT\delta^2\exp\left(-c_5T\delta^4 - \frac{c_6\delta^2 \mathfrak{R}_{T}^\star(f,d)}{d}\right).
\end{align*}
Choosing $\delta=T^{-1/4}\le 1/\sqrt{d}$ if $T\ge d^2$, we conclude from the above that $\mathfrak{R}_T^\star(f,d)= \Omega(d\sqrt{T})$. 

For the remaining case $T<d^2$, note that $\mathfrak{R}_{kT}^\star(f,d)\le k\mathfrak{R}_{T}^\star(f,d)$ for every positive integer $k$, for the learner can run a learning algorithm of time horizon $T$ for $k$ times independently. Therefore, for $T<d^2$, it holds that
\begin{align*}
    \mathfrak{R}_T^\star(f,d) \ge \frac{\mathfrak{R}_{\lceil d^2/T \rceil T}^\star(f,d)}{\lceil d^2/T\rceil} = \Omega\left(\frac{d^2}{d^2/T}\right) = \Omega(T). 
\end{align*}
This completes the proof of the regret lower bound in \Cref{inftheorem:learning_lb}. 

\begin{remark}
We remark that the same lower bounds also hold for the Bayes risk with the uniform prior $\theta^\star\sim \mathsf{Unif}(\mathbb{S}^{d-1})$. Take the regret lower bound as an example: let $\mathcal{C}\subseteq \mathbb{S}^{d-1}$ be the hypercube used in the above proof, what we have shown is that
\begin{align*}
\bE_{\theta^\star\sim \mathsf{Unif}(\mathcal{C})} \left\{\mathbb{E}_{\theta^\star} \left[ T \cdot \max_{a^\star \in \mathcal{A}} f ( \jiao{\theta^\star, a^\star}) - \sum_{t=1}^T  f ( \jiao{\theta^\star, a_t}) \right] \right\} = \Omega\left( \min\left\{d\sqrt{T}, T\right\} \right). 
\end{align*}
By rotational invariance of $\mathbb{S}^{d-1}$, for every $v\in \mathbb{S}^{d-1}$ the same result holds with $\mathcal{C}$ replaced by $\mathcal{C}_v$, the new cube with each vertex of $\mathcal{C}$ rotated by $v$. Consequently, 
\begin{align*}
& \bE_{\theta^\star\sim \mathsf{Unif}(\mathbb{S}^{d-1})} \left\{\mathbb{E}_{\theta^\star} \left[ T \cdot \max_{a^\star \in \mathcal{A}} f ( \jiao{\theta^\star, a^\star}) - \sum_{t=1}^T  f ( \jiao{\theta^\star, a_t}) \right] \right\} \\
&= \bE_{v\sim \mathsf{Unif}(\mathbb{S}^{d-1})}\bE_{\theta^\star\sim \mathsf{Unif}(\mathcal{C}_v)} \left\{\mathbb{E}_{\theta^\star} \left[ T \cdot \max_{a^\star \in \mathcal{A}} f ( \jiao{\theta^\star, a^\star}) - \sum_{t=1}^T  f ( \jiao{\theta^\star, a_t}) \right] \right\} \\
&= \Omega\left( \min\left\{d\sqrt{T}, T\right\} \right). 
\end{align*}
Therefore, the Bayes regret under a natural uniform prior remains $\Omega(d\sqrt{T})$ asymptotically for ridge bandits with a continuous set of actions, a sharp contrast to the asymptotic $O(\log T)$ regret for multi-armed bandits \cite{lai1985asymptotically}. 
\end{remark}

\section{Deferred proofs in Section \ref{sec:upper_bound}}\label{sec:proof_upper_bound}

\subsection{Proof of the integral-form upper bound}
The next lemma shows that finite difference form of the upper bound in \Cref{thm:ub_burnincost_formal} implies the integral form of the upper bound in \Cref{inftheorem:integral_ub}. 

\begin{lemma}\label{lemma:difference_to_derivative}
For any $\kappa>0$ and $x_0\in (0,1)$, there exists $(\kappa_1,\kappa_2)$ such that for the sequence $\{ \varepsilon_i \}_{i=1}^m$ defined in \Cref{thm:ub_burnincost_formal} under $(\kappa_1,\kappa_2)$, then
\begin{align*}
    \sum_{i=1}^{m} \frac{1}{\varepsilon_i^2} \le C(x_0, \kappa) \cdot d^2 \int_{ \sqrt{1/d}}^{x_0} \frac{\mathrm{d}(x^2)}{\max_{y \in [ 1/\sqrt{d}, x]} \min_{z \in [ (1-\kappa)y, (1+\kappa)y ]} \left( f' (z) \right)^2}. 
\end{align*}
Here $C(x_0, \kappa)$ is an absolute constant depending only on $(x_0, \kappa)$. 
\end{lemma}
\begin{proof}
We choose $\kappa_1\in (0,\kappa/2)$, and a sufficiently small $\kappa_2>0$ such that $c_2<1$ in \Cref{thm:ub_burnincost_formal}. In this case, for $1\le i\le d_0$, we have
\begin{align*}
\frac{1}{\varepsilon_i^2} &\lesssim d\int_{1/\sqrt{d}}^{1/[(1-\kappa/2)\sqrt{d}]} \frac{\mathrm{d}(x^2)}{\min_{z\in [1/\sqrt{d}, (1+\kappa_1/2)/\sqrt{d} ]}(f(z+c_1/\sqrt{d})-f(z))^2} \\
&\stepa{\le} d\int_{1/\sqrt{d}}^{1/[(1-\kappa/2)\sqrt{d}]} \frac{\mathrm{d}(x^2)}{\max_{y\in [1/\sqrt{d},x]}\min_{z\in [(1-\kappa/2)y, (1+\kappa/2)y]}(f(z+c_1/\sqrt{d})-f(z))^2} \\
&\stepb{\lesssim} d^2\int_{1/\sqrt{d}}^{1/[(1-\kappa/2)\sqrt{d}]} \frac{\mathrm{d}(x^2)}{\max_{y\in [1/\sqrt{d},x]}\min_{z\in [(1-\kappa)y, (1+\kappa)y]}(f'(z))^2}. 
\end{align*}
Here (a) follows from the observation that for all $1/\sqrt{d}\le y\le 1/[(1-\kappa/2)\sqrt{d}]$, one has $$[(1-\kappa/2)y, (1+\kappa/2)y]\supseteq [1/\sqrt{d}, (1+\kappa_1/2)/\sqrt{d}]$$
as long as $\kappa_1\le \kappa$; (b) follows from the inequality $|f(x)-f(y)|\ge |x-y|\min_{z\in [y,x]}|f'(z)|$ for $x\ge y$, and that $(1+\kappa/2)y + c_1/\sqrt{d}\le (1+\kappa/2)y + \kappa_1/\sqrt{d}\le (1+\kappa)y$ as long as $\kappa_1\le \kappa/2$.

Similarly, for $d_0+1\le i\le m$, we have
\begin{align*}
\frac{1}{\varepsilon_i^2} &\stepc{\lesssim} d\int_{\sqrt{(i-1.5)/d}}^{\sqrt{(i-1)/d}} \frac{\mathrm{d}(x^2)}{\max_{c_2/\sqrt{d}\le y\le (1-\kappa_2)x}\min_{z\in [(1-\kappa_1)y, (1+\kappa_1)y]}(f(z+c_1/\sqrt{d}) - f(z))^2} \\
&\stepd{\asymp} d\int_{(1-\kappa_2)\sqrt{(i-1.5)/d}}^{(1-\kappa_2)\sqrt{(i-1)/d}} \frac{\mathrm{d}(x^2)}{\max_{c_2/\sqrt{d}\le y\le x}\min_{z\in [(1-\kappa_1)y, (1+\kappa_1)y]}(f(z+c_1/\sqrt{d}) - f(z))^2} \\
&\stepe{\le} d\int_{(1-\kappa_2)\sqrt{(i-1.5)/d}}^{(1-\kappa_2)\sqrt{(i-1)/d}} \frac{\mathrm{d}(x^2)}{\max_{1/\sqrt{d}\le y\le x}\min_{z\in [(1-\kappa_1)y, (1+\kappa_1)y]}(f(z+c_1/\sqrt{d}) - f(z))^2} \\
&\stepf{\lesssim} d^2\int_{(1-\kappa_2)\sqrt{(i-1.5)/d}}^{(1-\kappa_2)\sqrt{(i-1)/d}} \frac{\mathrm{d}(x^2)}{\max_{1/\sqrt{d}\le y\le x}\min_{z\in [(1-\kappa)y, (1+\kappa)y]}(f'(z))^2}.
\end{align*}
Here (c) follows from $x\le \sqrt{(i-1)/d}$ and the monotonicity of the denominator in $x$; (d) is an affine change of variable $(1-\kappa_2)x\mapsto x$; (e) is due to our choice of $(\kappa_1,\kappa_2)$ that $c_2<1$; (f) uses the same reasoning as (b), with the observation that $(1+\kappa_1)y+c_1/\sqrt{d}\le (1+\kappa)y$ as long as $\kappa_1\le \kappa/2$. 

A combination of the above inequalities gives that
\begin{align*}
    \sum_{i=1}^{m} \frac{1}{\varepsilon_i^2} \lesssim d^2 \left(\int_{1/\sqrt{d}}^{1/[(1-\kappa/2)\sqrt{d}]}  + \int_{(1-\kappa_2)\sqrt{(d_0-0.5)/d}}^{(1-\kappa_2)\sqrt{(m-1)/d}} \right)\frac{\mathrm{d}(x^2)}{\max_{y \in [ 1/\sqrt{d}, x]} \min_{z \in [ (1-\kappa)y, (1+\kappa)y ]} \left( f' (z) \right)^2}. 
\end{align*}
To conclude the final result, simply note that the integral lower limits are at least $1/\sqrt{d}$ for $d_0\ge 2$ and a sufficiently small $\kappa_2>0$, and the upper limits are at most $x_0$ by the choice of $m$ in \Cref{thm:ub_burnincost_formal}. 
\end{proof}

\subsection{Proof of \Cref{lemma:iaht}}
Note that $\overline{r}\sim \calN(f(\jiao{\theta^\star, v}), \varepsilon^2/[2\log(2/\delta)])$, therefore
\begin{align*}
\bP(|\overline{r} - f(\jiao{\theta^\star, v})|> \varepsilon) \le 2\exp\left(-\frac{2\log(2/\delta)}{2}\right) = \delta. 
\end{align*}
In the sequel we condition on the good event $|\overline{r} - f(\jiao{\theta^\star, v})| \le \varepsilon$ which happens with probability at least $1-\delta$. If $\jiao{\theta^\star, v}\in [(1+\kappa_1)/\sqrt{d}, (1+2\kappa_1)/\sqrt{d}]$, then $x=\jiao{\theta^\star, v}$ is a successful witness of the projection based test. If $\jiao{\theta^\star, v}\notin [1/\sqrt{d}, (1+3\kappa_1)/\sqrt{d}]$ and the test returns \textsf{True}, then the witness $x\in [(1+\kappa_1)/\sqrt{d}, (1+2\kappa_1)/\sqrt{d}]$ satisfies
\begin{align*}
|f(\jiao{\theta^\star, v}) - f(x)|\le |f(\jiao{\theta^\star, v}) - \overline{r}| + |\overline{r} - f(x)| \le 2\varepsilon, 
\end{align*}
which is impossible by the definition of $\varepsilon$ in \eqref{eq:eps0-def} and the monotonicity of $f$. 

\subsection{Proof of \Cref{lemma:gaht}}
Similar to the proof of \Cref{lemma:iaht}, in the sequel we condition on the good event that
\begin{align*}
|\overline{r}_- - f(\jiao{\theta^\star, a_-})| \le \varepsilon, \quad \text{ and }\quad |\overline{r}_+ - f(\jiao{\theta^\star, a_+})| \le \varepsilon, 
\end{align*}
which happens with probability at least $1-\delta$. 

If $\jiao{\theta^\star, v}\in [(1+\kappa_1)/\sqrt{d}, (1+2\kappa_1)/\sqrt{d}]$, we choose the witnesses
\begin{align*}
    z &= \lambda(1-\kappa_2)\jiao{v_{\text{pre}}, \theta^\star} \in \frac{y^\star}{x_{\text{\rm pre}}}\left[x_{\text{\rm pre}}, (1+3\kappa_1)x_{\text{\rm pre}} \right] = \left[y^\star, (1+3\kappa_1)y^\star\right], \\
    x &= \kappa_2^\perp \jiao{\theta^\star, v} \in \left[\frac{(1+\kappa_1)\kappa_2^\perp}{\sqrt{d}}, \frac{(1+2\kappa_1)\kappa_2^\perp}{\sqrt{d}}\right]
\end{align*}
in the projection based test. Note that $\jiao{\theta^\star, a_-} = z - x$ and $\jiao{\theta^\star, a_+} = z + x$, these witnesses pass the test \eqref{eq:test} thanks to the good event. 

If $\jiao{\theta^\star, v}\notin [1/\sqrt{d}, (1+3\kappa_1)/\sqrt{d}]$, again we use the notation $(z,x)$ in the above equation. We also assume by contradiction that the witnesses $(z',x')$ exist, then
\begin{align*}
|f(z+x) - f(z'+x')| &\le |f(\jiao{\theta^\star, a_+}) - \overline{r}_+| + |\overline{r}_+ - f(z'+x')| \le 2\varepsilon, \\
|f(z-x) - f(z'-x')| &\le |f(\jiao{\theta^\star, a_-}) - \overline{r}_-| + |\overline{r}_- - f(z'-x')| \le 2\varepsilon.  
\end{align*}
Since $y^\star \ge \kappa_4/\sqrt{d} = (\kappa_1^{-1}+2)\kappa_2^\perp / \sqrt{d}$, we have
\begin{align*}
z' + x', z' - x' \in \left[y^\star - \frac{(1+2\kappa_1)\kappa_2^\perp}{\sqrt{d}}, (1+3\kappa_1)y^\star + \frac{(1+2\kappa_1)\kappa_2^\perp}{\sqrt{d}} \right] \subseteq \left[(1-4\kappa_1)y^\star, (1+4\kappa_1)y^\star\right]. 
\end{align*}
Then by definition of $\varepsilon$ in \eqref{eq:obj} and the monotonicity of $f$,
\begin{align*}
|(z+x) - (z'+x')| \le \frac{\kappa_3}{\sqrt{d}}, \quad |(z-x) - (z'-x')| \le \frac{\kappa_3}{\sqrt{d}}. 
\end{align*}
Finally, by triangle inequality, 
\begin{align*}
|x-x'| \le \frac{|(z+x) - (z'+x')| + |(z-x) - (z'-x')|}{2} \le \frac{\kappa_3}{\sqrt{d}} = \frac{\kappa_1\kappa_2^\perp}{\sqrt{d}}, 
\end{align*}
which is a contradiction to $x = \kappa_2^\perp\jiao{\theta^\star,v}\notin [\kappa_2^\perp/\sqrt{d}, (1+3\kappa_1)\kappa_2^\perp/\sqrt{d}]$ and $x'\in [(1+\kappa_1)\kappa_2^\perp/\sqrt{d}, (1+2\kappa_1)\kappa_2^\perp/\sqrt{d}]$. 

\subsection{Proof of \Cref{thm:ub_burnincost_formal}}
We prove \Cref{thm:ub_burnincost_formal} based on \Cref{lemma:iaht} and \Cref{lemma:gaht}. We first specify a collection of good events, and show that with probability at least $1-\delta$, these good events simultaneously occur (note that $\kappa_1$ is replaced by $\kappa_1/4$ in \Cref{alg:burn-in}): 
\begin{enumerate}
    \item In each epoch $i\in [m]$, among the first $\log(2m/\delta)/c_0$ while loops, there is at least one sampled $v_i$ with $\jiao{\theta^\star, v_i}\in [(1+\kappa_1/4)/\sqrt{d}, (1+\kappa_1/2)/\sqrt{d}]$. Here $c_0>0$ is given in \Cref{thm:ub_burnincost_formal}. 
    \item The success event in \Cref{lemma:iaht} or \Cref{lemma:gaht} holds for all calls to the \iaht or \gaht algorithm. 
    \item The prerequisites for \Cref{lemma:gaht} (i.e. $\jiao{\theta^\star, v_{\text{\rm pre}}} \in [x_{\text{\rm pre}}, (1+3\kappa_1/4)x_{\text{\rm pre}}]$ and $x_{\text{\rm pre}}\ge \kappa_4/[(1-\kappa_2)\sqrt{d}]$) hold for all epochs at the recursive stage. 
\end{enumerate}
Let $E_{j,i}, j\in [3], i\in [m]$ denote the above $j$-th good event happening in the $i$-th epoch. We analyze the success probabilities separately: 
\begin{enumerate}
    \item For $E_{1,i}$, note that $\cap_{i'<i}\cap_{j=1}^3 E_{j,i'}$ implies that $\jiao{\theta^\star, v_{i'}}\in [1/\sqrt{d}, (1+\kappa_1)/\sqrt{d}]$ for all $i'<i$. Therefore, by the choice of $\kappa_1$ in \Cref{thm:ub_burnincost_formal}, 
    \begin{align*}
    \|\text{Proj}_{\text{span}(v_1,\cdots,v_{i-1})^\perp}(\theta^\star)\|_2^2 &= 1 - \sum_{i'<i}\jiao{\theta^\star, v_{i'}}^2 \ge 1 - (m-1)\left(\frac{1+\kappa_1}{\sqrt{d}}\right)^2 \\
    & > 1 - x_0^2d\cdot \frac{(1+\kappa_1)^2}{d} > 1 - \left(\frac{1+x_0}{2}\right)^2 > \left(\frac{1-x_0}{2}\right)^2, 
    \end{align*}
    so that the condition of \Cref{lemma:nontrivialcorr} holds with $\delta_1 = 1-x_0^2$ and $\delta_2 = (1-x_0)/2$. By the definition of $c_0$ in \Cref{thm:ub_burnincost_formal}, each while loop samples a direction $v_i$ with $\jiao{\theta^\star, v_i}\in [(1+\kappa_1/4)/\sqrt{d}, (1+\kappa_1/2)/\sqrt{d}]$ with probability at least $c_0$, and
    \begin{align*}
        \bP(E_{1,i} \mid \cap_{i'<i}\cap_{j=1}^3 E_{j,i}) \ge 1 - (1-c_0)^{\log(2m/\delta)/c_0} \ge 1 - \frac{\delta}{2m}. 
    \end{align*}
    \item For $E_{2,i}$, each call fails with probability at most $\delta/L$, by the target failure probability set in Algorithm \ref{alg:burn-in}. By $E_{1,i}$, if the first $\log(2m/\delta)/c_0$ calls all succeed, then the $i$-th epoch will break before $\log(2m/\delta)/c_0$ calls. By $E_{3,i}$, the prerequisites of \Cref{lemma:gaht} also hold. Consequently, 
    \begin{align*}
    \bP(E_{2,i} \mid E_{1,i}, E_{3,i}) \ge 1 - \frac{\log(2m/\delta)}{c}\cdot \frac{\delta}{L} = 1 - \frac{\delta}{2m}. 
    \end{align*}
    \item For $E_{3,i}$, the first event $\jiao{\theta^\star, v_{\text{pre}}}\in [x_{\text{\rm pre}}, (1+3\kappa_1/4)x_{\text{\rm pre}}]$ is contained in $E_{2,i-1}$: 
    \begin{align*}
        \jiao{\theta^\star, v_{\text{pre}}} = \frac{1}{\sqrt{i-1}}\sum_{i'<i} \jiao{\theta^\star, v_{i'}} \in \left[\sqrt{\frac{i-1}{d}}, \left(1+\frac{3\kappa_1}{4}\right)\sqrt{\frac{i-1}{d}} \right] = \left[x_{\text{\rm pre}}, \left(1+\frac{3\kappa_1}{4}\right)x_{\text{\rm pre}}\right]. 
    \end{align*}
    The second event simply follows from 
    \begin{align*}
        x_{\text{\rm pre}} \ge \sqrt{\frac{d_0}{d}} \ge \frac{(2+(\kappa_1/4)^{-1})\sqrt{1-(1-\kappa_2)^2}}{(1-\kappa_2)\sqrt{d}} = \frac{\kappa_4}{(1-\kappa_2)\sqrt{d}}
    \end{align*}
    by the definition of $d_0$ and $\kappa_4$. So $\bP(E_{3,i}\mid E_{2,i-1})=1$. 
\end{enumerate}
Consequently, by the union bound, all good events simultaneously occur with probability at least $1-\delta$. In the sequel we condition on all the above good events. 

Next we prove the correctness of the algorithm. In fact, event $E_{2,i}$ implies $\jiao{\theta^\star, v_i}\in [1/\sqrt{d}, (1+\kappa_1)/\sqrt{d}]$, and 
\begin{align*}
    \jiao{\theta^\star, a_0} = \frac{1}{\sqrt{m}}\sum_{i=1}^m \jiao{\theta^\star, v_i} \ge \sqrt{\frac{m}{d}} \ge \sqrt{\frac{x_0^2 d}{d}} = x_0. 
\end{align*}

Finally we analyze the sample complexity of \Cref{alg:burn-in}. By the event $E_{1,i}$, in the $i$-th epoch the certification algorithm is called for at most $\log(2m/\delta)/c_0$ times, and thus the total sample complexity is at most
\begin{align*}
    \sum_{i=1}^m \frac{\log(2m/\delta)}{c_0}\cdot \frac{4\log(4/\delta)}{\varepsilon_i^2} = O\left(\log^2\left(\frac{d}{\delta}\right)\cdot\sum_{i=1}^m \frac{1}{\varepsilon_i^2}\right), 
\end{align*}
which completes the proof of \Cref{thm:ub_burnincost_formal}. 

\subsection{Proof of \Cref{lemma:infinite-armed-bandit}}
We present the algorithm first, followed with the proof of \Cref{lemma:infinite-armed-bandit} by establishing the correctness and the sample complexity upper bound, respectively. 

\subsubsection{Algorithm description}
\begin{algorithm}[htb]
\caption{Adaptive arm identification algorithm}\label{alg:UCBLCB}
\textbf{Input:} parameters $(p,q,\varepsilon)$, error probability $\delta$. 

\textbf{Output:} with probablity $\ge 1-\delta$, an arm $i\in \mathbb{N}$ with $F(X_i)\in [p,q]$. 

Define
\begin{align*}
    L = \left\lceil \frac{2\log(16/\delta)}{\varepsilon^2} + \frac{2\log(8/\delta)}{(q-p-4\varepsilon)^2} \right\rceil.
\end{align*}

Initialize the count $n_i\gets 0$ and empirical mean $\bar{y}_i\gets 0$ for the first $L$ arms $i=1,\cdots,L$. 

\While{{\upshape \textsf{True} }}{
Pull each of the first $L$ arms once and observe $Y_1, \cdots, Y_L$. 

Update the counts and empirical means:
\begin{align*}
\bar{y}_i \gets \frac{n_i\bar{y}_i + Y_i}{n_i+1}, \quad n_i \gets n_i + 1, \quad i=1,\cdots,L. 
\end{align*}

Compute the upper and lower confidence bounds for each arm: 
\begin{align*}
\text{UCB}_i = \bar{y}_i + \sqrt{\frac{2\log(2\pi^2 n_i^2 L/3\delta)}{n_i}}, \quad \text{LCB}_i = \bar{y}_i - \sqrt{\frac{2\log(2\pi^2 n_i^2 L/3\delta)}{n_i}}, \quad 
i=1,\cdots,L. 
\end{align*}

Let $\text{UCB}^{(1)}\le \cdots\le \text{UCB}^{(L)}$ and $\text{LCB}^{(1)}\le \cdots\le \text{LCB}^{(L)}$ be the order statistics. 

\If{there exists $i\in [L]$ with $\text{\rm LCB}_i \ge \text{\rm UCB}^{(\lceil (p+\varepsilon/2)L \rceil)}$ and $\text{\rm UCB}_i \le \text{\rm LCB}^{(\lfloor (q-\varepsilon/2)L \rfloor)}$}{
break the loop and return the arm $i$. 
}
}
\end{algorithm}

The algorithm is displayed in \Cref{alg:UCBLCB}, with simple algorithmic ideas based on upper and lower confidence bounds. The confidence bounds are constructed in the standard way so that each $X_i$ is sandwiched between $\text{LCB}_i$ and $\text{UCB}_i$ with high probability. Therefore, the stopping condition in \Cref{alg:UCBLCB} ensures that the arm $i$ has an empirical CDF in $[p+\varepsilon/2, q-\varepsilon/2]$ based on $L$ i.i.d. arms. By the convergence of the empirical CDF to the true CDF, for $L$ large enough the arm $i$ has a CDF in $[p,q]$, as desired. The following sections make the above intuition rigorous. 

\subsubsection{Proof of correctness}
Let $X^{(1)} \le \cdots\le X^{(L)}$ be the order statistics for the true arm means. We define a good event $E$, which is an intersection of three good events: 
\begin{enumerate}
    \item event $E_1$: $F(X^{(\lceil (p+\varepsilon/2)L \rceil)})\in [p,p+\varepsilon]$, and $F(X^{(\lfloor (q-\varepsilon/2)L \rfloor)})\in [q-\varepsilon,q]$; 
    \item event $E_2$: there is an arm $i\in [L]$ such that $F(X_i)\in [p+2\varepsilon, q-2\varepsilon]$; 
    \item event $E_3$: the confidence bounds are always correct, i.e. $X_i\in [\text{LCB}_i, \text{UCB}_i]$ for all $i\in [L]$ and all iterations. 
\end{enumerate}
We first show that $\bP(E)\ge 1-\delta$. For event $E_1$, note that $F(X_i)$ is uniformly distributed on $[0,1]$ for each $i$, so $\sum_{i=1}^L \mathbbm{1}(F(X_i) < p)$ follows a Binomial distribution $\mathsf{B}(L,p)$. Consequently, 
\begin{align*}
\bP(F(X^{(\lceil (p+\varepsilon/2)L \rceil)}) < p) &= \bP\left( \sum_{i=1}^L \mathbbm{1}(F(X_i) < p) \ge \lceil(p+\varepsilon/2)L\rceil \right) \\
&\le \exp(-2L(\varepsilon/2)^2) \le \frac{\delta}{16}
\end{align*}
by Hoeffding's concentration inequality and the choice of $L$. The other claims are proved similarly, and the union bound gives that $\bP(E_1)\ge 1 - \delta/4$. The analysis of the event $E_2$ is similar: note that $E_2$ contains the event $X^{\lceil (p+q)/2 \rceil} \in [p+2\varepsilon, q-2\varepsilon]$, and the same analysis gives that this event holds with probability at least $1-\delta/4$. 

Finally we look at the event $E_3$. Since $\bar{y}_i\sim \calN(X_i, 1/m)$ when the arm $i$ is pulled $m$ times, the Gaussian tail bound gives that
\begin{align*}
\bP\left( \text{LCB}_i \le X_i \le \text{UCB}_i \right) \ge 1 - 2\exp\left( - \frac{m}{2} \cdot \frac{2\log(2\pi^2 m^2 L/3\delta)}{m}\right) = 1 - \frac{3\delta}{\pi^2m^2L}. 
\end{align*}
Using the union bound over arms $i\in [L]$ and iterations $m\in \mathbb{N}$, we have
\begin{align*}
\bP(E_3) \ge 1 - \sum_{m=1}^\infty \sum_{i=1}^L \frac{3\delta}{\pi^2m^2 L} = 1 - \frac{\delta}{2}. 
\end{align*}
A final union bound then gives that $\bP(E) = \bP(E_1\cap E_2 \cap E_3) \ge 1- \delta$. 

Next we show the correctness of the algorithm given the good event $E$. Let $i$ be the final output of the algorithm, then
\begin{align*}
    X_i &\overset{E_3}{\ge} \text{LCB}_i \ge \text{\rm UCB}^{(\lceil (p+\varepsilon/2)L \rceil)} \overset{E_3}\ge X^{(\lceil (p+\varepsilon/2)L \rceil)} \overset{E_1}{\ge} F^{-1}(p), \\
    X_i &\overset{E_3}{\le} \text{UCB}_i \le \text{\rm LCB}^{(\lfloor (q-\varepsilon/2)L \rfloor)} \overset{E_3}\le X^{(\lfloor (q-\varepsilon/2)L \rfloor)} \overset{E_1}{\le} F^{-1}(q). 
\end{align*}
This shows that $F(X_i)\in [p,q]$, i.e. the algorithm outputs a correct answer with probability at least $1-\delta$. 

\subsubsection{Analysis of sample complexity}
For the sample complexity of \Cref{alg:UCBLCB}, we again condition on the good event $E$ and upper bound the number of iterations. By event $E_2$, there is some $i\in [L]$ such that $X_i \in [F^{-1}(p+2\varepsilon), F^{-1}(q-2\varepsilon)]$. Consider all arms $X_j$ with $X_j\le X^{(\lceil (p+\varepsilon/2)L \rceil)}$. By event $E_1$, all such arms satisfy $X_j\le F^{-1}(p+\varepsilon)$. Therefore, after $m$ iterations with $m$ being the solution to
\begin{align*}
    \sqrt{\frac{2\log(2\pi^2m^2 L/3\delta)}{m}} \le \frac{F^{-1}(p+2\varepsilon) - F^{-1}(p+\varepsilon)}{4}, 
\end{align*}
all such arms have 
\begin{align*}
\text{UCB}_j = \text{LCB}_j + 2\sqrt{\frac{2\log(2\pi^2m^2 L/3\delta)}{m}} &\overset{E_3}{\le} X_j + \frac{F^{-1}(p+2\varepsilon) - F^{-1}(p+\varepsilon)}{2} \\
&\le \frac{F^{-1}(p+2\varepsilon) + F^{-1}(p+\varepsilon)}{2}. 
\end{align*}
Similarly, the event $E_3$ applied to arm $i$ gives that
\begin{align*}
\text{LCB}_i = \text{UCB}_i - 2\sqrt{\frac{2\log(2\pi^2m^2 L/3\delta)}{m}} &\overset{E_3}{\ge} X_i - \frac{F^{-1}(p+2\varepsilon) - F^{-1}(p+\varepsilon)}{2} \\
&\overset{E_2}{\ge} \frac{F^{-1}(p+2\varepsilon) + F^{-1}(p+\varepsilon)}{2}. 
\end{align*} 
This means that the arm $i$ meets the first stopping condition after $m$ iterations, with
\begin{align*}
m = O\left(\frac{\log(1/\delta)}{(F^{-1}(p+2\varepsilon) - F^{-1}(p+\varepsilon))^2}\right). 
\end{align*}
The other stopping condition can be analyzed in an analogous manner. The final sample complexity is then $Lm$, giving the claimed result in \Cref{lemma:infinite-armed-bandit}. 

\subsection{Proof of \Cref{thm:burnin_agnostic}}
Similar to the proof of \Cref{lemma:infinite-armed-bandit}, we describe our algorithm first, followed by the proofs of correctness and the claimed upper bound on the sample complexity. 

\subsubsection{Algorithm description}
The algorithm is similar to the iterative direction search algorithm in \Cref{alg:burn-in}, with certification steps replaced by proper applications of \Cref{alg:UCBLCB}. Let $f_d$ be the density in \eqref{eq:spherical_density}, and $F_d$ be its CDF. For small constants $\kappa_1', \kappa_2'>0$ to be chosen later, the algorithm runs as follows: 
\begin{enumerate}
    \item the first step $i=1$: construct an infinite-armed bandit instance with $F$ being the (unknown) CDF of $f(\jiao{\theta^\star, v})$ with $v\sim \mathsf{Unif}(\mathbb{S}^{d-1})$, invoke \Cref{alg:UCBLCB} with parameters
    \begin{align*}
        p \gets F_d\left(\frac{1}{\sqrt{d}}\right), \quad q \gets F_d\left(\frac{1+\kappa_1'}{\sqrt{d}}\right), \quad \varepsilon \gets \frac{q-p}{5}, \quad \delta \gets \frac{\delta}{d}. 
    \end{align*}
    Let $v_1\in \mathbb{S}^{d-1}$ be the output of \Cref{alg:UCBLCB}.   
    \item the subsequent steps $2\le i\le m:=\lceil x_0^2 d\rceil$: let $v_{\text{pre}} = \frac{1}{\sqrt{i-1}}\sum_{j<i} v_j$, and $\Lambda = \{e^{-j\kappa_2'}: 0\le j\le J\}$ be a geometric grid of $\lambda$ with $J = \lceil \log(\sqrt{d}/\kappa_2')/\kappa_2'\rceil$. For each possible value $\lambda \in \Lambda$, construct an infinite-armed bandit instance (indexed by $\lambda$) with $F_\lambda$ being the (unknown) CDF of $f(\jiao{\theta^\star, \lambda (1-\kappa_2') v_{\text{pre}} + \sqrt{\kappa_2'(2-\kappa_2')} v})$ with $v\sim \mathsf{Unif}(\mathbb{S}^{d-1}\cap \{v_{\text{pre}}\}^\perp)$. Next we run $|\Lambda| = J+1$ instances of \Cref{alg:UCBLCB}, one for each $\lambda\in \Lambda$, with parameters
    \begin{align*}
        p \gets F_{d-1}\left(\frac{1}{\sqrt{d+1-i}}\right), \quad q \gets F_{d-1}\left(\frac{1+\kappa_1'}{\sqrt{d+1-i}}\right), \quad \varepsilon \gets \frac{q-p}{5}, \quad \delta \gets \frac{\delta}{d(J+1)}. 
    \end{align*}
    The $(J+1)$ instances of \Cref{alg:UCBLCB} are interleaved as follows: the While loops of \Cref{alg:UCBLCB} across all instances are synchronized. At the end of each iteration in the While loop, if any instance outputs a vector, we stop and move to the $(i+1)$-th step; otherwise, we perform an additional round of iteration to all instances. Let $v_i\in \mathbb{S}^{d-1}$ be the final output in this step. 
    \item final output $a_0$: same as \Cref{alg:burn-in}, we simply take $a_0 = \frac{1}{\sqrt{m}}\sum_{i=1}^m v_i$. 
\end{enumerate}

We comment on why this algorithm is well-defined and agnostic to $f$. First, the CDFs $(F_d, F_{d-1})$ are known to the learner, so the parameters $(p,q,\varepsilon)$ passed to \Cref{alg:UCBLCB} are well-defined. Second, to see why we are reduced to the infinite-armed bandit setting, for $i=1$ we pull the arm $v\sim \mathsf{Unif}(\mathbb{S}^{d-1})$ and observe $\calN(f(\jiao{\theta^\star, v}),1)$, and for $i\ge 2$ we sample $v\sim \mathsf{Unif}(\mathbb{S}^{d-1}\cap \{v_{\text{pre}}\}^\perp)$, pull the arm $v' = \lambda (1-\kappa_2') v_{\text{pre}} + \sqrt{\kappa_2'(2-\kappa_2')}v\in \mathbb{B}^d$, and observe $\calN(f(\jiao{\theta^\star, v'},1))$.

\subsubsection{Proof of correctness}
We condition on the good event that the outputs of all instances of \Cref{alg:UCBLCB} are all correct; this happens with probablity at least
\begin{align*}
    1 - \frac{\delta}{d} - \sum_{i=2}^m (J+1)\cdot \frac{\delta}{d(J+1)} \ge 1 - \delta. 
\end{align*}
Conditioned on this event, we will show that for $\kappa_1'>0$ small enough, 
\begin{align}\label{eq:induction}
\frac{1}{i}\sum_{1\le j\le i} \jiao{\theta^\star, v_j} \in \left[ \frac{1}{\sqrt{d}}, \frac{1+\kappa_1'}{\sqrt{d}} \right], \qquad \forall i = 1,\cdots,m. 
\end{align} 
We prove \eqref{eq:induction} by induction on $i$. For $i=1$, the correctness of \Cref{alg:UCBLCB} implies that $F(f(\jiao{\theta^\star,v_1}))\in [p,q]$, with $F(t) = F_d\circ f^{-1}(t)$ thanks to the monotonicity of $f$ and $F_d$ being the CDF of $\jiao{\theta^\star, v}$ with $v\sim \mathsf{Unif}(\mathbb{S}^{d-1})$. Consequently, $F_d(\jiao{\theta^\star, v_1})\in [p,q]$, and the definitions of $(p,q)$ yield $\jiao{\theta^\star, v_1}\in [1/\sqrt{d}, (1+\kappa_1')/\sqrt{d}]$. 

For the inductive step, assume $i\ge 2$ and \eqref{eq:induction} holds for $i-1$. For each $\lambda\in \Lambda$, the correctness of \Cref{alg:UCBLCB} now gives $F_\lambda(f(\jiao{\theta^\star,\lambda (1-\kappa_2') v_{\text{pre}} + \sqrt{\kappa_2'(2-\kappa_2')}v_i}))\in [p,q]$, where
\begin{align*}
F_\lambda(t) = F_{d-1}\left( \frac{f^{-1}(t) - \jiao{\theta^\star,\lambda (1-\kappa_2') v_{\text{pre}}} }{ \sqrt{\kappa_2'(2-\kappa_2')}\|\mathrm{Proj}_{\{v_{\text{pre}}\}^\perp}(\theta^\star)\|_2}\right), 
\end{align*}
for $\jiao{\theta^\star, v_i}/\|\mathrm{Proj}_{\{v_{\text{pre}}\}^\perp}(\theta^\star)\|_2 \sim f_{d-1}$ by Lemma \ref{lemma:nontrivialcorr}. By definitions of $(p,q)$, this gives that
\begin{align*}
\jiao{\theta^\star, v_i} \in \frac{\|\mathrm{Proj}_{\{v_{\text{pre}}\}^\perp}(\theta^\star)\|_2}{\sqrt{d+1-i}}\times \left[1, 1+\kappa_1' \right] = \sqrt{\frac{1 - \jiao{\theta^\star, v_{\text{pre}}}^2}{d+1-i}}\times \left[1, 1+\kappa_1' \right].
\end{align*}
Let $t = \jiao{\theta^\star, v_{\text{pre}}}^2$, by the induction hypothesis we have $t\in [(i-1)/d, (1+\kappa_1')^2(i-1)/d]$. Therefore,
\begin{align*}
\frac{1}{i}\sum_{1\le j\le i} \jiao{\theta^\star, v_i} &= \frac{\sqrt{(i-1)t} + \jiao{\theta^\star, v_i}}{i} \\
&\in \left[ \frac{\sqrt{(i-1)t} + \sqrt{(1-t)/(d+1-i)}}{i}, \frac{\sqrt{(i-1)t} + (1+\kappa_1')\sqrt{(1-t)/(d+1-i)}}{i} \right] \\
&\stepa{\subseteq} \left[ \frac{1}{\sqrt{d}}, \frac{1+\kappa_1'}{\sqrt{d}} \right], 
\end{align*}
completing the inductive step. It remains to prove (a). The upper bound is easily established using $t\le (1+\kappa_1')^2(i-1)/d$ for the first term, and $1-t\le 1-(i-1)/d$ for the second term. For the lower bound, one can easily check that
\begin{align*}
    t \le \frac{d}{d+1} \Longrightarrow \frac{\mathrm{d}}{\mathrm{d}t}\left[\frac{\sqrt{(i-1)t} + \sqrt{(1-t)/(d+1-i)}}{i} \right] \ge 0.
\end{align*}
As $t\le (1+\kappa_1')^2(i-1)/d \le (1+\kappa_1')^2(m-1)/d \le x_0^2(1+\kappa_1')^2$, by choosing $\kappa_1'>0$ small enough we may ensure that $t\le d/(d+1)$. This means that it suffices to check the value at $t = (i-1)/d$, and we arrive at the lower bound in (a). Therefore \eqref{eq:induction} holds by induction. 

Finally, choosing $i=m$ in \eqref{eq:induction} now gives 
\begin{align*}
\jiao{\theta^\star, a_0} = \frac{1}{\sqrt{m}}\sum_{1\le j\le m}\jiao{\theta^\star, v_j} \ge \sqrt{\frac{m}{d}} \ge x_0, 
\end{align*}
establishing the correctness. 

\subsubsection{Analysis of sample complexity}
For $i\in [m]$ and $k\in [5]$, define the quantity $\xi_{i,k}$ by
\begin{align*}
p_1 + k\varepsilon_1 = F_{d}\left(\frac{\xi_{1,k}}{\sqrt{d}}\right), \qquad p_i + k\varepsilon_i = F_{d-1}\left(\frac{\xi_{i,k}}{\sqrt{d}}\right), \quad 2\le i\le m.
\end{align*}
It is clear that $1 = \xi_{i,0} \le \xi_{i,1} \le \cdots \le \xi_{i,5} = 1+\kappa_1'$. In addition, \Cref{lemma:nontrivialcorr} tells that $\min_{i\in [m], k\in [5]}(\xi_{i,k} - \xi_{i,k-1}) \ge c_1'$ for some numerical constant $c_1'>0$ depending only on $(x_0, \kappa_1')$. 

The sample complexity for the first step $i=1$ is straightforward. Since $F^{-1}\circ F_d = f$, \Cref{lemma:infinite-armed-bandit} tells that the sample complexity for the first step is
\begin{align*}
    &O\left( \frac{\log^2(d/\delta)}{(f(\xi_{1,2}/\sqrt{d})-f(\xi_{1,1}/\sqrt{d}))^2} + \frac{\log^2(d/\delta)}{(f(\xi_{1,4}/\sqrt{d})-f(\xi_{1,3}/\sqrt{d}))^2} \right) \\
    &=  O\left( \frac{\log^2(d/\delta)}{\min_{z\in [1/\sqrt{d}, (1+\kappa_1')/\sqrt{d}]} |f(z+c_1'/\sqrt{d})-f(z)|^2 } \right). 
\end{align*}

For $i\ge 2$, by \Cref{lemma:infinite-armed-bandit} and the way of interleaving, the sample complexity for the $i$-th step is
\begin{align*}
O\left((J+1)\cdot \min_{\lambda\in\Lambda}\left(\frac{\log^2((J+1)d/\delta)}{(b_{2,\lambda} - b_{1,\lambda})^2} + \frac{\log^2((J+1)d/\delta)}{(b_{4,\lambda} - b_{3,\lambda})^2} \right) \right), 
\end{align*}
where by the definition of $F_\lambda$ we have
\begin{align*}
    b_{k,\lambda} := F_\lambda^{-1}\left(F_{d-1}\left(\frac{\xi_{i,k}}{\sqrt{d}}\right)\right) = \sqrt{\kappa_2'(2-\kappa_2')}\|\mathrm{Proj}_{\{v_{\text{pre}}\}^\perp} \|_2 \frac{\xi_{i,k}}{\sqrt{d}} + \lambda (1-\kappa_2') \jiao{\theta^\star, v_{\text{pre}}}. 
\end{align*}
By \eqref{eq:induction}, for $\kappa_1'>0$ small enough it holds that 
\begin{align*}
\sqrt{\kappa_2'(2-\kappa_2')}\|\mathrm{Proj}_{\{v_{\text{pre}}\}^\perp} \|_2 &\ge \sqrt{\kappa_2'(2-\kappa_2')} \sqrt{1-\frac{m(1+\kappa_1')^2}{d}} = \Omega_{x_0, \kappa_1', \kappa_2'}(1), \\
\jiao{\theta^\star, v_{\text{pre}}} &\in \left[\sqrt{\frac{i-1}{d}}, (1+\kappa_1')\sqrt{\frac{i-1}{d}}\right]. 
\end{align*}
By choosing $y_0 = (1-\kappa_2')\sqrt{(i-1)/d}$ and thinking of $z\in \{b_{1,\lambda}, b_{3,\lambda}\}$, the above sample complexity is upper bounded by
\begin{align*}
O\left( \frac{\log^3(d/\delta)}{\max_{\lambda\in \Lambda}\min_{z\in [\lambda y_0, (1+\kappa_1')\lambda y_0 + \kappa_1'/\sqrt{d}]} |f(z+c_1/\sqrt{d}) - f(z)|^2} \right), 
\end{align*}
where $c_1 = c_1(x_0, c_1', \kappa_1', \kappa_2') > 0$ is a numerical constant. Next we show that for small $\kappa_1', \kappa_2'>0$, the maximization over a discrete grid $\lambda\in \Lambda$ could be replaced by the maximization over a continuous interval $y\in [c_2/\sqrt{d}, y_0]$, for any given numerical constant $c_2>0$. To see this, for any $y\in [c_2/\sqrt{d}, y_0]$, choose $j\in [J]$ such that $e^{-\kappa_2'j} y_0 < y \le e^{-\kappa_2'(j-1)} y_0$. Such $j$ always exists, for 
\begin{align*}
e^{-\kappa_2' J} y_0 \le e^{-\kappa_2' J} < \frac{\kappa_2'}{\sqrt{d}}< \frac{c_2}{\sqrt{d}} \le y
\end{align*}
as long as $\kappa_2' < c_2$. Now for $z\in [e^{-\kappa_2'j} y_0, (1+\kappa_1')e^{-\kappa_2'j} y_0 + \kappa_1'/\sqrt{d}]$, we have
\begin{align*}
e^{-\kappa_2'j} y_0 &\ge e^{-\kappa_2'}y \ge (1-\kappa_2')y, \\
(1+\kappa_1')e^{-\kappa_2'j} y_0 + \frac{\kappa_1'}{\sqrt{d}} &\le (1+\kappa_1')y + \frac{\kappa_1'}{\sqrt{d}} \le \left(1+\kappa_1' + \frac{\kappa_1'}{c_2}\right)y, 
\end{align*}
and by choosing $\kappa_1', \kappa_2'>0$ small enough it holds that $z\in [(1-\kappa_1)y, (1+\kappa_1)y]$, for any prescribed constant $\kappa_1>0$. In other words, we have shown that for any function $\ell(\cdot)$, 
\begin{align*}
\max_{\lambda\in \Lambda}\min_{z\in [\lambda y_0, (1+\kappa_1')\lambda y_0 + \kappa_1'/\sqrt{d}]} \ell(z) \ge \max_{c_2/\sqrt{d}\le y\le y_0}\min_{z\in [(1-\kappa_1)y, (1+\kappa_1)y]} \ell(z)
\end{align*}
for small enough constants $\kappa_1' = \kappa_1'(x_0, \kappa_1, c_2)>0, \kappa_2' = \kappa_2'(x_0, \kappa_1, c_2)>0$. Therefore, the sample complexity for the $i$-th step is finally upper bounded by
\begin{align*}
O\left( \frac{\log^3(d/\delta)}{\max_{c_2/\sqrt{d}\le y\le (1-\kappa_2')\sqrt{(i-1)/d}}\min_{z\in [(1-\kappa_1) y_0, (1+\kappa_1)y_0]} |f(z+c_1/\sqrt{d}) - f(z)|^2} \right). 
\end{align*}

Finally, by summing over $i=1,\cdots,m$ and comparing the above expressions with $\varepsilon_i$ in \Cref{thm:ub_burnincost_formal}, the sample complexity upper bound in \Cref{thm:burnin_agnostic} follows. 

\subsection{The case where $f$ is even}\label{subsec:f_even}
We remark that \Cref{inftheorem:integral_ub,inftheorem:learning_ub} still hold if $f$ is even and monotone on $[0,1]$, with the following minor changes in the statement and the algorithm: 
\begin{itemize}
    \item In Theorem \ref{thm:ub_burnincost_formal}, the claim $\jiao{\theta^\star, a_0}\ge x_0$ is changed into $|\jiao{\theta^\star, a_0}|\ge x_0$. Similarly, in \Cref{inftheorem:learning_ub}, the quantity $\jiao{\widehat{\theta}_T, \theta}$ is replaced by $|\jiao{\widehat{\theta}_T, \theta}|$;  
    \item All appearances of $\jiao{\theta^\star,v}$ are replaced by $|\jiao{\theta^\star,v}|$ in the claim of \Cref{lemma:iaht}, and by $\jiao{\theta^\star,v}\text{sign}(\jiao{\theta^\star, v_{\text{pre}}})$ in the claim of \Cref{lemma:gaht}; 
    \item Line 9 in \Cref{alg:burn-in} is changed slightly, as detailed below. 
\end{itemize}

When $f$ is even, in Line 9 of \Cref{alg:burn-in}, we are free to assign $v_i\gets \pm v$ but hope that $\jiao{\theta^\star, v}$ and $\jiao{\theta^\star, v_1}$ have the same sign. This is done by querying the actions $(v+v_1)/\sqrt{2}$ and $(v-v_1)/\sqrt{2}$ for $O(\log(1/\delta)/\varepsilon^2)$ times to obtain sample averages $\overline{r}_+$ and $\overline{r}_-$. Then we assign $v_i\gets v$ if $\overline{r}_+\ge \overline{r}_-$, and $v_i\gets -v$ otherwise -- this leads to a tiny increase in the sample complexity and failure probability. To see why it works, note that if the high probability event in \Cref{lemma:iaht} happens when the while loop breaks, we have $|\jiao{\theta^\star, v}|, |\jiao{\theta^\star, v_1}|\in [1/\sqrt{d}, (1+\kappa_1)/\sqrt{d}]$. Hence if $\jiao{\theta^\star, v}$ and $\jiao{\theta^\star, v_1}$ have the same sign, we have
\begin{align*}
\left| \jiao{\theta^\star, \frac{v+v_1}{\sqrt{2}}} \right| \ge \sqrt{\frac{2}{d}}, \quad \left| \jiao{\theta^\star, \frac{v-v_1}{\sqrt{2}}} \right| \le \frac{\kappa_1}{\sqrt{2d}}. 
\end{align*}
Similar to the proof of \Cref{lemma:iaht}, by choosing a small $\kappa_1>0$, this difference could be detected with high probability. 

The correctness of the modified statements is mostly straightforward and omitted, and we only include a short note on the correctness of $|\jiao{\theta^\star, a_0}|$ in \Cref{thm:ub_burnincost_formal}. By the above modification of the algorithm, we have $\jiao{\theta^\star, v_i}\text{sign}(\jiao{\theta^\star, v_1}) \ge 1/\sqrt{d}$ for all $1\le i\le d_0$. For $i>d_0$, by induction, the modified \Cref{lemma:gaht} ensures that $\jiao{\theta^\star, v_i}\text{sign}(\jiao{\theta^\star, v_1}) \ge 1/\sqrt{d}$ as well. Consequently, 
\begin{align*}
    |\jiao{\theta^\star, a_0}|\ge \jiao{\theta^\star, a_0}\text{sign}(\jiao{\theta^\star, v_1}) = \frac{1}{\sqrt{m}}\sum_{i=1}^m \jiao{\theta^\star, v_i}\text{sign}(\jiao{\theta^\star, v_1}) \ge \sqrt{\frac{m}{d}} \ge x_0. 
\end{align*}
\section{Deferred proofs in Section \ref{sec:discussions}}\label{sec:proof_discussion}

\subsection{Proof of \Cref{thm:nonadaptive}}
Let $(a_1,\cdots,a_T)$ be the actions taken by the nonadaptive algorithm, and $P_\theta^T$ be the distribution of the observations $(r_1,\cdots,r_T)$ when the true parameter $\theta^\star$ is $\theta$. By the generalized Fano's inequality in \Cref{lemma:Fano} with $\Delta = 3/4$, the Bayes risk is lower bounded by
\begin{align*}
\bE[1 - \jiao{\widehat{\theta}_T, \theta^\star}] \ge \frac{3}{4}\left(1 - \frac{I(\theta^\star; \calH_T) + \log 2}{\log(1/\max_{a\in \mathbb{S}^{d-1}}\bP(\jiao{\theta^\star, a}\le 1/4))}\right) = \frac{3}{4}\left(1 - \frac{I(\theta^\star; \calH_T) + \log 2}{\Omega(d)}\right), 
\end{align*}
where the final inequality is due to \Cref{lemma:nolargeinnerproduct}. To upper bound the mutual information, we use its variational representation $I(X;Y) = \min_{Q_Y}\bE_{P_X}[D_{\text{KL}}(P_{Y\mid X} \| Q_Y)]$ to get
\begin{align*}
I(\theta^\star; \calH_T) \le \bE[D_{\text{KL}}(P_{\theta^\star}^T \| P_0^T)] = \frac{1}{2}\sum_{t=1}^T \bE[f(\jiao{\theta^\star, a_t})^2] \lesssim T \cdot \min_{K\ge 1}\left( g\left(\sqrt{\frac{\log K}{d}}\right)^2 + \frac{1}{K} \right), 
\end{align*}
where the final step crucially uses the nonadaptive nature that $a_t$ is independent of $\theta^\star$, as well as the concentration result in \Cref{lemma:nolargeinnerproduct}. A combination of the above inequalities completes the proof. 

\subsection{Proof of \Cref{thm:finite_action}}\label{subsec:proof_finite_action}
By \Cref{lemma:nolargeinnerproduct} and the union bound, there exists a subset $\calA\subseteq \mathbb{S}^{d-1}$ such that for every pair of distinct $a, a'\in \calA$, it holds that $|\jiao{a,a'}| \le \sqrt{(c'\log K)/d}$. Pick this action set, and for each pair $(\theta_i, \theta_j)$ in $\calA$, consider the hypothesis testing problem of $\theta^\star = \theta_i$ versus $\theta^\star = \theta_j$. As
\begin{align*}
2 - \jiao{\theta_i, \theta} - \jiao{\theta_j, \theta} \ge 2 - \|\theta_i + \theta_j\|_2 \ge     2 - \sqrt{2(1+\sqrt{(c'\log K)/d})} > \frac{1}{2}
\end{align*}
for all $\theta\in\mathbb{S}^{d-1}$ and $d$ large enough, Le Cam's two-point lower bound (cf. \Cref{lemma:twopoint}) shows that
\begin{align*}
\inf_{\widehat{\theta}_T}\sup_{\theta^\star \in \mathbb{S}^{d-1}} \bE_{\theta^\star}[1 - \jiao{\theta^\star, \widehat{\theta}_T}] &\ge \frac{1}{4}(1-\|P_{\theta_i}^T - P_{\theta_j}^T\|_{\text{TV}}) \\
&\ge \frac{1}{4}(1-\|P_{\theta_i}^T - P_0^T\|_{\text{TV}}-\|P_{\theta_j}^T - P_0^T\|_{\text{TV}}), 
\end{align*}
where again $P_\theta^T$ denotes the distribution of all observations up to time $T$ when $\theta^\star = \theta$. By Pinsker's inequality, 
\begin{align*}
    \|P_{\theta}^T - P_0^T\|_{\text{TV}} \le \sqrt{\frac{1}{2}D_{\text{KL}}(P_0^T \| P_\theta^T) } = \sqrt{\frac{1}{4}\sum_{t=1}^T \bE_{P_0^T}[f(\jiao{a_t, \theta})^2]}. 
\end{align*}
Consequently, by averaging over all distinct pairs $\theta_i, \theta_j \in \calA$, we conclude that
\begin{align*}
    \inf_{\widehat{\theta}_T}\sup_{\theta^\star \in \mathbb{S}^{d-1}} \bE_{\theta^\star}[1 - \jiao{\theta^\star, \widehat{\theta}_T}] &\ge \frac{1}{4}\left(1 - 2\bE_{\theta\sim \mathsf{Unif}(\calA)}\|P_\theta^T - P_0^T\|_{\text{TV}}\right) \\
    &\ge \frac{1}{4}\left(1 - \sqrt{\sum_{t=1}^T \bE_{\theta\sim \mathsf{Unif}(\calA)}\bE_{P_0^T}[f(\jiao{a_t,\theta})^2] }\right). 
\end{align*}
Since $a_t$ is restricted to lie in $\calA$, we have
\begin{align*}
    \bE_{\theta\sim \mathsf{Unif}(\calA)}[f(\jiao{a_t,\theta})^2] \le \frac{1}{K} + g\left(\sqrt{\frac{c'\log K}{d}}\right)^2, 
\end{align*}
and plugging it into the above display completes the proof of the theorem. 

\subsection{Proof of \Cref{thm:unit_ball}}
We describe an algorithm that finds an estimate $\widehat{r}$ of $r$ such that $r\in [\widehat{r}/4, \widehat{r}]$ with high probability. To this end, we randomly sample $m$ actions $a_1,\cdots,a_m\sim \mathsf{Unif}(\mathbb{S}^{d-1})$, and play each action $n$ times to obtain the empirical mean rewards $\overline{r}_1,\cdots,\overline{r}_n$ for these actions. Let $F$ be the CDF of the random variable $\jiao{a_i, \theta^\star}$. By rotational invariance, the constants $c_1 := F(2r/\sqrt{d}), c_2 := F(3r/\sqrt{d})$ are known to the learner, and \Cref{lemma:nontrivialcorr} implies that $c_2 - c_1 = \Omega(1)$. Consequently, as long as
\begin{align*}
n \gtrsim \frac{\log(1/\delta)}{(f(4r/\sqrt{d}) - f(3r/\sqrt{d}))^2} + \frac{\log(1/\delta)}{(f(2r/\sqrt{d}) - f(r/\sqrt{d}))^2}
\end{align*}
and $\delta>0$ is small enough, with high probability 
\begin{align*}
    \bP(\overline{r}_i \le f(4r/\sqrt{d})) &\ge \bP(f(\jiao{\theta^\star, a_i}) \le f(3r/\sqrt{d})) = c_2, \\
    \bP(\overline{r}_i \le f(r/\sqrt{d})) &\le \bP(f(\jiao{\theta^\star, a_i}) \le f(2r/\sqrt{d})) = c_1. 
\end{align*}
Now replacing the LHS by the empirical CDFs using $m$ samples, as $c_2 - c_1 = \Omega(1)$, we conclude that $m=\widetilde{O}(1)$ samples are sufficient to tell the above CDF difference. Consequently, this procedure only requires an additional sample size $mn$, which is no larger than the number of samples used in \Cref{alg:IAHT} for the link function $x\mapsto f(rx)$.  

Finally, as the choice of $n$ depends on the unknown quantity $r$, we perform a grid search over $r\in [0,1]$, start from the ``easiest'' candidate (i.e. with the smallest $n$), move to the next candidate and incrementally increase $n$ until the test reports success for some candidate $\widehat{r}$. This procedure only amplifies the failure probability by at most the grid size, which could be made logarithmic.

\subsection{Proof of Theorem \ref{thm:burnin_linear_bandit}}
The proof is based on a similar application of the $\chi^2$-informativity method in Section \ref{sec:lower_bound}, with a few changes in defining and handling the good events. For a fixed policy, define the good event
\begin{align*}
    E_t = \bigcap_{s\le t} \left\{ |\jiao{\theta^\star - \bE[\theta^\star\mid \calH_{s-1}], a_s} | \le \sqrt{\frac{c\log(1/\delta)}{d}}\right\},  
\end{align*}
where the constants $c>0, \delta\in (0,1)$ are chosen later. 

We first show that the good event happens with high probability under the uniform prior $\theta^\star \sim \pi := \mathsf{Unif}(\mathbb{S}^{d-1})$. By the Bayes rule, the posterior distribution of $\theta^\star$ given the history $\calH_{t-1}$ is
\begin{align*}
\pi(\theta^\star \mid \calH_{t-1}) \propto \pi(\theta^\star)\prod_{s=1}^{t-1} \varphi(r_s - \jiao{\theta^\star, a_s}) =: \pi(\theta^\star)\exp(-U(\theta^\star)). 
\end{align*}
Here $\varphi(x) := \exp(-x^2/2)/\sqrt{2\pi}$ is the normal pdf. Note that for linear bandits, $U$ is convex in $\theta^\star$. Since the Ricci curvature tensor of $\mathbb{S}^{d-1}$ is $(d-2)I$, the generalized Brascamp--Lieb inequality on manifolds (cf. \cite[Theorem 2.2]{kolesnikov2016riemannian}) implies that the posterior density $\pi(\theta^\star\mid \calH_{t-1})$ satisfies a subGaussian concentration with a variance parameter $O(1/d)$ for every $\calH_{t-1}$. Finally, since $\theta^\star$ and $a_t$ are conditionally independent given $\calH_{t-1}$, we have
\begin{align*}
\bP\left(|\jiao{\theta^\star - \bE[\theta^\star\mid \calH_{t-1}], a_t} | \le \sqrt{\frac{c\log(1/\delta)}{d}} ~ \bigg| ~ \calH_{t-1} \right) \ge 1 - \delta
\end{align*}
for every $\calH_{t-1}$ and a proper absolute constant $c>0$. Consequently, a union bound gives
\begin{align}\label{eq:high_prob}
    \bP(E_T) \ge 1 - T\delta. 
\end{align}

Next we upper bound the conditional $\chi^2$-informativity $I(\theta^\star; \calH_t\mid E_t)$, in a recursive way similar to \Cref{lemma:recursion_chi^2}. Specifically, since $E_t\in \sigma(\theta^\star, \calH_t)$, it holds that
\begin{align*}
    I_{\chi^2}(\theta^\star; \calH_t\mid E_t) + 1 &= \inf_{Q_{\calH_t}} \int \frac{\bP(\theta^\star, \calH_t\mid E_t)^2}{\pi(\theta^\star)Q_{\calH_t}(\calH_t)}\text{d}\theta^\star\text{d}a^t\text{d}r^t \\
    &\stepa{\le} \inf_{Q_{\calH_{t-1}}} \int \frac{\bP(\theta^\star, \calH_t\mid E_t)^2}{\pi(\theta^\star)Q_{\calH_{t-1}}(\calH_{t-1})\cdot \bP_t(a_t\mid \calH_{t-1})\varphi(r_t - \jiao{\bE[\theta^\star\mid \calH_{t-1}], a_t})}\text{d}\theta^\star\text{d}a^t\text{d}r^t \\
    &= \inf_{Q_{\calH_{t-1}}} \int \frac{\left[ \frac{\1(E_t)}{\bP(E_t)}\cdot \pi(\theta^\star)\cdot \prod_{s=1}^{t-1}\left(\bP_s(a_s\mid \calH_{s-1})\cdot \varphi(r_s-\jiao{\theta^\star,a_s}\right) \right]^2}{\pi(\theta^\star)Q_{\calH_{t-1}}(\calH_{t-1})}\\
    &\qquad \times \bP_t(a_t\mid \calH_{t-1})\times \frac{\varphi(r_t-\jiao{\theta^\star,a_t})^2}{\varphi(r_t- \jiao{\bE[\theta^\star\mid \calH_{t-1}], a_t})}\text{d}\theta^\star\text{d}a^t\text{d}r^t \\
    &\stepb{=} \inf_{Q_{\calH_{t-1}}} \int \frac{\left[ \frac{\1(E_t)}{\bP(E_t)}\cdot \pi(\theta^\star)\cdot \prod_{s=1}^{t-1}\left(\bP_s(a_s\mid \calH_{s-1})\cdot \varphi(r_s-\jiao{\theta^\star,a_s}\right) \right]^2}{\pi(\theta^\star)Q_{\calH_{t-1}}(\calH_{t-1})}\\
    &\qquad \times \bP_t(a_t\mid \calH_{t-1})\times \exp(\jiao{\theta^\star - \bE[\theta^\star\mid \calH_{t-1}],a_t}^2)\text{d}\theta^\star\text{d}a^t\text{d}r^{t-1} \\
    &\stepc{\le} \inf_{Q_{\calH_{t-1}}} \int \frac{\left[ \frac{\1(E_t)}{\bP(E_t)}\cdot \pi(\theta^\star)\cdot \prod_{s=1}^{t-1}\left(\bP_s(a_s\mid \calH_{s-1})\cdot \varphi(r_s-\jiao{\theta^\star,a_s}\right) \right]^2}{\pi(\theta^\star)Q_{\calH_{t-1}}(\calH_{t-1})}\\
    &\qquad \times \bP_t(a_t\mid \calH_{t-1})\times \exp\left(\frac{c\log(1/\delta)}{d}\right)\text{d}\theta^\star\text{d}a^t\text{d}r^{t-1} \\
    &\stepd{\le} \inf_{Q_{\calH_{t-1}}} \int \frac{\left[ \frac{\1(E_{t-1})}{\bP(E_{t-1})}\cdot \pi(\theta^\star)\cdot \prod_{s=1}^{t-1}\left(\bP_s(a_s\mid \calH_{s-1})\cdot \varphi(r_s-\jiao{\theta^\star,a_s}\right) \right]^2}{\pi(\theta^\star)Q_{\calH_{t-1}}(\calH_{t-1})}\\
    &\qquad \times \bP_t(a_t\mid \calH_{t-1})\times \frac{\exp(c\log(1/\delta)/d)}{\bP(E_t\mid E_{t-1})^2}\text{d}\theta^\star\text{d}a^t\text{d}r^{t-1} \\
    &\stepe{=} \inf_{Q_{\calH_{t-1}}} \int \frac{\left[ \frac{\1(E_{t-1})}{\bP(E_{t-1})}\cdot \pi(\theta^\star)\cdot \prod_{s=1}^{t-1}\left(\bP_s(a_s\mid \calH_{s-1})\cdot \varphi(r_s-\jiao{\theta^\star,a_s}\right) \right]^2}{\pi(\theta^\star)Q_{\calH_{t-1}}(\calH_{t-1})}\\
    &\qquad \times \frac{\exp(c\log(1/\delta)/d)}{\bP(E_t\mid E_{t-1})^2}\text{d}\theta^\star\text{d}a^{t-1}\text{d}r^{t-1} \\
    &\stepf{=} \frac{\exp(c\log(1/\delta)/d)}{\bP(E_t\mid E_{t-1})^2}(I_{\chi^2}(\theta^\star; \calH_{t-1}\mid E_{t-1})+1),
\end{align*}
where (a) defines a valid distribution $Q_{\calH_t}$ over $\calH_t$ as $\bE[\theta^\star\mid \calH_{t-1}]\in \sigma(\calH_t)$, (b) integrates out $r_t$ (note that $E_t$ does not depend on $r_t$), (c) uses the definition of $E_t$, (d) follows from $\1(E_t)\le \1(E_{t-1})$, (e) integrates out $a_t$ as $E_{t-1}$ no longer depends on $a_t$, and (f) uses the definition of $I_{\chi^2}(\theta^\star; \calH_{t-1}\mid E_{t-1})$. Continuing this process from $t=T$ to $t=1$ leads to
\begin{align}\label{eq:chi^2_upper_bound}
    I_{\chi^2}(\theta^\star; \calH_T \mid E_T) + 1 \le \frac{\exp(cT\log(1/\delta)/d)}{\bP(E_T)^2}. 
\end{align}

Finally we apply \Cref{lemma:Fano_chi^2} to prove the claimed lower bound in \Cref{thm:burnin_linear_bandit}. For any estimator $\widehat{\theta}_T\in \sigma(\calH_T)\cap \mathbb{S}^{d-1}$, by \eqref{eq:high_prob} and \eqref{eq:chi^2_upper_bound} it holds that
\begin{align*}
\bP\left(\jiao{\widehat{\theta}_T, \theta^\star} \le \frac{1}{2}\right) &\ge \bP(E_T)\cdot \bP\left(\jiao{\widehat{\theta}_T, \theta^\star} \le \frac{1}{2} ~ \bigg| ~ E_T \right) \\
&\ge \bP(E_T)\left(1 - c_1e^{-c_0d/4}\cdot \frac{\exp(cT\log(1/\delta)/(2d))}{\bP(E_T)}\right) \\
&\ge 1 - T\delta - c_1\exp\left(-\frac{c_0d}{4} + \frac{cT\log(1/\delta)}{2d}\right). 
\end{align*}
Choosing $\delta = 1/(4T)$ and $T = c'd^2/\log d$ for a small absolute constant $c'>0$, the above probability is at least $1/2$. This completes the proof of the claim $T_{\text{burn-in}}^\star(\mathsf{id},d)\gtrsim d^2$ in \Cref{thm:burnin_linear_bandit}. 
\section{Suboptimality of existing algorithms}

\subsection{Suboptimality of Eluder UCB}
In this section we prove the lower bound in \Cref{thm:ucb-lower-bound} for the Eluder-UCB algorithm in \eqref{eq:eluderucb}. First, by \Cref{lemma:least_squares} and the Lipschitzness of $f$, similar arguments to \Cref{subsec:upper_bound_linear} yields that $\theta^\star \in \mathbb{C}_t$ with high probability for the confidence set $\mathbb{C}_t$ in \eqref{eq:eluder_confidence_set}, with $\textbf{Est}_t \asymp d$. Since we do not care about the constants, let us redefine $\mathbb{C}_t$ in \eqref{eq:eluder_confidence_set} with $\textbf{Est}_t$ replaced by $4\textbf{Est}_t$. Under this new definition, by triangle inequality we have
\begin{align}\label{eq:eluder_confidence_set_larger}
\mathbb{C}_t' := \left\{ \theta\in \mathbb{S}^{d-1}: \sum_{s<t} \left(f(\jiao{a_s,\theta}) - f(\jiao{a_s,\theta^\star}) \right)^2 \le \textbf{Est}_t \right\} \subseteq \mathbb{C}_t 
\end{align}
with high probability. 

To prove the lower bound, next we construct $\theta^\star$ and a valid action sequence $(a_1,\cdots,a_T)$ returned by the Eluder-UCB algorithm. Suppose that
\begin{align*}
    T \le \min\left\{ T_0, \frac{d}{g(\sqrt{(c\log T_0)/d})^2} \right\}
\end{align*}
for some $T_0\in \mathbb{N}$ and $c>0$, with $g(x) := \max\{|f(x)|,|f(-x)|\}$. By \Cref{lemma:nolargeinnerproduct} and the union bound, there exists $T_0+1$ points $\theta_0, \theta_1, \cdots, \theta_{T_0}$ such that $|\jiao{\theta_i, \theta_j}| \le \sqrt{(c\log T_0)/d}$ for all $i\neq j$. Now we claim that if $\theta^\star = \theta_0$, then with high probability $a_t = \theta_t$ for all $t\in [T]$ will be valid outputs of the Eluder-UCB algorithm. Note that this result implies the claimed lower bound in \Cref{thm:ucb-lower-bound} as $\jiao{\theta^\star, a_t} = O(\sqrt{(\log T_0)/d})$ for all $t\in [T]$. 

To prove the claim, first note that thanks to \Cref{assump:main}, any action $a_t \in \mathbb{C}_t$ is a valid output of \eqref{eq:eluderucb} at time $t$. Then thanks to \eqref{eq:eluder_confidence_set_larger}, with high probability every $a_t \in \mathbb{C}_t'$ is valid as well. Finally, for every $t\in [T]$, using the inner product upper bound, we have
\begin{align*}
\sum_{s<t} \left(f(\jiao{a_s,a_t}) - f(\jiao{a_s,\theta^\star}) \right)^2 &= \sum_{s<t} \left(f(\jiao{\theta_s,\theta_t}) - f(\jiao{\theta_s,\theta_0}) \right)^2 < T\cdot 4g(\sqrt{(c\log T_0)/d})^2 \le 4d. 
\end{align*}
This shows that $a_t\in \mathbb{C}_t'$, and completes the proof of the claim.

\subsection{Suboptimality of regression oracles}
In this section we prove the lower bound in Theorem \ref{thm:RO-lower-bound} for regression oracle based algorithms. First we show that for ridge bandits with Lipschitz $f$, we may choose both quantities $\textbf{Est}_t^{\textbf{On}}$ and $\textbf{Est}_t^{\textbf{Off}}$ in \eqref{eq:online_oracle} and \eqref{eq:offline_oracle} to be $\widetilde{\Theta}(d)$. In fact, $\textbf{Est}_t^{\textbf{On}} \lesssim d$ follows from \cite[Theorem 1]{rakhlin2014online}, where the sequential entropy condition \cite[Eqn. (6)]{rakhlin2014online} is ensured by the Lipschitzness of $f$. This is achieved by an improper online regression oracle; see the requirement on $\widehat{\calY}$ in \cite[Section 2.1]{rakhlin2015online}. The upper bound $\textbf{Est}_t^{\textbf{Off}} \lesssim d$ is achieved by the least squares estimator and ensured by \Cref{lemma:least_squares}, thus the offline regression oracle could be taken to be proper. 

\subsubsection{Improper online regression oracle}
We construct an improper online regression oracle that satisfies \eqref{eq:online_oracle} but provides no information for the learner. This oracle is simply chosen to be $\widehat{\theta}_t \equiv 0$, regardless of the learner's historic actions. Then the learner has no information about $\theta^\star$ and thus the actions $a_1,\cdots,a_T$ are independent of $\theta^\star$. Therefore, if $\theta^\star\sim \mathsf{Unif}(\mathbb{S}^{d-1})$, with high probability it holds that $|\jiao{\theta^\star, a_t}| = O(\sqrt{(\log T)/d})$ for every $t\in [T]$ by \Cref{lemma:nolargeinnerproduct}. Consequently, with high probability, 
\begin{align*}
    \sum_{t=1}^T \left(f(\jiao{\theta^\star,a_t}) - f(\jiao{\widehat{\theta}_t,a_t}) \right)^2 \le T\cdot g\left(O\left(\sqrt{\frac{\log T}{d}}\right)\right)^2
\end{align*}
satisfies \eqref{eq:online_oracle} due to the assumption of $T$ in \Cref{thm:RO-lower-bound}. 

\subsubsection{Proper offline regression oracle}
Similar to the online case, we can also construct a proper offline regression oracle that satisfies \eqref{eq:offline_oracle} but provides no information for the learner. We take $\theta^\star\sim \mathsf{Unif}(\mathbb{S}^{d-1})$, and the offline oracle samples independent $\widehat{\theta}_1, \cdots, \widehat{\theta}_T \sim \mathsf{Unif}(\mathbb{S}^{d-1})$. Since the learner's actions are independent of $\theta^\star$, again by \Cref{lemma:nolargeinnerproduct}, $|\jiao{\theta^\star, a_t}| = O(\sqrt{(\log T)/d})$ for every $t\in [T]$ with high probability. As for \eqref{eq:offline_oracle}, note that $\widehat{\theta}_t$ is independent of $\{a_s\}_{s\le t}$, and \Cref{lemma:nolargeinnerproduct} implies that $|\jiao{\widehat{\theta}_t, a_s}|=O(\sqrt{(\log T)/d})$ for all $s\le t$ with high probability as well. Consequently, with high probability, 
\begin{align*}
    \sum_{s=1}^t \left(f(\jiao{\theta^\star,a_s}) - f(\jiao{\widehat{\theta}_t,a_s}) \right)^2 \le T\cdot 4g\left(O\left(\sqrt{\frac{\log T}{d}}\right)\right)^2
\end{align*}
satisfies \eqref{eq:offline_oracle} due to the assumption of $T$ in \Cref{thm:RO-lower-bound}.

\bibliographystyle{alpha}
\bibliography{ref.bib}
\end{document}